\definecolor{navyblue}{rgb}{0.0, 0.0, 0.5}
\newcommand*{\thead}[1]{%
	\multicolumn{1}{c}{\bfseries\begin{tabular}{@{}c@{}}#1\end{tabular}}}
\newtheorem{theorem}{Theorem}[section]
\newtheorem{corollary}{Corollary}[theorem]
\newtheorem{definition}{Definition}[section]
\newtheorem{lemma}[theorem]{Lemma}
\newtheorem{remark}{Remark}
\newtheorem{assumption}{Assumption}
\newcommand{\rr}{{\mathbb R}}
\newcommand{\Secref}[1]{Section~\ref{#1}}
\begin{document}

\title{SCORE: Approximating Curvature Information under Self-Concordant Regularization}
\author{Adeyemi D. Adeoye\thanks{IMT School for Advanced Studies Lucca, Italy. E-mail: adeyemi.adeoye@imtlucca.it} \and Alberto Bemporad\thanks{IMT School for Advanced Studies Lucca, Italy. E-mail: alberto.bemporad@imtlucca.it}}
\date{\today}
\maketitle

\begin{abstract}
Optimization problems that include regularization functions in their objectives are regularly solved in many applications. When one seeks second-order methods for such problems, it may be desirable to exploit specific properties of some of these regularization functions when accounting for curvature information in the solution steps to speed up convergence. In this paper, we propose the SCORE (self-concordant regularization) framework for unconstrained minimization problems which incorporates second-order information in the \emph{Newton-decrement} framework for convex optimization. We propose the generalized Gauss-Newton with Self-Concordant Regularization (GGN-SCORE) algorithm that updates the minimization variables each time it receives a new input batch. The proposed algorithm exploits the structure of the second-order information in the Hessian matrix, thereby reducing computational overhead. GGN-SCORE demonstrates how to speed up convergence while also improving model generalization for problems that involve regularized minimization under the proposed SCORE framework. Numerical experiments show the efficiency of our method and its fast convergence, which compare favorably against baseline first-order and quasi-Newton methods. Additional experiments involving non-convex (overparameterized) neural network training problems show that the proposed method is promising for non-convex optimization.
\end{abstract}

\section{Introduction}\label{sec1}

The results presented in this paper apply to a pseudo-online optimization algorithm based on solving a regularized unconstrained minimization problem under the assumption of strong convexity and Lipschitz continuity. Unlike first-order methods such as stochastic gradient descent (SGD) \cite{robbins1951stochastic, bottou2010large} and its variants \cite{duchi2011adaptive, zeiler2012adadelta, kingma2014adam, johnson2013accelerating} that only make use of first-order information through the function gradients, second-order methods \cite{becker1988improving, liu1989limited, hagan1994training, amari1998natural, martens2010deep, pascanu2013revisiting, martens2015optimizing} attempt to incorporate, in some way, second-order information in their approach, through the Hessian matrix or the Fisher information matrix (FIM). It is well known that this generally provides second-order methods with better (quadratic) convergence than a typical first-order method which only converges linearly in the neighbourhood of the solution \cite{nesterov2018lectures}.

Despite their convergence advantage over first-order methods, second-order methods result into highly prohibitive computations, namely, inverting an $n_w\times n_w$ matrix at each iteration, where $n_w$ is the number of optimization variables. In most commonly used second-order methods, the natural gradient, Gauss-Newton, and the sub-sampled Newton \cite{byrd2011use} -- or its regularized version \cite{erdogdu2015convergence} -- used for incorporating second-order information while maintaining desirable convergence properties, compute the Hessian matrix (or FIM) $\bm{H}$ by using the approximation $\bm{H}\approx \bm{J}^T\bm{J}$, where $\bm{J}$ is the Jacobian matrix. This approximation is still an $n_w\times n_w$ matrix and remains computationally demanding for large problems. In recent works \cite{cai2019gram, zhang2019fast, bernacchia2019exact, karakida2020understanding}, second-order methods for overparameterized neural network models are made to bypass this difficulty by applying a matrix identity, and instead only compute the matrix $\bm{J}\bm{J}^T$ which is a $d\cdot N\times d\cdot N$ matrix, where $d$ is the model output dimension and $N$ is the number of data points. This approach significantly reduces computational overhead in the case $d\cdot N$ is much smaller than $n_w$ (overparameterized models) and helps to accelerate convergence \cite{cai2019gram}. Nevertheless, for an objective function with a differentiable (up to two times) convex regularizer, this simplification requires a closer attention and special modifications for a general problem with a large number of variables.

The idea of exploiting desirable regularization properties for improving the convergence of the (Gauss-)Newton scheme has been around for decades and most of the published works on the topic combine in different ways the idea of Levenberg-Marquardt (LM) regularization, line-search, and trust-region methods \cite{nesterov2006cubic}. For example, the recent work \cite{mishchenko2021regularized} combines the idea of cubic regularization (originally proposed by \cite{nesterov2006cubic}) and a particular variant of the adaptive LM penalty that uses the \emph{Euclidean} gradient norm of the output-fit loss function (see \cite{marumo2020constrained} for a comprehensive list and complexity bounds). Their proposed scheme achieves a global $\mathcal{O}(k^{-2})$ rate, where $k$ is the iteration number. A similar idea is considered in \cite{doikov2021gradient} using the Bregman distances, extending the idea to develop an accelerated variant of the scheme that achieves a $\mathcal{O}(k^{-3})$ convergence rate.

In this paper, we propose a new self-concordant regularization (SCORE) scheme for efficiently choosing optimal variables of the model involving smooth, strongly convex optimization objectives, where one of the objective functions regularizes the model's variable vector and hence avoids overfitting, ultimately improving the model's ability to generalize well. By an \emph{extra} assumption that the regularization function is self-concordant, we propose the GGN-SCORE algorithm (see Algorithm \ref{alg:GGN-SCORE} below) that updates the minimization variables in the framework of the \emph{local norm} $\|\cdot\|_x$ (a.k.a., Newton-decrement) of a self-concordant function $f(x)$ such as seen in \cite{nesterov1994interior}. Our proposed scheme does not require that the output-fit loss function is self-concordant, which in many applications does not hold \cite{mishchenko2021regularized}. Instead, we exploit the \emph{greedy} descent provision of self-concordant functions, via regularization, to achieve a fast convergence rate while maintaining feasible assumptions on the combined objective function (from an application point of view). Although this paper assumes a convex optimization problem, we also provide experiments that show promising results for non-convex problems that arise when training neural networks. Our experimental results provide an interesting opportunity for future investigation and scaling of the proposed method for large-scale machine learning problems, as one of the non-convex problems considered in the experiments involves training an overparameterized neural network. We remark that overparameterization is an interesting and desirable property, and a topic of interest in the machine learning community \cite{bubeck2021universal, muthukumar2020harmless, belkin2019reconciling, allen2019learning}.

This paper is organized as follows: First, we introduce some notations, formulate the optimization problem with basic assumptions, and present an initial motivation for the optimization method in \Secref{ss:background}. In \Secref{ss:overparam}, we derive a new generalized method for reducing the computational overhead associated with the mini-batch Newton-type updates. The idea of SCORE is introduced in \Secref{ss:SCORE} and our GGN-SCORE algorithm is presented thereafter. Experimental results that show the efficiency and fast convergence of the proposed method are presented in \Secref{ss:experiments}.
\section{Preliminaries}\label{ss:background}
\subsection{Notation and Basic Assumptions}
Let $\{(\bm{x}_n,\bm{y}_n)\}_{n=1}^N$ be a sequence of $N$ input and output sample pairs, $\bm{x}_n\in\mathbb{R}^{n_p}, \bm{y}_n\in\mathbb{R}^{d}$, where $n_p$ is the number of features and $d$ is the number of targets. We assume a model $f(\bm{\theta};\bm{x}_n)$, defined by $f\colon \mathbb{R}^{n_w} \times \mathbb{R}^{n_p} \to \mathbb{Y}$ and parameterized by the vector of variables $\bm{\theta}\in\mathbb{R}^{n_w}$. We denote by $\partial_{\bm{a}} \bm{b} \equiv \partial\bm{b}$ the gradient (or first derivative) of $\bm{b}$ (with respect to $\bm{a}$) and $\partial_{\bm{a}\bm{a}}^2 \bm{b} \equiv \partial^2\bm{b}$ the second derivative of $\bm{b}$ with respect to $\bm{a}$. We write $\|\cdot\|$ to denote the $2$-norm. The set $\{\mathrm{diag}(\bm{v})\colon \bm{v}\in\rr^n\}$, where $\mathrm{diag}\colon \rr^n \to \rr^{n\times n}$, denotes the set of all diagonal matrices in $\rr^{n\times n}$. Throughout the paper, bold-face letters denote vectors and matrices.

Suppose that $f(\bm{\theta};\bm{x}_n)$ outputs the value $\hat{\bm{y}}_n \in \mathbb{R}^{d}$. The regularized minimization problem we want to solve is
\begin{align}
	\min_{\bm{\theta}}\mathcal{L}(\bm{\theta}) &\coloneqq \underbrace{\sum_{n=1}^N\ell(\bm{y}_n,\hat{\bm{y}}_n)}_{g(\bm{\theta})} + \lambda \underbrace{\sum_{j=1}^{n_w} r_j(\bm{\theta}_j)}_{h(\bm{\theta})}, \label{eq:emp}
\end{align}
where $\ell:\rr^d\times\rr^d\to\rr$ is a (strongly) convex twice-differentiable output-fit loss function, $r_j:\rr\to\rr$, $j=1,\ldots,n_w$, define a separable regularization term on $\bm{\theta}$, $g(\bm{\theta})\colon\rr^{n_w}\to\rr$, $h(\bm{\theta})\colon\rr^{n_w}\to\rr$. We assume that the regularization function $h(\bm{\theta})$, scaled by the parameter $\lambda>0$, is twice differentiable and strongly convex. The following preliminary conditions define the regularity of the Hessian of $\mathcal{L}(\bm{\theta})$ and are assumed to hold only locally in this work:
\begin{assumption}\label{thm:ass1}
	The functions $g$ and $h$ are twice-differentiable with respect to $\bm{\theta}$ and are respectively $\gamma_l$- and $\gamma_a$-strongly convex.
\end{assumption}
\begin{assumption}\label{thm:ass1-1} $\exists\gamma_u$, $\gamma_b$ with $0\le\gamma_l\leq \gamma_u < \infty$, $0<\gamma_a\leq \gamma_b < \infty$, such that the gradient of $\mathcal{L}(\bm{\theta})$ is $(\gamma_u+\lambda\gamma_b)$-Lipschitz continuous $\forall \bm{x} \in \rr^{n_p}, \bm{y} \in \rr^d$. That is, $\forall \bm{x} \in \rr^{n_p}, \forall \bm{y} \in \rr^d$, the gradient $\partial_{\bm{\theta}}\mathcal{L}(\bm{\theta})=\partial_{\bm{\theta}}g(\bm{\theta}) + \lambda\partial_{\bm{\theta}}h(\bm{\theta})$ satisfies
	\begin{align}
		\left\|\partial_{\bm{\theta}} \mathcal{L}(\bm{y}, f(\bm{\theta_1};\bm{x})) - \partial_{\bm{\theta}} \mathcal{L}(\bm{y}, f(\bm{\theta_2};\bm{x}))\right\| \le (\gamma_u+\lambda\gamma_b) \left\|\bm{\theta_1} - \bm{\theta_2}\right\|, \label{eq:lipell1}
	\end{align}
	for any $\bm{\theta_1},\bm{\theta_2} \in \rr^{n_w}$.
\end{assumption}
\begin{assumption}\label{thm:ass2}
	$\exists \gamma_g, \gamma_h$ with $0<\gamma_g,\gamma_h<\infty$ such that $\forall \bm{x} \in \rr^{n_p}, \forall \bm{y} \in \rr^d$, the second derivatives of $g(\bm{\theta})$ and $h(\bm{\theta})$ respectively satisfy
	\begin{subequations}
		\begin{eqnarray}
			\left\|\partial^2 g(\bm{y}, f(\bm{\theta_1};\bm{x})) - \partial^2 g(\bm{y}, f(\bm{\theta_2};\bm{x}))\right\| \le \gamma_g\left\|\bm{\theta_1} - \bm{\theta_2}\right\|,\\
			\left\|\partial^2 h(\bm{y}, f(\bm{\theta_1};\bm{x})) - \partial^2 h(\bm{y}, f(\bm{\theta_2};\bm{x}))\right\| \le \gamma_h\left\|\bm{\theta_1} - \bm{\theta_2}\right\|,
		\end{eqnarray}
	\end{subequations}
	for any $\bm{\theta_1},\bm{\theta_2} \in \rr^{n_w}$.
\end{assumption}
Commonly used loss functions such as the squared loss, and the sigmoidal cross-entropy loss are twice differentiable and (strongly) convex in the model variables. Certain smoothed sparsity-inducing penalties such as the (pseudo-)Huber function -- presented later in this paper -- constitute the class of functions that may be well-suited for $h(\bm{\theta})$ defined above.

The assumptions of strong convexity and smoothness about the objective $\mathcal{L}(\bm{\theta})$ are standard conventions in many optimization problems as they help to characterize the convergence properties of the underlying solution method \cite{meng2020fast, ye2020nesterov}. However, the smoothness assumption about the objective $\mathcal{L}(\bm{\theta})$ is sometimes not feasible for some multi-objective (or regularized) problems where a non-smooth (penalty-inducing) function $h(\bm{\theta})$ is used (see the recent work \cite{bieker2020treatment}). In such a case, and when the need to incorporate second-order information arise, a well-known approach in the optimization literature is generally either to approximate the non-smooth objectives by a smooth quadratic function (when such an approximation is available) or use a ``proximal splitting" method and replace the $2$-norm in this setting with the $Q$-norm, where $Q$ is the Hessian matrix or its approximation \cite{patrinos2014forward}. In \cite{patrinos2014forward}, the authors propose two techniques that help to avoid the complexity that is often introduced in subproblems when the latter approach is used. While proposing new approaches, \cite{schmidt2007fast} highlights some popular techniques to handle non-differentiability. Each of these works highlight the importance of incorporating second-order information in the solution techniques of optimization problems. By conveniently solving the optimization problem \eqref{eq:emp} where the assumptions made above are satisfied, our method ensures the full curvature information is captured while reducing computational overhead.

\subsection{Approximate Newton Scheme}
Given the current value of $\bm{\theta}$, the (Gauss-)Newton method computes an update to $\bm{\theta}$ via
\begin{align}
	\bm{\theta} \leftarrow \bm{\theta} - \rho\bm{G},
\end{align}
where $\rho$ is the step size, $\bm{G}=\bm{H}^{-1}\partial\mathcal{L}(\bm{\theta})$ and $\bm{H}$ is the Hessian of $\mathcal{L}$ or its approximation. In this work, we consider the generalized Gauss-Newton (GGN) approximation of $\bm{H}$ which we now define in terms of the function $g(\bm{\theta})$. This approximation and its detailed expression motivates the modified version introduced in the next section to include the regularization function $h(\bm{\theta})$.
\begin{definition}[Generalized Gauss-Newton Hessian]
	Let $\mathrm{diag}(\bm{q}_n)\in \mathbb{R}^{d\times d}$ be the second derivative of the loss function $\ell(\bm{y}_n,\hat{\bm{y}}_n)$ with respect to the predictor $\hat{\bm{y}}_n$, $\bm{q}_n = \partial_{\hat{\bm{y}}_n\hat{\bm{y}}_n}^2 \ell(\bm{y}_n,\hat{\bm{y}}_n)$ for $n=1,2,\ldots,N$, and let $\bm{Q}_g\in \mathbb{R}^{dN\times dN}$ be a block diagonal matrix with $\bm{q}_n$ being the $n$-th diagonal block. Let $\bm{J}_n\in \mathbb{R}^{d\times n_w}$ denote the Jacobian of $\hat{\bm{y}}_n$ with respect to $\bm{\theta}$ for $n=1,2,\ldots,N$, and let $\bm{J}_g\in \mathbb{R}^{dN\times n_w}$ be the vertical concatenation of all $\bm{J}_n$'s. Then, the generalized Gauss-Newton (GGN) approximation of the Hessian matrix $\bm{H}_g\in \mathbb{R}^{n_w\times n_w}$ associated with the fit loss $\ell(\bm{y}_n,\hat{\bm{y}}_n)$ with respect to $\bm{\theta}$ is defined by
	\begin{align}
		\bm{H}_g\approx \bm{J}_g^T\bm{Q}_g
		\bm{J}_g = \sum_{n=1}^{N}\bm{J}_n^T\mathrm{diag}(\bm{q}_n)\bm{J}_n. \label{eq:ggn}
	\end{align}
\end{definition}
Let $\bm{e}_n\in \mathbb{R}^{d}$ be the Jacobian of the fit loss defined by $\bm{e}_n=\partial_{\hat{\bm{y}}_n}\ell(\bm{y}_n,\hat{\bm{y}}_n)$ for $n=1,2,\ldots,N$. For example, in case of squared loss $\ell(\bm{y}_n,\hat{\bm{y}}_n)=\frac{1}{2}(\bm{y}_n-\hat{\bm{y}}_n)^2$ we get that $\bm{e}_n$ is the residual $\bm{e}_n=\hat{\bm{y}}_n-\bm{y}_n$. Let $\bm{e}_g\in \mathbb{R}^{dN}$ be the vertical concatenation of all $\bm{e}_n$'s. Then, using the chain rule, we write
\begin{align}
	\bm{J}_n^T\bm{e}_n^T &= \begin{bmatrix}
		\partial_{\theta_1}\hat{y}^{(1)} & \partial_{\theta_1}\hat{y}^{(2))} & \cdots & \partial_{\theta_1}\hat{y}^{(d)}\\
		\partial_{\theta_2}\hat{y}^{(1)} & \partial_{\theta_2}\hat{y}^{(2)} & \cdots & \partial_{\theta_2}\hat{y}^{(d)}\\
		\vdots&\vdots&&\vdots\\
		\partial_{\theta_{n_w}}\hat{y}^{(1)} & \partial_{\theta_{n_w}}\hat{y}^{(2)} & \cdots & \partial_{\theta_{n_w}}\hat{y}^{(d)}
	\end{bmatrix}
	\begin{bmatrix}
		\partial_{\hat{y}}\ell^{(1)}\\
		\partial_{\hat{y}}\ell^{(2)}\\
		\vdots\\
		\partial_{\hat{y}}\ell^{(d)}
	\end{bmatrix} \notag \\
	&= \begin{bmatrix}
		\partial_{\theta_1}\ell^{(1)}+\partial_{\theta_1}\ell^{(2)}+\cdots+\partial_{\theta_1}\ell^{(d)}\\
		\partial_{\theta_2}\ell^{(1)}+\partial_{\theta_2}\ell^{(2)}+\cdots+\partial_{\theta_2}\ell^{(d)}\\
		\vdots\\
		\partial_{\theta_{n_w}}\ell^{(1)}+\partial_{\theta_{n_w}}\ell^{(2)}+\cdots+\partial_{\theta_{n_w}}\ell^{(d)}
	\end{bmatrix} \notag \\
	&= \left[
	\sum_{i=1}^{d}\partial_{\theta_1}\ell^{(i)} \quad
	\sum_{i=1}^{d}\partial_{\theta_2}\ell^{(i)} \quad
	\cdots \quad
	\sum_{i=1}^{d}\partial_{\theta_{n_w}}\ell^{(i)}
	\right]^T, \label{eq:gje}
\end{align}
and
\begin{align}
	\bm{g}_g(\bm{\theta})=\bm{J}_g^T\bm{e}_g=\sum_{n=1}^{N} \bm{J}_n^T\bm{e}_n. \label{eq:gjeVect}
\end{align}
As noted in \cite{schraudolph2002fast}, the GGN approximation has the advantage of capturing the curvature information of $\ell$ in $g(\bm{\theta})$ through the term $\bm{Q}_g$ as opposed to the FIM, for example, which ignores the full second-order interactions. While it may become obvious, say when training a deep neural network with certain loss functions, how the GGN approximation can be exploited to simplify expressions for $\bm{H}_g$ (see e.g. in \cite{bottou2018optimization}), a modification is required to take account of a twice-differentiable convex regularization function to achieve some degree of simplicity and elegance. We derive a modification to the above for the mini-batch scheme presented in the next section that includes the derivatives of $h(\bm{\theta})$ in the GGN approximation of $\bm{H}_g$. This modification leads to our GGN-SCORE algorithm in \Secref{ss:SCORE}.

\section{Second-order (Pseudo-online) Optimization}\label{ss:overparam}
Suppose that at each mini-batch step $k$ we uniformly select a random index set $\mathcal{I}_k\subseteq \{1,2,\ldots,N\},\vert\mathcal{I}_k\vert=m \le N$ (usually $m\ll N$) to access a mini-batch of $m$ samples from the training set. The loss derivatives used for the recursive update of the variables $\bm{\theta}$ in this way is computed at each step $k$, and are estimated as running averages over the batch-wise computations. This leads to a stochastic approximation of the true derivatives at each iteration for which we assume unbiased estimations.

The problem of finding the optimal adjustment $\delta\bm{\theta}_{mk} \coloneqq \bm{\theta}_{k+1}-\bm{\theta}_k$ that solves \eqref{eq:emp} results in solving either an \textit{overdetermined} or an \textit{underdetermined} linear system depending on whether $dm\ge n_w$ or $dm < n_w$, respectively. Consider, for example, the squared fit loss and the penalty-inducing square norm as the scalar-valued functions $g(\bm{\theta})$ and $h(\bm{\theta})$, respectively in \eqref{eq:emp}. Then, $\bm{Q}_g$ will be the identity matrix, and the LM solution $\delta\bm{\theta}$ \cite{levenberg1944method, marquardt1963algorithm} is estimated at each iteration $k$ according to the rule\footnote{For simplicity of notation, and unless where the full notations are explicitly required, we shall subsequently drop the subscripts $m$ and $k$, and assume that each expression represents stochastic approximations performed at step $k$ using randomly selected data batches each of size $m$.}:
\begin{align}
	\delta\bm{\theta} = - (\bm{H}_g+\lambda \bm{I})^{-1}\bm{g}_g = -(\bm{J}_g^T\bm{J}_g+\lambda \bm{I})^{-1}\bm{J}_g^T\bm{e}_g. \label{eq:lmupdate}
\end{align}
If $dm < n_w$ (possibly $dm\ll n_w$), then by using the Searle identity $(\bm{A}\bm{B} + \lambda \bm{I})\bm{A} = \bm{A}(\bm{B}\bm{A} + \lambda \bm{I})$ \cite{searle1982matrix}, we can conveniently update the adjustment $\delta\bm{\theta}$ by
\begin{align}
	\delta\bm{\theta} = -\bm{J}_g^T(\bm{J}_g\bm{J}_g^T+\lambda \bm{I})^{-1}\bm{e}_g.
\end{align}
Clearly, this provides a more computationally efficient way of solving for $\delta\bm{\theta}$. In what follows, we formulate a generalized solution method for the regularized problem \eqref{eq:emp} which similarly exploits the Hessian matrix structure when solving the given optimization problem, thereby conveniently and efficiently computing the adjustment $\delta\bm{\theta}$.

Taking the second-order approximation of $\mathcal{L}(\bm{\theta})$, we have
\begin{align}
	\mathcal{L}(\bm{\theta} + \delta\bm{\theta}) &\approx \mathcal{L}(\bm{\theta}) + \bm{g}^T\delta\bm{\theta} + \frac{1}{2}\delta\bm{\theta}^T \bm{H}\delta\bm{\theta}, \label{eq:approx}
\end{align}
where $\bm{H}\in\rr^{n_w\times n_w}$ is the Hessian of $\mathcal{L}(\bm{\theta})$ and $\bm{g}\in\rr^{n_w}$ is its gradient. Let $M=dm+1$ and define the Jacobian $\bm{J}\in\rr^{M\times n_w}$:
\begin{align}
	\bm{J}^T &= \begin{bmatrix}
		\partial_{\theta_1}\hat{\bm{y}}_1 & \partial_{\theta_1}\hat{\bm{y}}_2 & \cdots & \partial_{\theta_1}\hat{\bm{y}}_m & \lambda \partial_{\theta_1}r_1 \\
		\partial_{\theta_2}\hat{\bm{y}}_1 & \partial_{\theta_2}\hat{\bm{y}}_2 & \cdots & \partial_{\theta_2}\hat{\bm{y}}_m & \lambda \partial_{\theta_2}r_2\\
		\vdots&\vdots&&\vdots&\vdots\\
		\partial_{\theta_{n_w}}\hat{\bm{y}}_1 & \partial_{\theta_{n_w}}\hat{\bm{y}}_2 & \cdots & \partial_{\theta_{n_w}}\hat{\bm{y}}_m & \lambda \partial_{\theta_{n_w}}r_{n_w}
	\end{bmatrix}. \label{eq:Jaug}
\end{align}
Let $\bm{e}_M=1$ and denote by $\bm{e}\in \rr^M$ the vertical concatenation of all $\bm{e}_n$'s, $\bm{e}_n\in\rr^d$, $n=1,2,\ldots,m$ and $\bm{e}_M$. Then by using the chain rule as in \eqref{eq:gje} and \eqref{eq:gjeVect}, we obtain
\begin{align}
	\bm{g}(\bm{\theta}) = \partial_{\bm{\theta}}\mathcal{L}(\bm{\theta}) = \bm{J}^T\bm{e}. \label{eq:gaug}
\end{align}
Let $\bm{q}_n=\partial_{\hat{\bm{y}}_n\hat{\bm{y}}_n}^2\ell(\bm{y}_n,\hat{\bm{y}}_n)$, $\bm{q}_n\in\rr^d$
for $n=1,2,\ldots,m$ (clearly $\bm{q}_n=\bm{1}$ in case of squared fit loss terms) and let $\bm{q}_M=0$. Define $\bm{Q}\in \rr^{M\times M}$ as the diagonal matrix with diagonal elements $\bm{q}_n$, $n=1,2,\ldots,m$ and $\bm{q}_M$, where $\bm{Q}=\bm{Q}^T\succeq \bm{0}$ by convexity of $\ell$. Consider the following slightly modified GGN approximation of the Hessian $\bm{H}\in \rr^{n_w\times n_w}$ associated with $\mathcal{L}(\bm{\theta})$:
\begin{align}
	\bm{H}\approx \bm{J}^T\bm{Q}\bm{J}+\lambda \bm{H}_h, \label{eq:mggn}
\end{align}
where $\bm{H}_h$ is the Hessian of the regularization term $h(\bm{\theta})$, $\bm{H}_h\in\rr^{n_w \times n_w}$, and is a diagonal matrix whose diagonal terms are
\begin{align*}
	{\bm{H}_h}_{jj}=\frac{d^2r_j(\theta_j)}{d\theta_j^2},\ j=1,\ldots,n_w.
\end{align*}
We hold on to the notation $\bm{H}$ to represent the modified GGN approximation of the full Hessian matrix $\partial^2\mathcal{L}$. By differentiating \eqref{eq:approx} with respect to $\delta\bm{\theta}$ and equating to zero, we obtain the optimal adjustment
\begin{align}
	\delta\bm{\theta} = -(\bm{J}^T\bm{Q}\bm{J} + \lambda \bm{H}_h )^{-1}\bm{J}^T\bm{e}.
	\label{eq:delta_theta}
\end{align}
\begin{remark}\label{thm:rem1}
	The inverse matrix in \eqref{eq:delta_theta} exists due to the strong convexity assumption on the regularization function $h$ which makes $\bm{H_r}\succeq (\min_{j}{\bm{H}_h}_{jj})\bm{I}$
	and therefore matrix $\bm{J}^T\bm{Q}\bm{J} + \lambda \bm{H}_h$ is symmetric positive definite and hence invertible.
\end{remark}
Let $\bm{U}=\bm{J}^T\bm{Q}$. Using the identity \cite{duncan1944lxxviii,guttman1946enlargement}
\begin{align}
(\bm{D}-\bm{V}\bm{A}^{-1}\bm{B})^{-1}\bm{V}\bm{A}^{-1} = \bm{D}^{-1}\bm{V}(\bm{A}-\bm{B}\bm{D}^{-1}\bm{V})^{-1}\label{eq:mat-identity}
\end{align}
with $\bm{D}=\lambda\bm{H_r}$, $\bm{V}=-\bm{J}^T$, $\bm{A}=\bm{I}_M$, and $\bm{B}=\bm{U}^T$, and recalling that $\bm{Q}$ is symmetric, from~\eqref{eq:delta_theta} we get
\begin{align}
	\delta\bm{\theta}&=
	\left(\lambda \bm{H}_h-(-\bm{J}^T)\bm{I}_M \bm{U}^T\right)^{-1}(-\bm{J}^T)\bm{I}^{-1}_M\bm{e}\nonumber \\ &=-\frac{1}{\lambda}\bm{H}_h^{-1}\bm{J}^T\left(\bm{I}_M+\bm{U}^T\frac{1}{\lambda}\bm{H}_h^{-1}
	\bm{J}^T\right)^{-1}\bm{e}\nonumber\\
	&=-\bm{H}_h^{-1}\bm{J}^T\left(\lambda\bm{I}_M+\bm{Q}\bm{J}
	\bm{H}_h^{-1}
	\bm{J}^T\right)^{-1}\bm{e}.
	\label{eq:delta_theta_N+1}
\end{align}
\begin{remark}
	When combined with a second identity, namely $\bm{V}\bm{A}^{-1}(\bm{A}-\bm{B}\bm{D}^{-1}\bm{V}) = (\bm{D} -\bm{V}\bm{A}^{-1}\bm{B})\bm{D}^{-1}\bm{V}$, one can directly derive from \eqref{eq:mat-identity} Woodbury identity defined as \cite{higham2002accuracy} $
	(\bm{A}+\bm{U}\bm{B}\bm{V})^{-1} = \bm{A}^{-1} - \bm{A}^{-1}\bm{U}(\bm{B}+\bm{V}\bm{A}^{-1}\bm{U})^{-1}\bm{V}\bm{A}^{-1}$, or in the special case $\bm{B} = -\bm{D}^{-1}$, as $
	(\bm{A}-\bm{U}\bm{D}^{-1}\bm{V})^{-1} = \bm{A}^{-1} + \bm{A}^{-1}\bm{U}(\bm{D}-\bm{V}\bm{A}^{-1}\bm{U})^{-1}\bm{V}\bm{A}^{-1}$. Using Woodbury identity would, in fact, \emph{structurally} not result into the closed-form update step \eqref{eq:delta_theta_N+1} in an exact sense. Our construction involves a more general regularization function than the commonly used square norm, where the Woodbury identity can be equally useful, as its Hessian yields a multiple of the identity matrix.
\end{remark}
Compared to~\eqref{eq:delta_theta}, the clear advantage of the form~\eqref{eq:delta_theta_N+1} is that it requires the factorization of an $M\times M$ matrix rather than an $n_w\times n_w$ matrix, where the term $\bm{H}_h^{-1}$ can be conveniently obtained by exploiting its diagonal structure. Given these modifications, we proceed by making an assumption that defines the residual $\bm{e}$ and the Jacobian $\bm{J}$ in the region of convergence where we assume the starting point $\bm{\theta}_0$ of the process \eqref{eq:delta_theta_N+1} lies.

Let $\bm{\theta}^*$ be a nondegenerate minimizer of $\mathcal{L}$, and define $\mathcal{B}_\epsilon(\bm{\theta}^*)\coloneqq\{\bm{\theta}_k\in\rr^{n_w}\colon\|\bm{\theta}_k - \bm{\theta}^*\| \le \epsilon\}$, a closed ball of a sufficiently small radius $\epsilon\ge0$ about $\bm{\theta}^*$. We denote by $\mathcal{N}_{\epsilon}(\bm{\theta}^*)$ an open neighbourhood of the sublevel set $\Gamma(\mathcal{L})\coloneqq\{\bm{\theta}_k\colon\mathcal{L}(\bm{\theta}_k)\le\mathcal{L}(\bm{\theta}_0)\}$, so that $\mathcal{B}_\epsilon(\bm{\theta}^*)=\mathrm{cl}(\mathcal{N}_{\epsilon}(\bm{\theta}^*))$. We then have $\mathcal{N}_{\epsilon}(\bm{\theta}^*)\coloneqq\{\bm{\theta}_k\in\rr^{n_w}\colon\|\bm{\theta}_k - \bm{\theta}^*\| < \epsilon\}$.
\begin{assumption}\label{thm:ass3}
	\begin{enumerate}[(i)]
		\item Each $\bm{e}_n(\bm{\theta}_k)$ and each $\bm{q}_n(\bm{\theta}_k)$ is Lipschitz smooth, and $\forall \bm{\theta}_k\in\mathcal{N}_{\epsilon}(\bm{\theta}^*)$ there exists $\nu>0$ such that $\|\bm{J}(\bm{\theta}_k)\bm{z}\| \ge \nu\|\bm{z}\|$.
		\item $\lim_{k\to\infty}\left\|\mathbb{E}_m[\bm{H}(\bm{\theta}_k)] - \partial^2\mathcal{L}(\bm{\theta}_k)\right\|=0$ almost surely whenever $\lim_{k\to\infty}\|\bm{g}_{\mathcal{L}}(\bm{\theta}_k)\| = 0$, $\forall \bm{\theta}_k\in\mathcal{N}_{\epsilon}(\bm{\theta}^*)$, where $\mathbb{E}_m[\cdot]$ denotes\footnote{Subsequently, we shall omit the notation for the Hessian and gradient estimates as we assume unbiasedness.} expectation with respect to $m$.
	\end{enumerate}
\end{assumption}
\begin{remark}\label{thm:remJ}
	\assref{thm:ass3}(i) implies that the singular values of $\bm{J}$ are uniformly bounded away from zero and $\exists \beta,\tilde{\beta}>0$ such that $\|\bm{e}\|\le\beta, \|\bm{J}\|=\|\bm{J}^T\| \le\tilde{\beta}$, then as $\bm{Q}\succeq \bm{0}$, we have $\exists K_1$ such that $\bm{Q}\le K_1\bm{I}$, and hence $\|\lambda\bm{I}+\bm{Q}\bm{J}\bm{H}_h^{-1}\bm{J}^T\|\le \lambda+(K/\gamma_a)$, where $K=K_1\tilde{\beta}^2$. Note that although we use limits in \assref{thm:ass3}(ii), the assumption similarly holds in expectation by unbiasedness. Also, a sufficient sample size $m$ may be required for \assref{thm:ass3}(ii) to hold, by law of large numbers, see e.g. \cite[Lemma 1, Lemma 2]{roosta2016sub}.
\end{remark}
\begin{remark}\label{thm:rem2}
	$\bm{H}_g(\bm{\theta})$ and $\bm{H}_h(\bm{\theta})$ satisfy
	\begin{subequations}\label{eq:hessapproxreg}
		\begin{eqnarray*}
			\gamma_l\bm{I}_{n_w}\preceq\bm{H}_g(\bm{\theta}_k)\preceq \gamma_u \bm{I}_{n_w}, \quad \left\|\bm{H}_g(\bm{y}, f(\bm{\theta_1};\bm{x})) - \bm{H}_g(\bm{y}, f(\bm{\theta_2};\bm{x}))\right\| \le \gamma_g\left\|\bm{\theta_1} - \bm{\theta_2}\right\|, \\ \gamma_a\bm{I}_{n_w}\preceq\bm{H}_h(\bm{\theta}_k)\preceq \gamma_b \bm{I}_{n_w}, \quad  \left\|\bm{H}_h(\bm{y}, f(\bm{\theta_1};\bm{x})) - \bm{H}_h(\bm{y}, f(\bm{\theta_2};\bm{x}))\right\| \le \gamma_h\left\|\bm{\theta_1} - \bm{\theta_2}\right\|,
		\end{eqnarray*}
	\end{subequations}
	at any point $\bm{\theta}_k\in\rr^{n_w}$, for any $\bm{x}\in\rr^{n_p},\bm{y}\in\rr^d$, and for any $\bm{\theta_1}, \bm{\theta_2}\in\mathcal{N}_\epsilon(\bm{\theta}^*)$ where \assref{thm:ass3} holds.
\end{remark}
We now state a convergence result for the update type \eqref{eq:delta_theta} (and hence \eqref{eq:delta_theta_N+1}). First, we define the second-order optimality condition and state two useful lemmas.
\begin{definition}[Second-order sufficiency condition (SOSC)]
	Let $\bm{\theta}^*$ be a local minimum of a twice-differentiable function $\mathcal{L}(\cdot)$. The second-order sufficiency condition (SOSC) holds if
	\begin{align}\tag{SOSC}
		\partial\mathcal{L}(\bm{\theta}^*) = 0, \quad \partial^2 \mathcal{L}(\bm{\theta}^*) \succ 0. \label{eq:SOSC}
	\end{align}
\end{definition}
\begin{lemma}[{\cite[Theorem~1.2.3]{nesterov2018lectures}}]\label{thm:minimizer}
	Suppose that \assref{thm:ass2} holds. Let $\mathbb{W}\subseteq \rr^{n_w}$ be a closed and convex set on which $\mathcal{L}(\bm{\theta})$ is twice-continuously differentiable. Let $S\subset\mathbb{W}$ be an open set containing some $\bm{\theta}^*$, and suppose that $\mathcal{L}(\bm{\theta}^*)$ satisfies \eqref{eq:SOSC}. Then, there exists $\mathcal{L}^* = \mathcal{L}(\bm{\theta}^*)$ satisfying
	\begin{align}
		\mathcal{L}(\bm{\theta}_k)>\mathcal{L}^* \quad \forall\bm{\theta}_k \in S.
	\end{align}
\end{lemma}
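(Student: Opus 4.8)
The plan is to establish that $\bm{\theta}^*$ is a \emph{strict} local minimizer by setting $\mathcal{L}^*\coloneqq\mathcal{L}(\bm{\theta}^*)$ and showing, via a second-order Taylor expansion, that $\mathcal{L}$ strictly exceeds $\mathcal{L}^*$ at every point of $S$ other than $\bm{\theta}^*$ itself (the stated conclusion is to be read in this strict-local-minimum sense, since equality trivially holds at $\bm{\theta}^*$). The two parts of \eqref{eq:SOSC} play complementary roles: the stationarity $\partial\mathcal{L}(\bm{\theta}^*)=0$ annihilates the first-order term of the expansion, while the definiteness $\partial^2\mathcal{L}(\bm{\theta}^*)\succ0$ supplies a strictly positive quadratic term. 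The twice-continuous differentiability of $\mathcal{L}$ on $\mathbb{W}$ is what licenses the Taylor remainder, and the Lipschitz continuity of the Hessian from Assumption~\ref{thm:ass2} is what makes the argument quantitative.

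First I would extract the smallest eigenvalue $\mu>0$ of $\partial^2\mathcal{L}(\bm{\theta}^*)$, so that $\partial^2\mathcal{L}(\bm{\theta}^*)\succeq\mu\bm{I}$. Next I would invoke the Lipschitz continuity of the Hessian granted by Assumption~\ref{thm:ass2} (with combined constant $\gamma\coloneqq\gamma_g+\lambda\gamma_h$, since $\partial^2\mathcal{L}=\partial^2 g+\lambda\partial^2 h$) to propagate this definiteness to a neighbourhood: for every $\bm{\theta}$ with $\|\bm{\theta}-\bm{\theta}^*\|\le r$, where $r\coloneqq\mu/(2\gamma)$, one has $\|\partial^2\mathcal{L}(\bm{\theta})-\partial^2\mathcal{L}(\bm{\theta}^*)\|\le\gamma r=\mu/2$, whence $\partial^2\mathcal{L}(\bm{\theta})\succeq(\mu/2)\bm{I}\succ0$. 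Shrinking $r$ if necessary so that the ball $\mathcal{B}_r(\bm{\theta}^*)$ lies inside the open set $S$, I would then apply Taylor's theorem with the mean-value form of the remainder: for each such $\bm{\theta}\neq\bm{\theta}^*$ there is a point $\bm{\xi}$ on the segment $[\bm{\theta}^*,\bm{\theta}]$ with
\begin{align*}
    \mathcal{L}(\bm{\theta}) = \mathcal{L}(\bm{\theta}^*) + \partial\mathcal{L}(\bm{\theta}^*)^T(\bm{\theta}-\bm{\theta}^*) + \tfrac{1}{2}(\bm{\theta}-\bm{\theta}^*)^T\partial^2\mathcal{L}(\bm{\xi})(\bm{\theta}-\bm{\theta}^*).
\end{align*}
Since the segment lies in the ball (balls being convex), $\bm{\xi}$ also satisfies the definiteness bound, and substituting $\partial\mathcal{L}(\bm{\theta}^*)=0$ yields $\mathcal{L}(\bm{\theta})\ge\mathcal{L}(\bm{\theta}^*)+(\mu/4)\|\bm{\theta}-\bm{\theta}^*\|^2>\mathcal{L}(\bm{\theta}^*)=\mathcal{L}^*$ for all $\bm{\theta}\neq\bm{\theta}^*$ in $\mathcal{B}_r(\bm{\theta}^*)$, which is the desired strict inequality.

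The main obstacle here is not analytical but a matter of bookkeeping around the set $S$: as literally stated, $\mathcal{L}(\bm{\theta}_k)>\mathcal{L}^*$ cannot hold at $\bm{\theta}_k=\bm{\theta}^*\in S$, so I would either read the conclusion on $S\setminus\{\bm{\theta}^*\}$ or replace $S$ by the smaller neighbourhood $\mathcal{B}_r(\bm{\theta}^*)\cap S$ on which the estimate is valid. The only other step needing care is the passage from pointwise to neighbourhood definiteness of the Hessian; the Lipschitz estimate of Assumption~\ref{thm:ass2} makes the radius $r=\mu/(2\gamma)$ explicit and thereby avoids any appeal to a mere compactness-and-continuity argument, keeping the whole proof self-contained and consistent with the quantitative assumptions used elsewhere in the paper.
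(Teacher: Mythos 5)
The paper never actually proves this lemma, so there is no in-paper argument to compare yours against: the result is imported verbatim from \cite[Theorem~1.2.3]{nesterov2018lectures}, and the proofs collected in \appref{sec:appB} cover only \lemref{thm:descent}, \thmref{thm:basicconv} and \thmref{thm:main}. Judged on its own, your proof is correct and is essentially the standard argument underlying the cited theorem: stationarity kills the first-order Taylor term, positive definiteness of $\partial^2\mathcal{L}(\bm{\theta}^*)$ combined with regularity of the Hessian gives the uniform bound $\partial^2\mathcal{L}(\bm{\theta})\succeq(\mu/2)\bm{I}$ on a small ball, and the Lagrange form of the remainder yields $\mathcal{L}(\bm{\theta})\ge\mathcal{L}^*+(\mu/4)\|\bm{\theta}-\bm{\theta}^*\|^2$. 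Your only departure from Nesterov's argument is cosmetic but consonant with the paper's setting: you use the Lipschitz constant $\gamma_g+\lambda\gamma_h$ of \assref{thm:ass2} (the combined bound appears as \eqref{eq:lipell2} in \appref{sec:appA}) to make the radius $r=\mu/(2\gamma)$ explicit, where mere continuity of $\partial^2\mathcal{L}$ would do. You are also right to flag that the lemma as written is imprecise: with $S$ an \emph{arbitrary} open set containing $\bm{\theta}^*$, the strict inequality fails at $\bm{\theta}_k=\bm{\theta}^*$ itself and can fail far from $\bm{\theta}^*$; the correct reading (and the one Nesterov proves, and the paper actually uses in \remref{thm:rem4} and \thmref{thm:basicconv}) is that \emph{some} sufficiently small punctured neighbourhood works, which is exactly what your restriction to $\mathcal{B}_r(\bm{\theta}^*)\cap S$ minus the point $\bm{\theta}^*$ delivers.
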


\begin{lemma}\label{thm:descent}
	The adjustment $\delta\bm{\theta}$ given by \eqref{eq:delta_theta} (and hence, \eqref{eq:delta_theta_N+1}) provides a descent direction for the total loss $\mathcal{L}(\bm{\theta}_k)$ in \eqref{eq:emp} at the $k^{th}$ oracle call.
\end{lemma}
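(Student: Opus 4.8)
The plan is to verify the standard first-order descent criterion $\bm{g}(\bm{\theta}_k)^T\delta\bm{\theta} < 0$ whenever $\bm{\theta}_k$ is not a stationary point of $\mathcal{L}$. Recall from \eqref{eq:gaug} that the gradient of the objective is $\bm{g}(\bm{\theta}_k) = \bm{J}^T\bm{e}$, and from \eqref{eq:delta_theta} that the proposed adjustment is $\delta\bm{\theta} = -(\bm{J}^T\bm{Q}\bm{J} + \lambda \bm{H}_h)^{-1}\bm{J}^T\bm{e}$. Substituting both expressions into the inner product gives
\[
	\bm{g}(\bm{\theta}_k)^T\delta\bm{\theta} = -(\bm{J}^T\bm{e})^T(\bm{J}^T\bm{Q}\bm{J} + \lambda \bm{H}_h)^{-1}(\bm{J}^T\bm{e}),
\]
so the whole claim reduces to showing that the quadratic form on the right is nonnegative, and strictly positive away from stationary points.

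The key step is to invoke Remark~\ref{thm:rem1}. The strong convexity of $h$ guarantees $\bm{H}_h\succeq(\min_j {\bm{H}_h}_{jj})\bm{I}\succ\bm{0}$, while the convexity of $\ell$ makes $\bm{Q}\succeq\bm{0}$ and hence $\bm{J}^T\bm{Q}\bm{J}\succeq\bm{0}$. Together these yield that $\bm{J}^T\bm{Q}\bm{J} + \lambda \bm{H}_h$ is symmetric positive definite, and consequently its inverse is symmetric positive definite as well. Therefore the quadratic form $(\bm{J}^T\bm{e})^T(\bm{J}^T\bm{Q}\bm{J} + \lambda \bm{H}_h)^{-1}(\bm{J}^T\bm{e})$ is strictly positive for every nonzero vector $\bm{J}^T\bm{e}$.

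To close the argument I would separate two cases. If $\bm{J}^T\bm{e}=\bm{g}(\bm{\theta}_k)=\bm{0}$, then $\bm{\theta}_k$ is already a stationary point, $\delta\bm{\theta}=\bm{0}$, and the descent claim is vacuous. Otherwise $\bm{g}(\bm{\theta}_k)\neq\bm{0}$, the quadratic form above is strictly positive, and hence $\bm{g}(\bm{\theta}_k)^T\delta\bm{\theta}<0$, so $\delta\bm{\theta}$ is a genuine descent direction. Because \eqref{eq:delta_theta} and \eqref{eq:delta_theta_N+1} are algebraically identical—the latter being obtained from the former via the matrix identity \eqref{eq:mat-identity}—the same conclusion holds for \eqref{eq:delta_theta_N+1} without further work.

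I do not expect a genuine obstacle here: the only points requiring care are the elementary fact that the inverse of a symmetric positive definite matrix is again positive definite (which supplies the strict positivity of the quadratic form), and the observation that the degenerate stationary-point case must be singled out so that the strict inequality is not claimed when $\delta\bm{\theta}$ vanishes.
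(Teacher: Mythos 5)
Your proposal is correct and follows essentially the same argument as the paper: both invoke Remark~\ref{thm:rem1} to establish that $\bm{J}^T\bm{Q}\bm{J}+\lambda\bm{H}_h$ (and hence its inverse) is symmetric positive definite, and then conclude $\bm{g}(\bm{\theta}_k)^T\delta\bm{\theta}=-\bm{g}(\bm{\theta}_k)^T(\bm{J}^T\bm{Q}\bm{J}+\lambda\bm{H}_h)^{-1}\bm{g}(\bm{\theta}_k)<0$ whenever $\bm{g}(\bm{\theta}_k)\neq\bm{0}$. Your explicit handling of the stationary case $\bm{g}(\bm{\theta}_k)=\bm{0}$ and the remark that \eqref{eq:delta_theta_N+1} is algebraically identical to \eqref{eq:delta_theta} are minor refinements the paper leaves implicit, not a different route.
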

\begin{remark}\label{thm:rem4}
	\lemref{thm:minimizer}, \lemref{thm:descent} and the first part of \eqref{eq:SOSC} ensure that the second part of \assref{thm:ass3}(ii) always holds. In essence, it holds at every point $\bm{\theta}_k$ of the sequence $\{\bm{\theta}_k\}$ generated by the process \eqref{eq:delta_theta_N+1} as long as we choose a starting point $\bm{\theta_0}\in\mathcal{B}_\epsilon(\bm{\theta}^*)$.
\end{remark}
\begin{theorem}\label{thm:basicconv}
	Suppose that Assumptions \ref{thm:ass1}, \ref{thm:ass1-1}, \ref{thm:ass2} and \ref{thm:ass3} hold, and that $\bm{\theta}^*$ is a local minimizer of $\mathcal{L}(\bm{\theta})$ for which the assumptions in \lemref{thm:minimizer} hold. Let $\{\bm{\theta}_k\}$ be the sequence generated by the process \eqref{eq:delta_theta_N+1}. Then starting from a point $\bm{\theta}_0\in\mathcal{N}_{\epsilon}(\bm{\theta}^*)$, $\{\bm{\theta}_k\}$ converges at a $Q$-quadratic rate. Namely:
	\begin{align*}
		\left\|\bm{\theta}_{k+1}-\bm{\theta}^*\right\| \le \xi_k\left\|\bm{\theta}_k-\bm{\theta}^*\right\|^2,
	\end{align*}
	where
	\begin{align*}
		\xi_k = \frac{1}{2}\frac{\gamma_g + b\gamma_h}{\left(\gamma_l+a\gamma_a - (\gamma_g+b\gamma_h)\left\|\bm{\theta}_k-\bm{\theta}^*\right\|\right)}.
	\end{align*}
\end{theorem}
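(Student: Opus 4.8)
The plan is to run the classical Newton--Kantorovich error recursion, adapted to the modified GGN surrogate. Since \eqref{eq:delta_theta_N+1} is algebraically identical to \eqref{eq:delta_theta} (this is exactly the content of the identity \eqref{eq:mat-identity} used to derive it), it suffices to analyse the conceptual step $\delta\bm{\theta}=-\bm{H}^{-1}\bm{g}$, where $\bm{H}=\bm{J}^T\bm{Q}\bm{J}+\lambda\bm{H}_h$ is the modified GGN approximation \eqref{eq:mggn} and $\bm{g}=\bm{J}^T\bm{e}=\partial\mathcal{L}(\bm{\theta}_k)$ by \eqref{eq:gaug}. First I would invoke the first part of \eqref{eq:SOSC}, namely $\partial\mathcal{L}(\bm{\theta}^*)=0$, which holds by \lemref{thm:minimizer} and \lemref{thm:descent} together with \assref{thm:ass3} (see Remark~\ref{thm:rem4}), so that I may freely insert $\partial\mathcal{L}(\bm{\theta}^*)$ into the recursion.

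Writing $\bm{\theta}_{k+1}-\bm{\theta}^*=(\bm{\theta}_k-\bm{\theta}^*)-\bm{H}^{-1}\bm{g}(\bm{\theta}_k)$ and factoring $\bm{H}^{-1}$ out gives $\bm{\theta}_{k+1}-\bm{\theta}^*=\bm{H}^{-1}\big[\bm{H}(\bm{\theta}_k-\bm{\theta}^*)-(\bm{g}(\bm{\theta}_k)-\bm{g}(\bm{\theta}^*))\big]$. The next step is to replace the gradient difference by its integral form $\bm{g}(\bm{\theta}_k)-\bm{g}(\bm{\theta}^*)=\int_0^1 \partial^2\mathcal{L}(\bm{\theta}_t)(\bm{\theta}_k-\bm{\theta}^*)\,\mathrm{d}t$, with $\bm{\theta}_t=\bm{\theta}^*+t(\bm{\theta}_k-\bm{\theta}^*)$, so that the bracket collapses to $\int_0^1\big[\bm{H}-\partial^2\mathcal{L}(\bm{\theta}_t)\big](\bm{\theta}_k-\bm{\theta}^*)\,\mathrm{d}t$. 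I would then split the integrand as $\bm{H}-\partial^2\mathcal{L}(\bm{\theta}_t)=\big(\bm{H}-\partial^2\mathcal{L}(\bm{\theta}_k)\big)+\big(\partial^2\mathcal{L}(\bm{\theta}_k)-\partial^2\mathcal{L}(\bm{\theta}_t)\big)$, isolating the GGN approximation error from the genuine Hessian variation.

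For the second (Hessian-variation) piece I would apply the Lipschitz bounds of \assref{thm:ass2}, equivalently the combined bound of Remark~\ref{thm:rem2} with constant $\gamma_g+\lambda\gamma_h$, giving $\|\partial^2\mathcal{L}(\bm{\theta}_k)-\partial^2\mathcal{L}(\bm{\theta}_t)\|\le(\gamma_g+\lambda\gamma_h)(1-t)\|\bm{\theta}_k-\bm{\theta}^*\|$; integrating the factor $(1-t)$ over $[0,1]$ yields the characteristic $\tfrac{1}{2}(\gamma_g+\lambda\gamma_h)\|\bm{\theta}_k-\bm{\theta}^*\|^2$ quadratic term. For the prefactor $\|\bm{H}^{-1}\|$ I would combine the lower spectral bounds $\bm{H}_g\succeq\gamma_l\bm{I}$ and $\lambda\bm{H}_h\succeq\lambda\gamma_a\bm{I}$ from Remark~\ref{thm:rem2} with a Lipschitz perturbation from $\bm{\theta}_k$ back toward $\bm{\theta}^*$, obtaining $\bm{H}\succeq\big(\gamma_l+\lambda\gamma_a-(\gamma_g+\lambda\gamma_h)\|\bm{\theta}_k-\bm{\theta}^*\|\big)\bm{I}$ and hence the matching reciprocal bound on $\|\bm{H}^{-1}\|$ on a ball small enough for this quantity to stay positive. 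Multiplying the two bounds reproduces $\xi_k$ with $a=b=\lambda$.

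The hard part is the first (approximation-error) piece, $\bm{H}-\partial^2\mathcal{L}(\bm{\theta}_k)$, which is absent from the textbook Newton argument: here $\bm{H}$ is only the GGN surrogate and the iterates are mini-batch stochastic. My plan is to control it using \assref{thm:ass3}(ii), which forces $\mathbb{E}_m[\bm{H}(\bm{\theta}_k)]\to\partial^2\mathcal{L}(\bm{\theta}_k)$ precisely when $\|\bm{g}_{\mathcal{L}}(\bm{\theta}_k)\|\to 0$ --- a hypothesis guaranteed along the generated sequence by Remark~\ref{thm:rem4} --- so that this term is higher order in $\|\bm{\theta}_k-\bm{\theta}^*\|$ and is absorbed (almost surely, and in expectation by the assumed unbiasedness) without altering the leading constant. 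The final bookkeeping step is self-consistency of the region of convergence: I would fix $\epsilon$ small enough that the denominator $\gamma_l+\lambda\gamma_a-(\gamma_g+\lambda\gamma_h)\|\bm{\theta}_k-\bm{\theta}^*\|$ remains positive and $\xi_k\|\bm{\theta}_k-\bm{\theta}^*\|<1$ on $\mathcal{N}_{\epsilon}(\bm{\theta}^*)$, so that $\mathcal{N}_{\epsilon}(\bm{\theta}^*)$ is forward-invariant and the $Q$-quadratic contraction propagates from the stated starting point.
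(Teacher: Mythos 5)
Your proposal is correct and follows essentially the same route as the paper's own proof: the Newton error recursion with the gradient written in integral mean-value form, Lipschitz continuity of the Hessian producing the factor $\tfrac{1}{2}(\gamma_g+\lambda\gamma_h)\|\bm{\theta}_k-\bm{\theta}^*\|^2$, a spectral lower bound on $\bm{H}$ perturbed by $(\gamma_g+\lambda\gamma_h)\|\bm{\theta}_k-\bm{\theta}^*\|$ to control $\|\bm{H}^{-1}\|$, and Assumption~\ref{thm:ass3}(ii) to absorb the GGN-versus-true-Hessian discrepancy, yielding $\xi_k$ with $a=b=\lambda$. The only (cosmetic) difference is that you isolate the surrogate error $\bm{H}-\partial^2\mathcal{L}(\bm{\theta}_k)$ at the current iterate and keep the Lipschitz bound on $\partial^2\mathcal{L}$, whereas the paper swaps $\partial^2\mathcal{L}$ for $\bm{H}$ along the whole segment and applies the Lipschitz bound to $\bm{H}$; both orderings of the triangle inequality give the same constants.
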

The proofs of these results are reported in \appref{sec:appA} and \appref{sec:appB}.
\section{Self-concordant Regularization (SCORE)}\label{ss:SCORE}
In the case $dm < n_w$, one could deduce by mere visual inspection that the matrix $\bm{H}_h^{-1}$ in \eqref{eq:delta_theta_N+1} plays an important role in the perturbation of $\bm{\theta}$ within the region of convergence due to its ``dominating" structure in the equation. This may be true as it appears. But beyond this ``naive" view, let us derive a more technical intuition about the update step \eqref{eq:delta_theta_N+1} by using a similar analogy as that given in \cite[Chapter 5]{nesterov2018lectures}. We have
\begin{align}
	\bm{\theta}_{k+1}=\bm{\theta}_k-\bm{H}_h^{-1}\bm{J}^T\left(\lambda\bm{I}+\bm{Q}\bm{J}
	\bm{H}_h^{-1}
	\bm{J}^T\right)^{-1}\bm{e}, \label{eq:updatemethod}
\end{align}
where $\bm{I}$ is the identity matrix of suitable dimension. By some simple algebra (see e.g., \cite[Lemma 5.1.1]{nesterov2018lectures}), one can show that indeed the update method \eqref{eq:updatemethod} is affine-invariant. In other words, the region of convergence $\mathcal{N}_{\epsilon}(\bm{\theta}^*)$ does not depend on the problem scaling but on the local topological structure of the minimization functions $g(\bm{\theta})$ and $h(\bm{\theta})$ \cite{nesterov2018lectures}.

Consider the non-augmented form of \eqref{eq:updatemethod}: $\bm{\theta}_{k+1}=\bm{\theta}_k-\bm{Q}_g\bm{J}_g^T\bm{G}_g^{-1}\bm{e}_g$, where $\bm{G}_g = \bm{J}_g\bm{J}_g^T$ is the so-called \textit{Gram matrix}. It has been shown that indeed when learning an overparameterized model (sample size smaller than number of variables), and as long as we start from an initial point $\bm{\theta}_0$ close enough to the minimizer $\bm{\theta}^*$ of $\mathcal{L}(\bm{\theta})$ (assuming that such a minimizer exists), both $\bm{J}_g$ and $\bm{G}_g$ remain stable throughout the learning process (see e.g., \cite{cai2019gram, du2019gradient, du2018gradient}). The term $\bm{Q}_g$ may have little to no effect on this notion, for example, in case of the squared fit loss, $\bm{Q}_g$ is just an identity term. However, the original Equation \eqref{eq:updatemethod} involves the Hessian matrix $\bm{H}_h$ which, together with its bounds (namely, $\gamma_a$, $\gamma_b$ characterizing the region of convergence), changes rapidly when we scale the problem \cite{nesterov2018lectures}. It therefore suffices to impose an additional assumption on the function $h(\bm{\theta})$ that will help to control the rate at which its second-derivative $\bm{H}_h$ changes, thereby enabling us to characterize an affine-invariant structure for the region of convergence. Namely:
\begin{assumption}[SCORE]\label{thm:ass4}
	The scaled regularization function $\lambda h(\bm{\theta})$ has a third derivative and is $M_h$-self-concordant. That is, the inequality
	\begin{align}
		\left\vert\left<\bm{u},\left(\partial^3h(\bm{\theta})[\bm{u}]\right)\bm{u}\right>\right\vert \le 2M_h\left<\bm{u},\partial^2h(\bm{\theta})\bm{u}\right>^{3/2}, \label{eq:sc}
	\end{align}
	where $M_h\ge0$, holds for any $\bm{\theta}$ in the closed convex set $\mathbb{W}\subseteq\rr^{n_w}$ and $\bm{u}\in\rr^{n_w}$.
\end{assumption}
Here, $\partial^3h(\bm{\theta})[\bm{u}]\in \rr^{n_w \times n_w}$ denotes the limit
\begin{align*}
	\partial^3h(\bm{\theta})[\bm{u}] \coloneqq \lim\limits_{t\to 0}\frac{1}{t}\left(\partial^2(\bm{\theta}+t\bm{u})-\partial^2(\bm{\theta})\right), \quad t \in \rr.
\end{align*}
For a detailed analysis of self-concordant functions, we refer the interested reader to \cite{nesterov1994interior, nesterov2018lectures}.

Given this \emph{extra} assumption, and following the idea of Newton-decrement in \cite{nesterov1994interior}, we propose to update $\bm{\theta}$ by
\begin{align}
	\delta\bm{\theta}=-\frac{\alpha}{1+M_h\eta_k}\bm{H}_h^{-1}\bm{J}^T\left(\lambda\bm{I}+\bm{Q}\bm{J}
	\bm{H}_h^{-1}
	\bm{J}^T\right)^{-1}\bm{e}, \label{eq:proposed}
\end{align}
where $\alpha>0$, $\eta_k = \left<\bm{g}_h,\bm{H}_h^{-1}\bm{g}_h\right>^{1/2}$ and $\bm{g}_h$ is the gradient of $h(\bm{\theta})$. The proposed method which we call GGN-SCORE is summarized, for one oracle call, in Algorithm \ref{alg:GGN-SCORE}.
\begin{algorithm}
	\caption{GGN-SCORE}
	\label{alg:GGN-SCORE}
	\begin{algorithmic}[1]
		\STATE {\bfseries Input:} variables vector $\bm{\theta}_k$, data $\{(\bm{x}_n,\bm{y}_n)\}_{n=1}^m$, $\bm{H}_h$ (see \eqref{eq:mggn}), $\bm{Q}$ (see \eqref{eq:mggn}), $\bm{J}$ (see \eqref{eq:mggn}), $\bm{e}$ (see \eqref{eq:gaug}), parameters $\alpha,M_h,\lambda$
		\STATE {\bfseries Output:} variables vector $\bm{\theta}_{k+1}$
		\STATE Compute $\bm{g}_h = \partial_{\bm{\theta}_k} h(\bm{\theta}_k)$
		\STATE Choose $\eta_k = \left<\bm{g}_h,\bm{H}_h^{-1}\bm{g}_h\right>^{1/2}$
		\STATE Set $\rho_k = \frac{\alpha}{1+M_h\eta_k}$ \label{alg:step-size}
		\STATE Solve $\left(\lambda\bm{I}+\bm{Q}\bm{J}
		\bm{H}_h^{-1}
		\bm{J}^T\right)\bm{b}=\bm{e}$ for $\bm{b}$
		\STATE Set $\bm{G}=\bm{H}_h^{-1}\bm{J}^T\bm{b}$
		\STATE Compute $\bm{\theta}_{k+1} = \bm{\theta}_k - \rho_k \bm{G}$
	\end{algorithmic}
\end{algorithm}
There is a wide class of self-concordant functions that meet practical requirements, either in their scaled form \cite[Corollary 5.1.3]{nesterov2018lectures} or in their original form (see e.g., in \cite{sun2019generalized} for a comprehensive list). Two of them are used in our experiments in \Secref{ss:experiments}.

In the following, we state a local convergence result for the process \eqref{eq:proposed}. We introduce the notation $\|\Delta\|_{\bm{\theta}}$ to represent the local norm of a direction $\Delta$ taken with respect to the local Hessian of a self-concordant function $h$, $\|\Delta\|_{\bm{\theta}}\coloneqq\left<\Delta,\partial_{\bm{\theta}\bm{\theta}}^2h(\bm{\theta})\Delta\right>^{1/2} = \left\|\left[\partial_{\bm{\theta}\bm{\theta}}^2h(\bm{\theta})\right]^{1/2}\Delta\right\|$. Hence, without loss of generality, a ball $\mathcal{B}_\epsilon(\bm{\theta}^*)$ of radius $\epsilon$ about $\bm{\theta}^*$ is taken with respect to the local norm for the function $h$ in the result that follows.
\begin{theorem}\label{thm:main}
	Let Assumptions \ref{thm:ass1}, \ref{thm:ass1-1}, \ref{thm:ass2}, \ref{thm:ass3} and \ref{thm:ass4} hold, and let $\bm{\theta}^*$ be a local minimizer of $\mathcal{L}(\bm{\theta})$ for which the assumptions in \lemref{thm:minimizer} hold. Let $\{\bm{\theta}_k\}$ be the sequence generated by Algorithm \ref{alg:GGN-SCORE} with $\alpha = \frac{\sqrt{\gamma_a}}{\beta_1}(K+\lambda\gamma_a)$, $\beta_1=\beta\tilde{\beta}$. Then starting from a point $\bm{\theta}_0\in\mathcal{N}_{M_h^{-1}}(\bm{\theta}^*)$, $\{\bm{\theta}_k\}$ converges to $\bm{\theta}^*$ according to the following descent properties:
	\begin{align*}
		&\mathbb{E}[\mathcal{L}(\bm{\theta}_{k+1})] \le \mathcal{L}(\bm{\theta}_k) - \left(\frac{\lambda}{M_h^2}\omega(\zeta_k)+\frac{\gamma_l}{2\gamma_a}\omega''(\tilde{\zeta}_k)-\xi\right),\\
		&\mathbb{E}\|\bm{\theta}_{k+1}-\bm{\theta}^*\|_{\bm{\theta}_{k+1}} \le \vartheta\|\bm{\theta}_k-\bm{\theta}^*\|_{\bm{\theta}_k} + \frac{\gamma_u}{\beta_1}\|\bm{\theta}_k-\bm{\theta}^*\| + \frac{\gamma_g}{2}\|\bm{\theta}_k-\bm{\theta}^*\|^2,
	\end{align*}
	where $\omega(\cdot)$ is an auxiliary univariate function defined by $\omega(t)\coloneqq t-\ln(1+t)$ and has a second derivative $\omega''(t)=1/(1+t)^2$, and
	\begin{align*}
		&\zeta_k \coloneqq \frac{M_h}{1+M_h\eta_k}, \quad \tilde{\zeta}_k \coloneqq M_h\eta_k, \quad \xi \coloneqq \frac{2(\gamma_u+\lambda\gamma_b)}{\sqrt{\gamma_a}},\\
		&\vartheta \coloneqq 1+\frac{\lambda}{\sqrt{\gamma_a}\beta_1(1-M_h\|\bm{\theta}_k-\bm{\theta}^*\|_{\bm{\theta}_k})}.
	\end{align*}
\end{theorem}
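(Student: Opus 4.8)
The plan is to treat the composite objective $\mathcal{L} = g + \lambda h$ by handling the two summands with different machinery: the scaled regularizer $\lambda h$ through the theory of self-concordant functions (following the damped-Newton analysis of \cite{nesterov2018lectures}), and the output-fit loss $g$ through its strong convexity and Lipschitz-Hessian bounds recorded in Remark~\ref{thm:rem2}. The key structural observation is that the update \eqref{eq:proposed} is, up to the GGN correction, a \emph{damped} Newton step for $h$ with step size $\rho_k = \alpha/(1+M_h\eta_k)$, where $\eta_k = \langle\bm{g}_h,\bm{H}_h^{-1}\bm{g}_h\rangle^{1/2}$ is exactly the Newton decrement of $h$ in the local norm $\|\cdot\|_{\bm{\theta}}$. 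The specific value $\alpha = \frac{\sqrt{\gamma_a}}{\beta_1}(K+\lambda\gamma_a)$ is chosen so that, after applying the resolvent bound $\|\lambda\bm{I}+\bm{Q}\bm{J}\bm{H}_h^{-1}\bm{J}^T\|\le\lambda+K/\gamma_a$ from Remark~\ref{thm:remJ}, the GGN-corrected direction is normalized to behave like the self-concordant Newton direction.

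For the first (function-value) inequality, I would expand $\mathcal{L}(\bm{\theta}_{k+1})$ summand by summand along $\delta\bm{\theta}$. For $\lambda h$, the standard self-concordant upper bound on the function increment along a damped step, rescaled by the self-concordance parameter $M_h$ of Assumption~\ref{thm:ass4}, yields a guaranteed decrease of the form $\frac{\lambda}{M_h^2}\omega(\zeta_k)$, with $\zeta_k = M_h/(1+M_h\eta_k)$ arising from the damping. For the loss $g$, I would combine its $\gamma_l$-strong convexity with the lower bound $\gamma_a\bm{I}\preceq\bm{H}_h$ to convert the curvature of $g$ into the local geometry of $h$, producing the term $\frac{\gamma_l}{2\gamma_a}\omega''(\tilde{\zeta}_k)$ with $\tilde{\zeta}_k = M_h\eta_k$. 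The residual $-\xi$ absorbs the mismatch between the GGN-approximated step and the exact Newton step; this is controlled through the gradient-Lipschitz constant $(\gamma_u+\lambda\gamma_b)$ of Assumption~\ref{thm:ass1-1} and the factor $\sqrt{\gamma_a}$, and taking expectations lets me invoke the consistency property in Assumption~\ref{thm:ass3}(ii) to bound the stochastic bias.

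For the second (local-norm contraction) inequality, I would write $\bm{\theta}_{k+1}-\bm{\theta}^* = (\bm{\theta}_k-\bm{\theta}^*)+\delta\bm{\theta}$ and use the fundamental theorem of calculus together with $\partial\mathcal{L}(\bm{\theta}^*)=0$ (from Lemma~\ref{thm:minimizer} and the second-order sufficiency condition) to express $\bm{g}$ as an integral of the Hessian along the segment joining $\bm{\theta}_k$ to $\bm{\theta}^*$. Measuring everything in the local norm of $h$ and invoking the self-concordance inequality \eqref{eq:sc} gives the two-sided norm comparison between $\|\cdot\|_{\bm{\theta}_{k+1}}$ and $\|\cdot\|_{\bm{\theta}_k}$ through the factor $1-M_h\|\bm{\theta}_k-\bm{\theta}^*\|_{\bm{\theta}_k}$, which the starting condition $\bm{\theta}_0\in\mathcal{N}_{M_h^{-1}}(\bm{\theta}^*)$ keeps positive and which produces the leading term $\vartheta\|\bm{\theta}_k-\bm{\theta}^*\|_{\bm{\theta}_k}$. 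The remaining cross-contributions from $g$ split into a first-order part bounded via the upper Hessian bound $\gamma_u$ and the normalization $\beta_1=\beta\tilde{\beta}$, giving $\frac{\gamma_u}{\beta_1}\|\bm{\theta}_k-\bm{\theta}^*\|$, and a second-order part bounded via the Hessian-Lipschitz constant $\gamma_g$ of Assumption~\ref{thm:ass2}, giving $\frac{\gamma_g}{2}\|\bm{\theta}_k-\bm{\theta}^*\|^2$.

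The main obstacle I anticipate is cleanly decoupling the self-concordant part from the non-self-concordant loss part, since they are intertwined through the combined resolvent $(\lambda\bm{I}+\bm{Q}\bm{J}\bm{H}_h^{-1}\bm{J}^T)^{-1}$ in the closed-form step \eqref{eq:delta_theta_N+1}. Specifically, I must verify that the correction induced by $\bm{Q}$ and $\bm{J}$ does not spoil the damped-Newton decrease guaranteed by self-concordance, which is precisely the role of the engineered constant $\alpha$ and the uniform bounds of Remark~\ref{thm:remJ}. A secondary difficulty is handling the stochastic mini-batch error in expectation so that the almost-sure consistency of Assumption~\ref{thm:ass3}(ii) can be used to fold the bias into $\xi$; keeping the iterates inside the self-concordant neighbourhood throughout, guaranteed by the descent Lemma~\ref{thm:descent} and Remark~\ref{thm:rem4}, is what makes this absorption legitimate.
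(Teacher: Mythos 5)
Your plan for the first inequality coincides with the paper's proof: bound $\|\delta\bm{\theta}\|$ and $\|\delta\bm{\theta}\|_{\bm{\theta}_k}$ through the resolvent bound of Remark~\ref{thm:remJ} together with the engineered $\alpha$, then apply the descent lemma (Lemma~\ref{thm:Llip}) with constant $\gamma_u+\lambda\gamma_b$, strong convexity of $g$, and the self-concordant lower bound for $h$ (Lemma~\ref{thm:hlem}), and finally take expectations using Jensen's inequality and unbiasedness; this reproduces $\frac{\lambda}{M_h^2}\omega(\zeta_k)$, $\frac{\gamma_l}{2\gamma_a}\omega''(\tilde{\zeta}_k)$ and $\xi$ essentially as in the paper. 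The skeleton of your second part (subtract $\bm{\theta}^*$, work in the local norm, use the vanishing gradient at $\bm{\theta}^*$ with the fundamental theorem of calculus, and bound the $g$-contributions by $\gamma_u$ and $\gamma_g$) also matches.

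However, there is a genuine gap in where you get the factor $(1-M_h\|\bm{\theta}_k-\bm{\theta}^*\|_{\bm{\theta}_k})^{-1}$. You attribute it to a two-sided comparison of the local norms $\|\cdot\|_{\bm{\theta}_{k+1}}$ and $\|\cdot\|_{\bm{\theta}_k}$. Such a comparison, however, is governed by the distance between the two points at which the norms are taken, i.e., by $M_h\|\bm{\theta}_{k+1}-\bm{\theta}_k\|_{\bm{\theta}_k}$, not by $M_h\|\bm{\theta}_k-\bm{\theta}^*\|_{\bm{\theta}_k}$, and it would multiply the entire right-hand side rather than only the $\lambda$-weighted summand; executed literally, your argument cannot produce the claimed $\vartheta$, in which the denominator sits only inside the additive term proportional to $\lambda$. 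In the paper, the switch of norms between iterates is handled separately, by the Hessian-stability observation $\|\bm{H}_h(\bm{\theta}_{k+1})-\bm{H}_h(\bm{\theta}_k)\|\le\gamma_h\|\bm{\theta}_{k+1}-\bm{\theta}_k\|\to 0$ (so that $\bm{H}_h^{1/2}(\bm{\theta}_{k+1})$ can be replaced by $\bm{H}_h^{1/2}(\bm{\theta}_k)$), while the factor you are after arises instead from bounding the regularizer-gradient contribution to the step: writing $\bm{g}_h(\bm{\theta}_k)=\int_0^1 \bm{H}_h(\bm{\theta}^*+\tau(\bm{\theta}_k-\bm{\theta}^*))(\bm{\theta}_k-\bm{\theta}^*)\,d\tau$ (using $\bm{g}_h(\bm{\theta}^*)=0$, extracted from the first-order condition) and applying the self-concordant Hessian-integral bound of Lemma~\ref{thm:operator} gives
\begin{align*}
\left\|\bm{H}_h^{-1}\bm{g}_h(\bm{\theta}_k)\right\| \le \frac{\left\|\bm{\theta}_k-\bm{\theta}^*\right\|_{\bm{\theta}_k}}{\gamma_a^{3/2}\left(1-M_h\left\|\bm{\theta}_k-\bm{\theta}^*\right\|_{\bm{\theta}_k}\right)}.
\end{align*}
This step---converting the $\lambda\bm{g}_h$ part of the update into the local distance to $\bm{\theta}^*$---is the missing idea in your plan; without it the $\lambda$-term cannot be folded into $\vartheta\|\bm{\theta}_k-\bm{\theta}^*\|_{\bm{\theta}_k}$, and the second descent property in the exact form stated does not follow.
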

\noindent
The proof is provided in \appref{sec:appB}. The results of \thmref{thm:main} combine strong convexity and smoothness properties of both $g(\bm{\theta})$ and $h(\bm{\theta})$, and requires that only $h(\bm{\theta})$ is self-concordant.
\section{Experiments}\label{ss:experiments}
In this section, we validate the efficiency of GGN-SCORE (Algorithm \ref{alg:GGN-SCORE}) in solving the problem of minimizing a regularized strongly convex quadratic function and in solving binary classification tasks. For the binary classification tasks, we use real datasets: $\verb*|ijcnn1|$ and $\verb*|w2a|$ from the LIBSVM repository \cite{chang2011libsvm} and $\verb*|dis|$, $\verb*|hypothyroid|$ and $\verb*|coil2000|$ from the PMLB repository \cite{romano2021pmlb}, with an $80$:$20$ train:test split each. The datasets are summarized in Table \ref{tab:datasetsummary}. In each classification task, the model with a sigmoidal output $\hat{\bm{y}}_n$ is trained using the cross-entropy fit loss function $\ell(\bm{y}_n,\hat{\bm{y}}_n) = \frac{1}{2}\sum_{n=1}^N \bm{y}_n\log\left(\frac{1}{\hat{\bm{y}}_n}\right) + (\bm{1}-\bm{y}_n)\log\left(\frac{1}{\bm{1}-\hat{\bm{y}}_n}\right)$, and the ``deviance" residual \cite{dunn2018generalized} $\bm{e}_n = (-1)^{\bm{y}_n+1}\sqrt{-2\left[\bm{y}_n\log(\hat{\bm{y}}_n) + (\bm{1}-\bm{y}_n)\log(\bm{1}-\hat{\bm{y}}_n)\right]}$, $\bm{y}_n, \hat{\bm{y}}_n\in\{\bm{0},\bm{1}\}$.
\begin{table*}
	\caption{Real datasets: $N =$ number of data samples, $n_p=$ number of features, $d=$ number of targets.}
	\label{tab:datasetsummary}
	%	\vskip 0.15in
	\begin{center}
		\begin{small}
			%			\begin{sc}
				\begin{tabular}{ccccr}
					\toprule
					\thead{dataset}&\thead{$N$} & \thead{$n_p$\\before \& after\\feature mapping} & \thead{$d$}\\
					\midrule
					\multirow{2}{*}{\texttt{dis}}&\multirow{2}{*}{$3772$}& $29$& \multirow{2}{*}{$1$} \\
					&& $3017$&\\
					\midrule
					\multirow{2}{*}{\texttt{hypothyroid}}&\multirow{2}{*}{$3163$}& $25$& \multirow{2}{*}{$1$} \\
					&& $2530$&\\
					\midrule
					\multirow{2}{*}{\texttt{w2a}}&\multirow{2}{*}{$3470$}& $300$& \multirow{2}{*}{$1$} \\
					&& $2776$&\\
					\midrule
					\multirow{2}{*}{\texttt{ijcnn1}}&\multirow{2}{*}{$35000$}& $22$& \multirow{2}{*}{$1$} \\
					&& $28000$&\\
					\midrule
					\multirow{2}{*}{\texttt{coil2000}}&\multirow{2}{*}{$9822$}& $85$& \multirow{2}{*}{$1$} \\
					&& $7857$&\\
					\bottomrule
				\end{tabular}
				%			\end{sc}
		\end{small}
	\end{center}
	%	\vskip -0.1in
\end{table*}
\begin{figure*}
	\centering
	\vspace{1.4\baselineskip}
	\begin{subfigure}{1.0\textwidth}
		\vspace{-1.7\baselineskip}
		\begin{multicols}{4}
			\centerline{\includegraphics[width=1.2\linewidth]{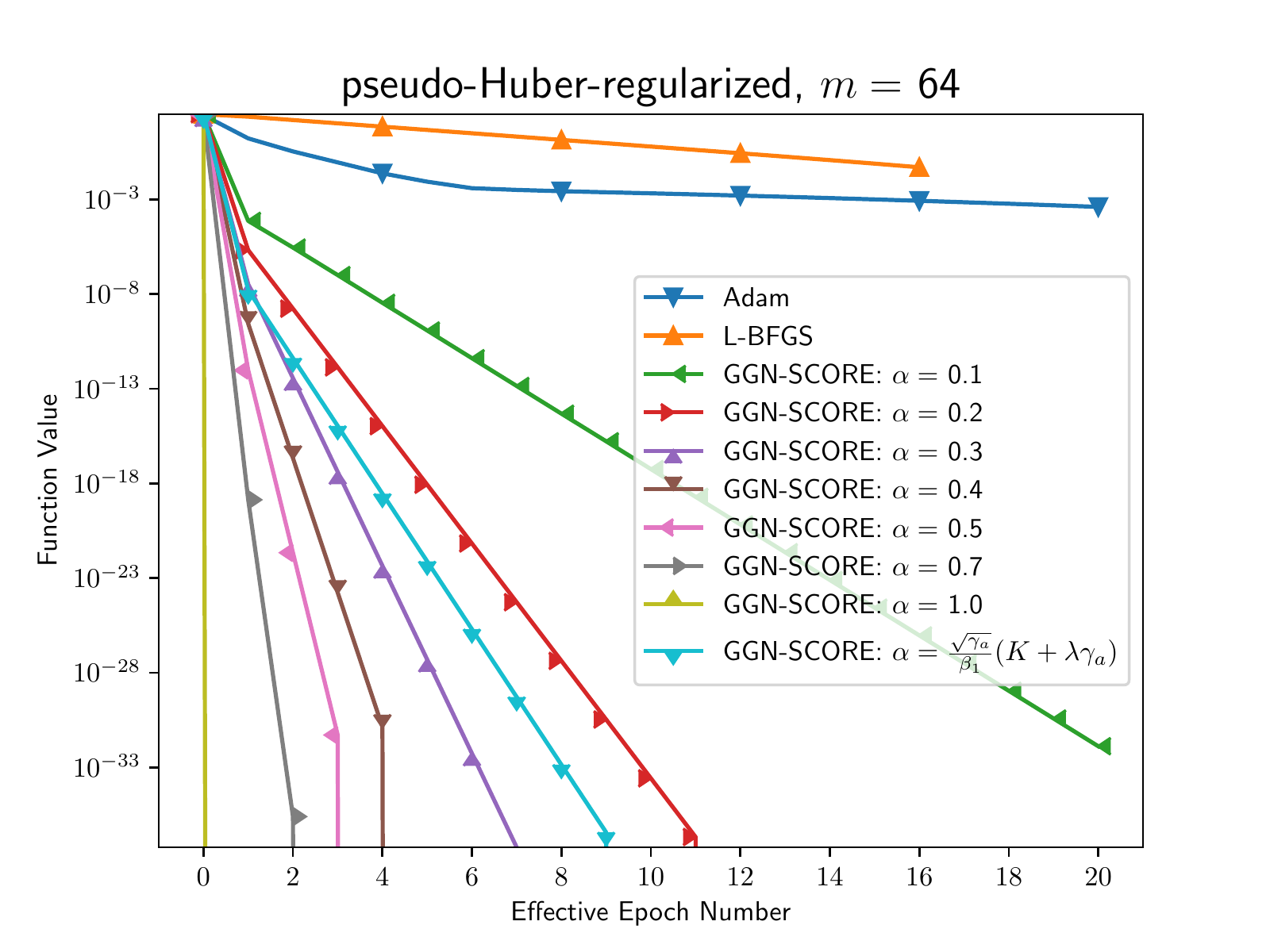}}
			\centerline{\includegraphics[width=1.2\linewidth]{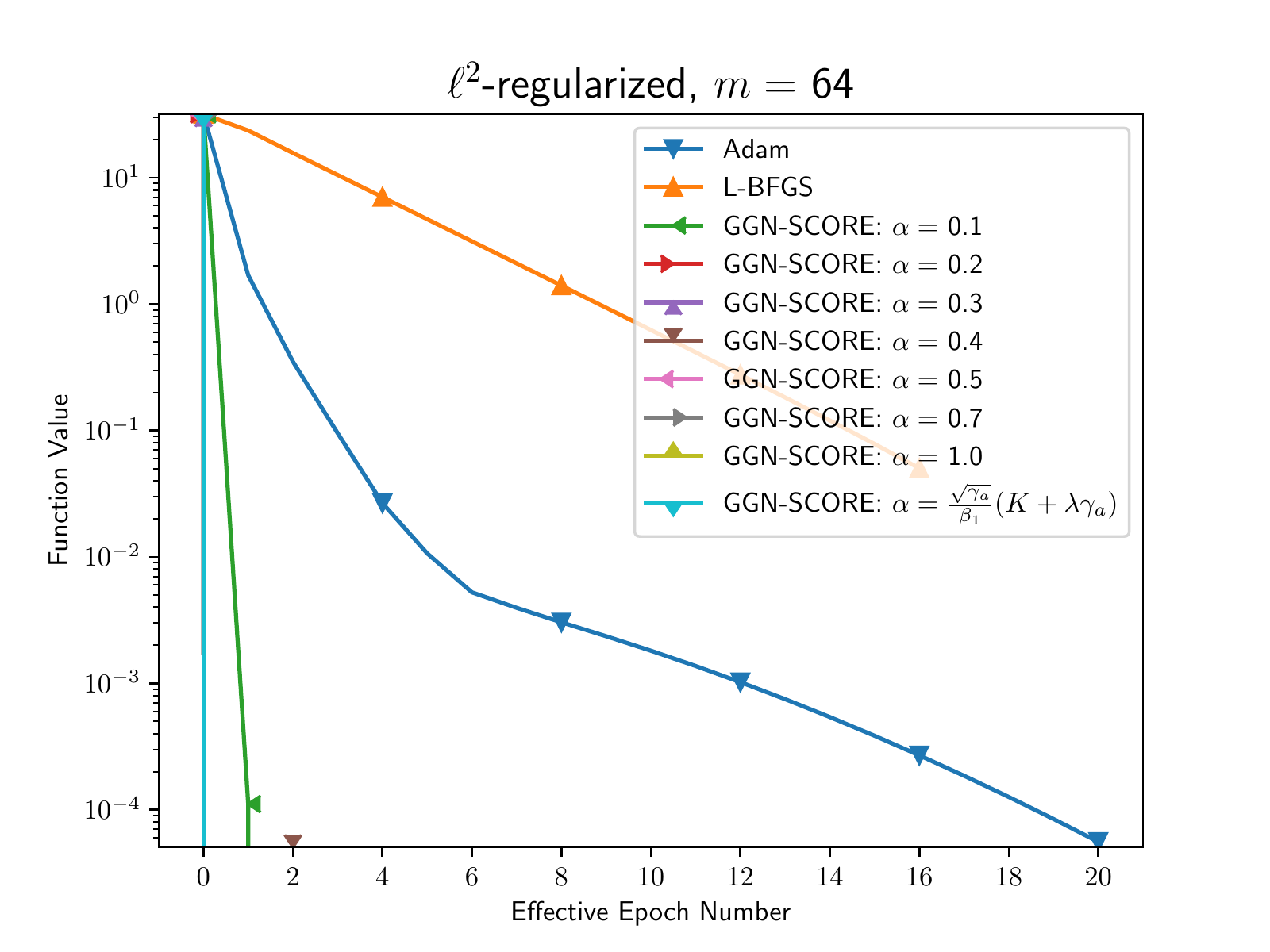}}
			\centerline{\includegraphics[width=1.2\linewidth]{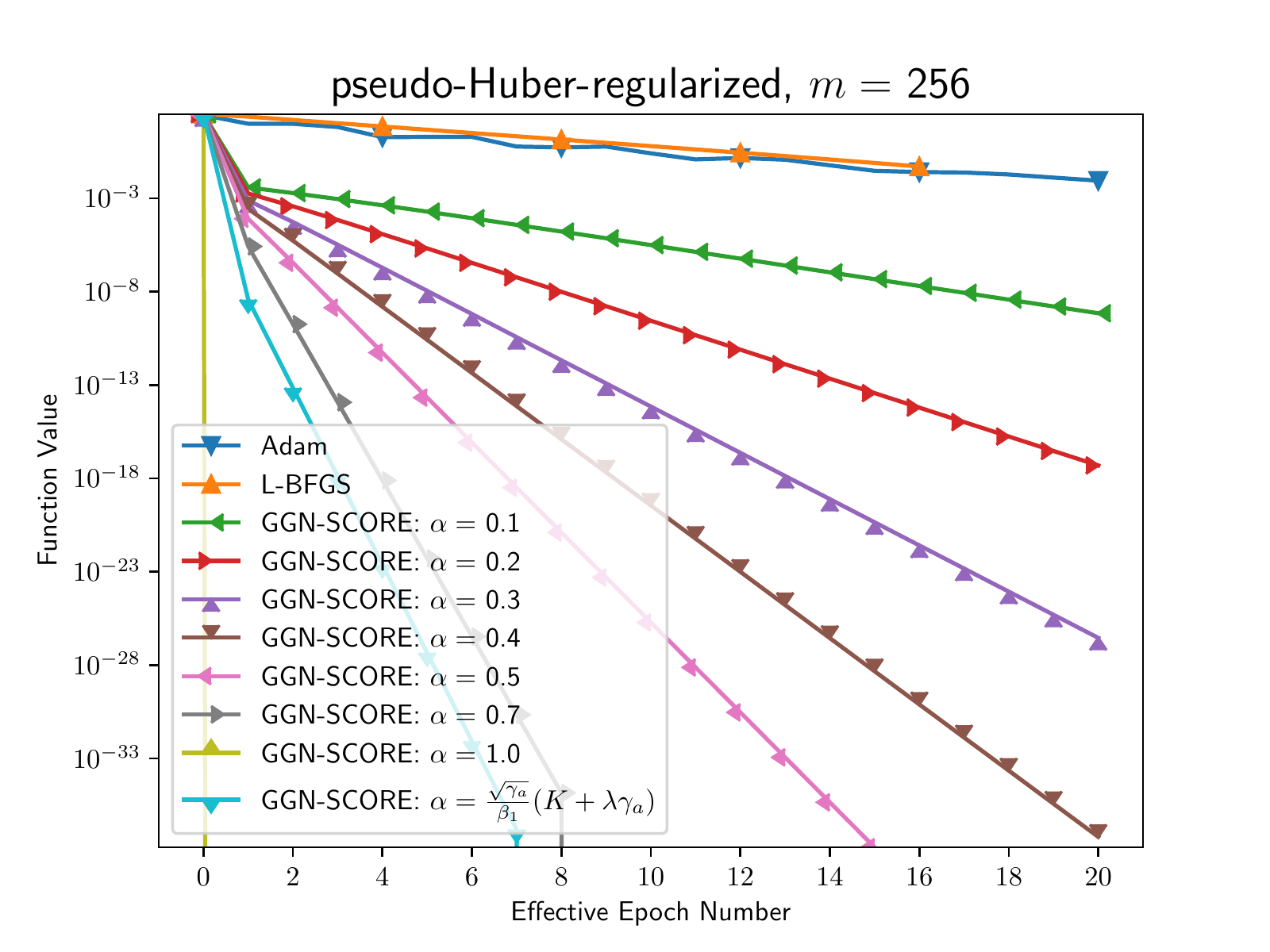}}
			\centerline{\includegraphics[width=1.2\linewidth]{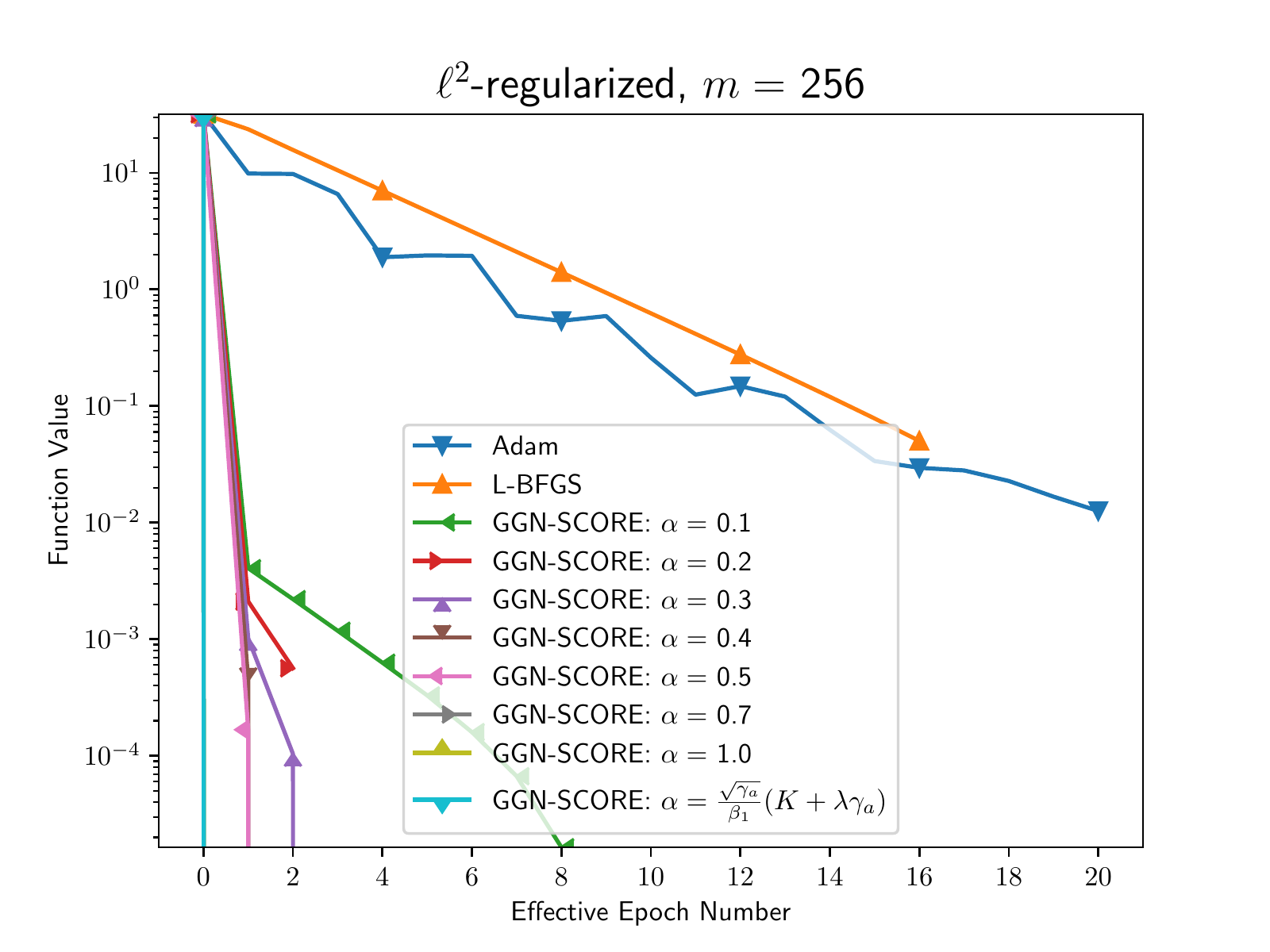}}
			\centerline{\includegraphics[width=1.2\linewidth]{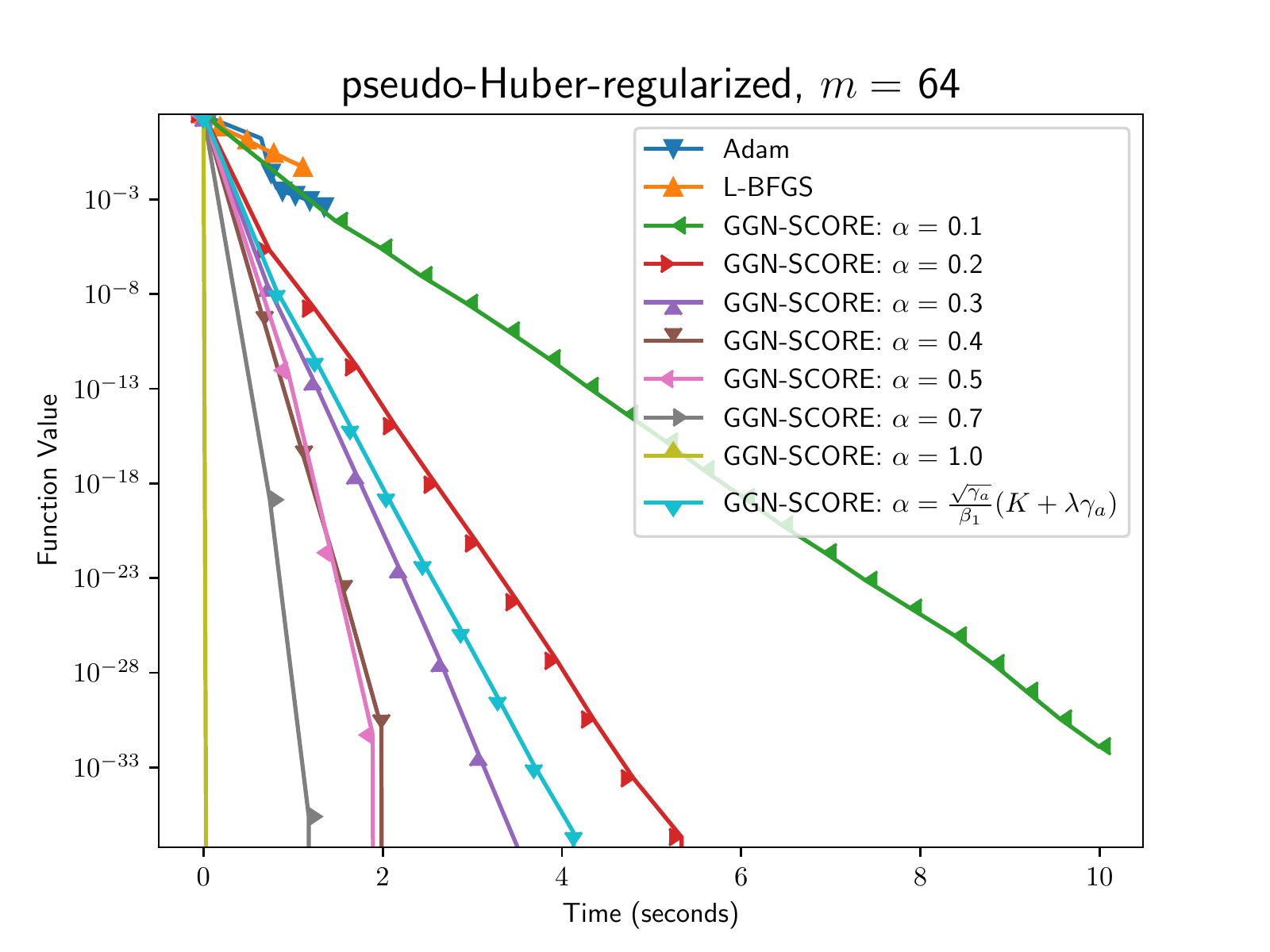}}
			\centerline{\includegraphics[width=1.2\linewidth]{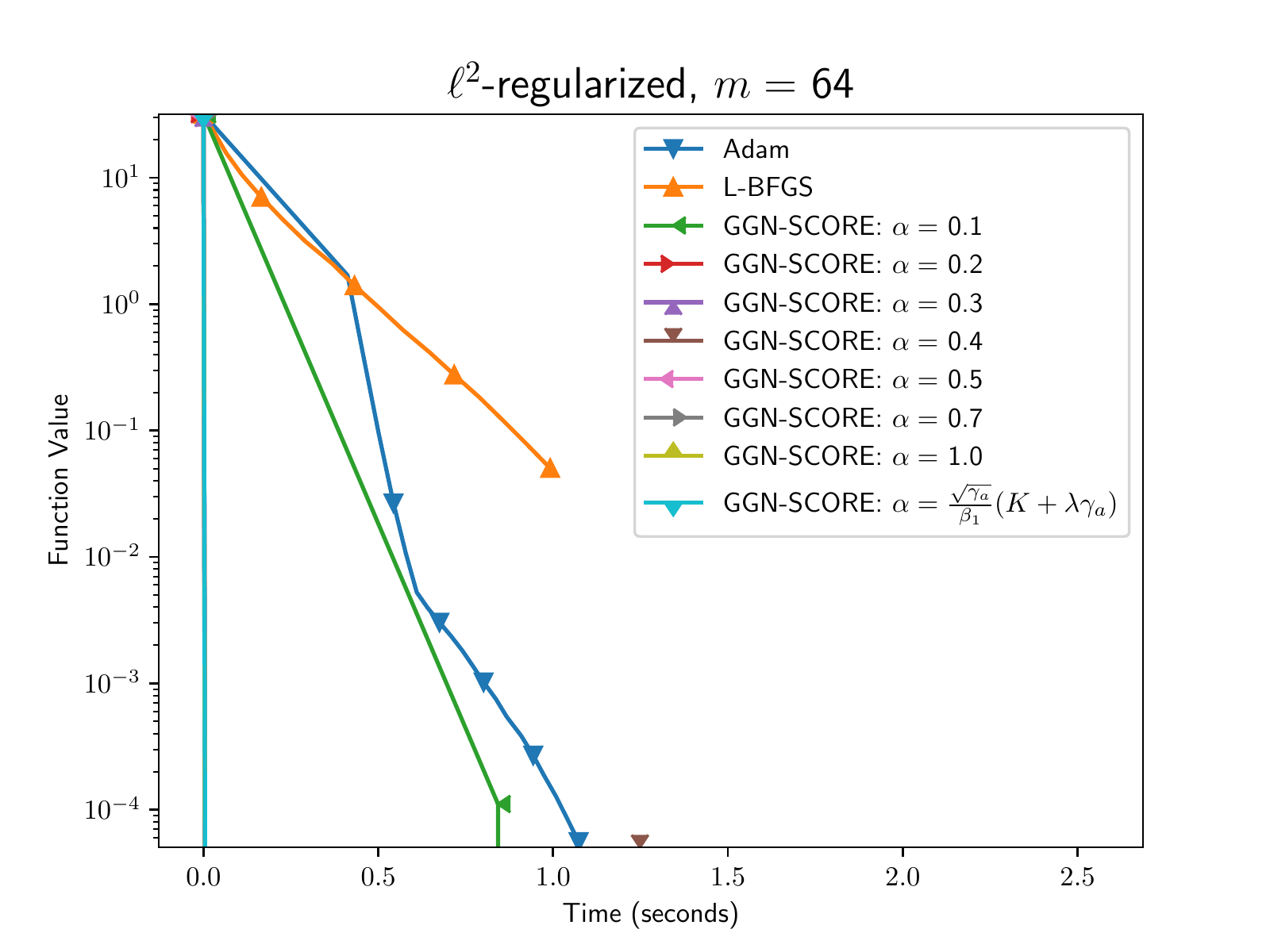}}
			\centerline{\includegraphics[width=1.2\linewidth]{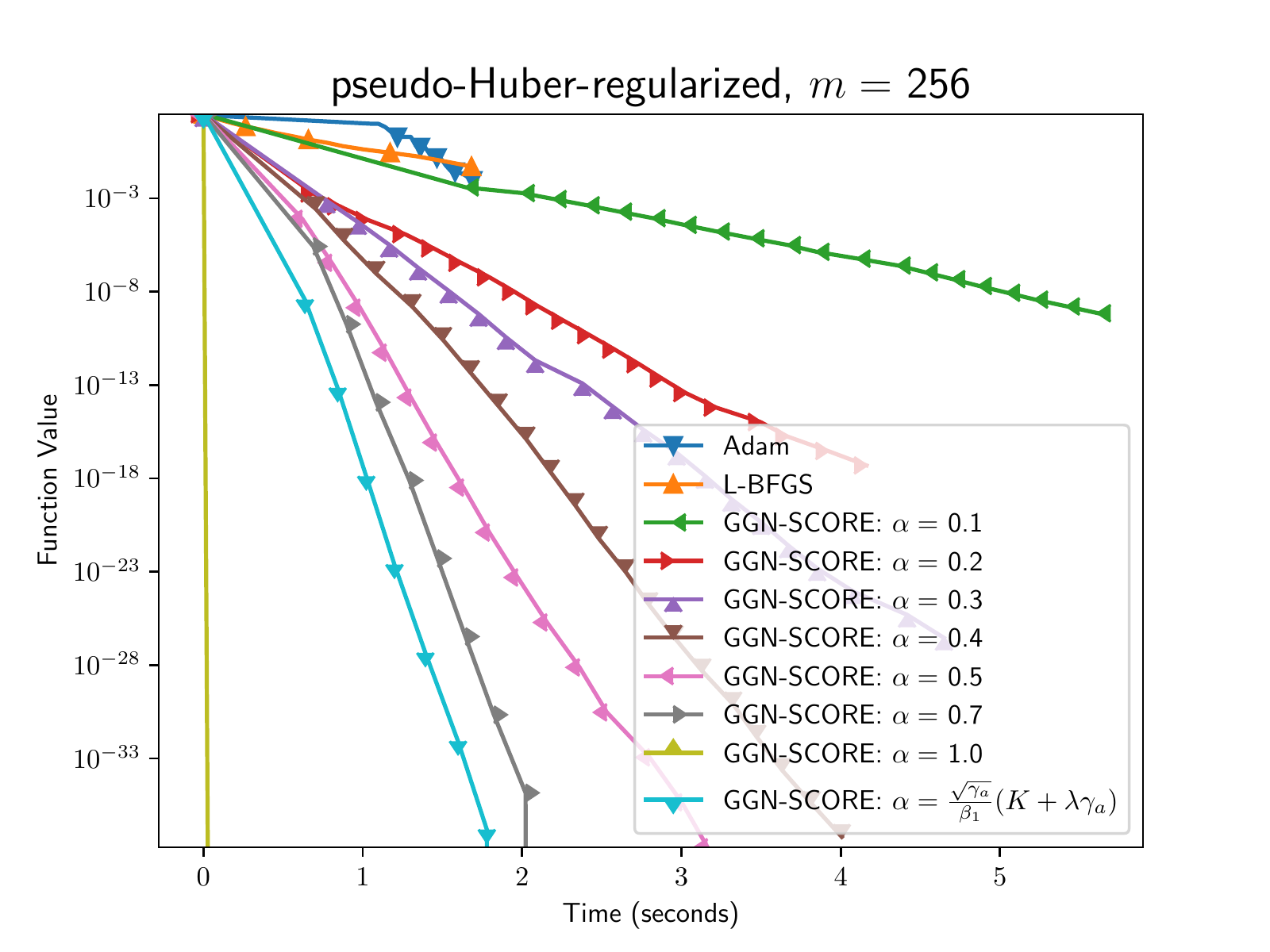}}
			\centerline{\includegraphics[width=1.2\linewidth]{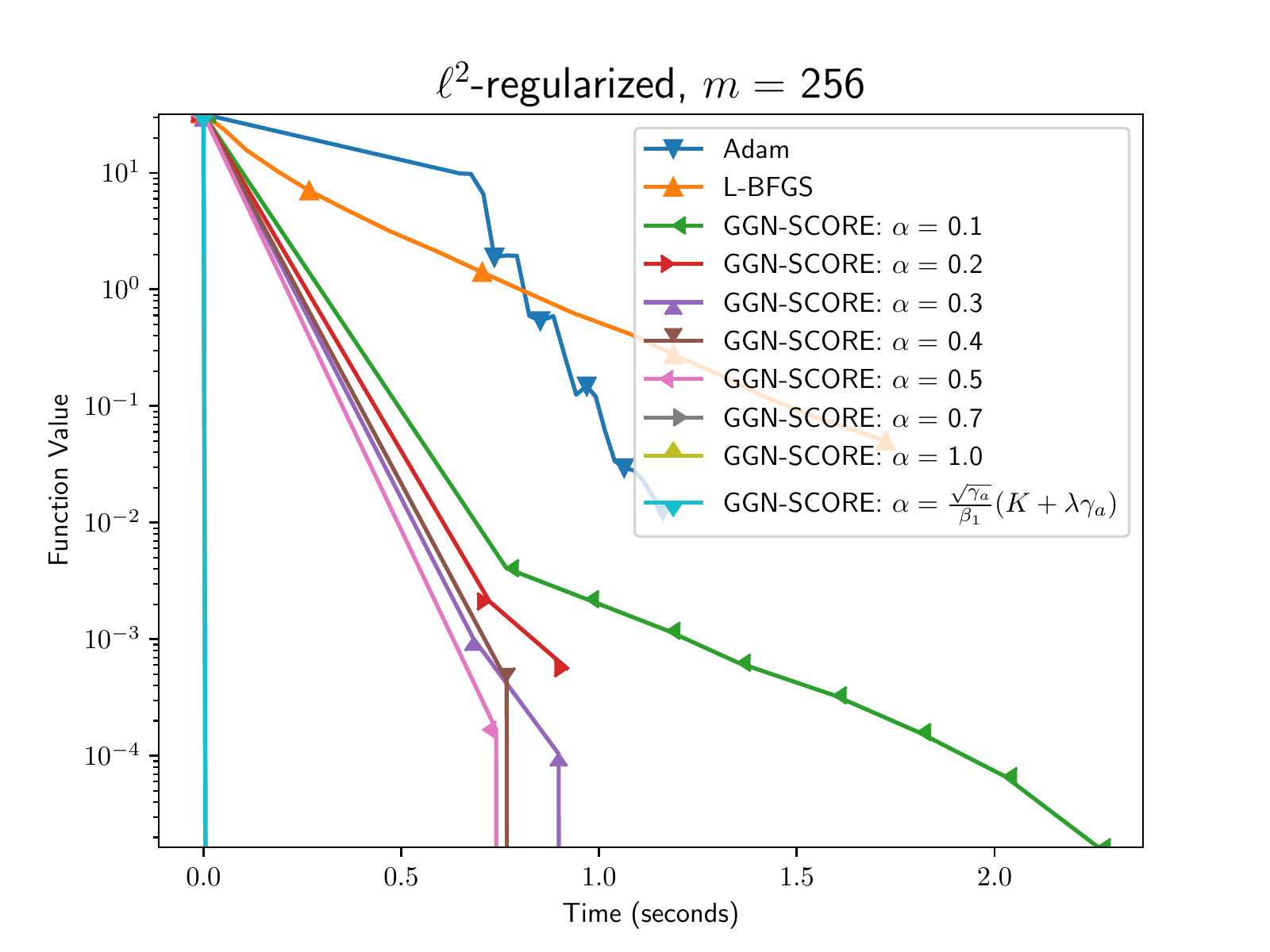}}
			\centerline{\includegraphics[width=1.2\linewidth]{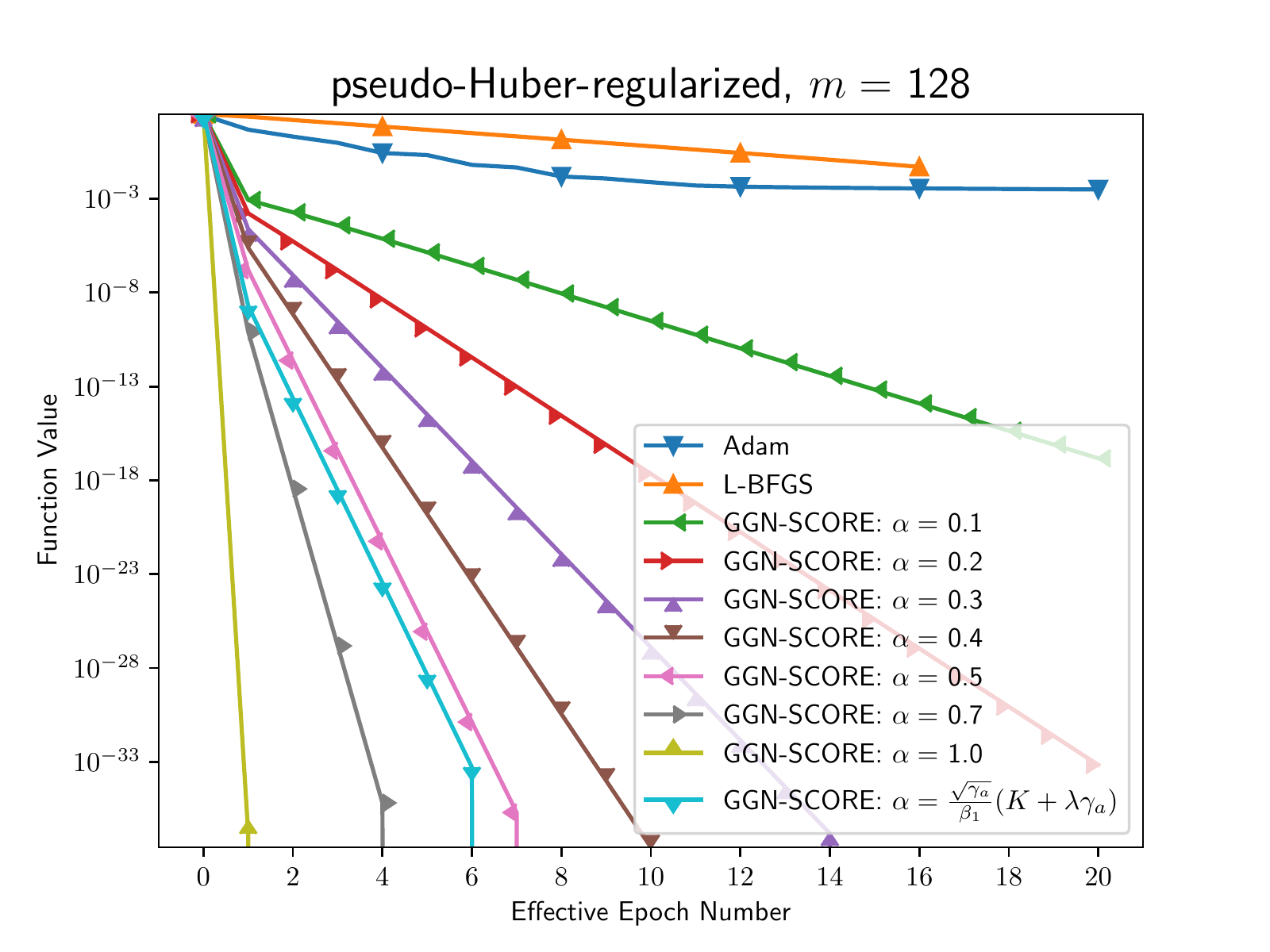}}
			\centerline{\includegraphics[width=1.2\linewidth]{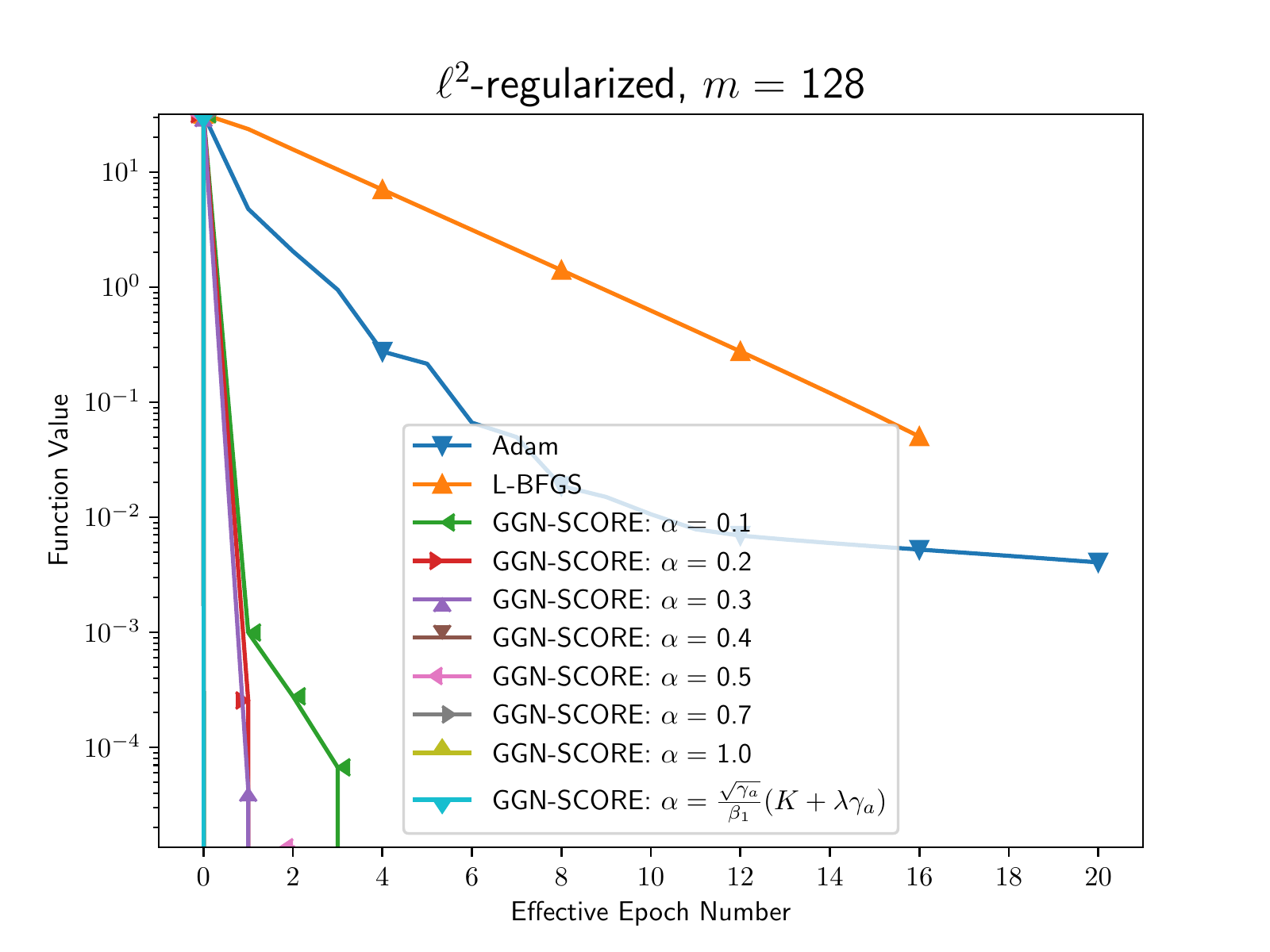}}
			\centerline{\includegraphics[width=1.2\linewidth]{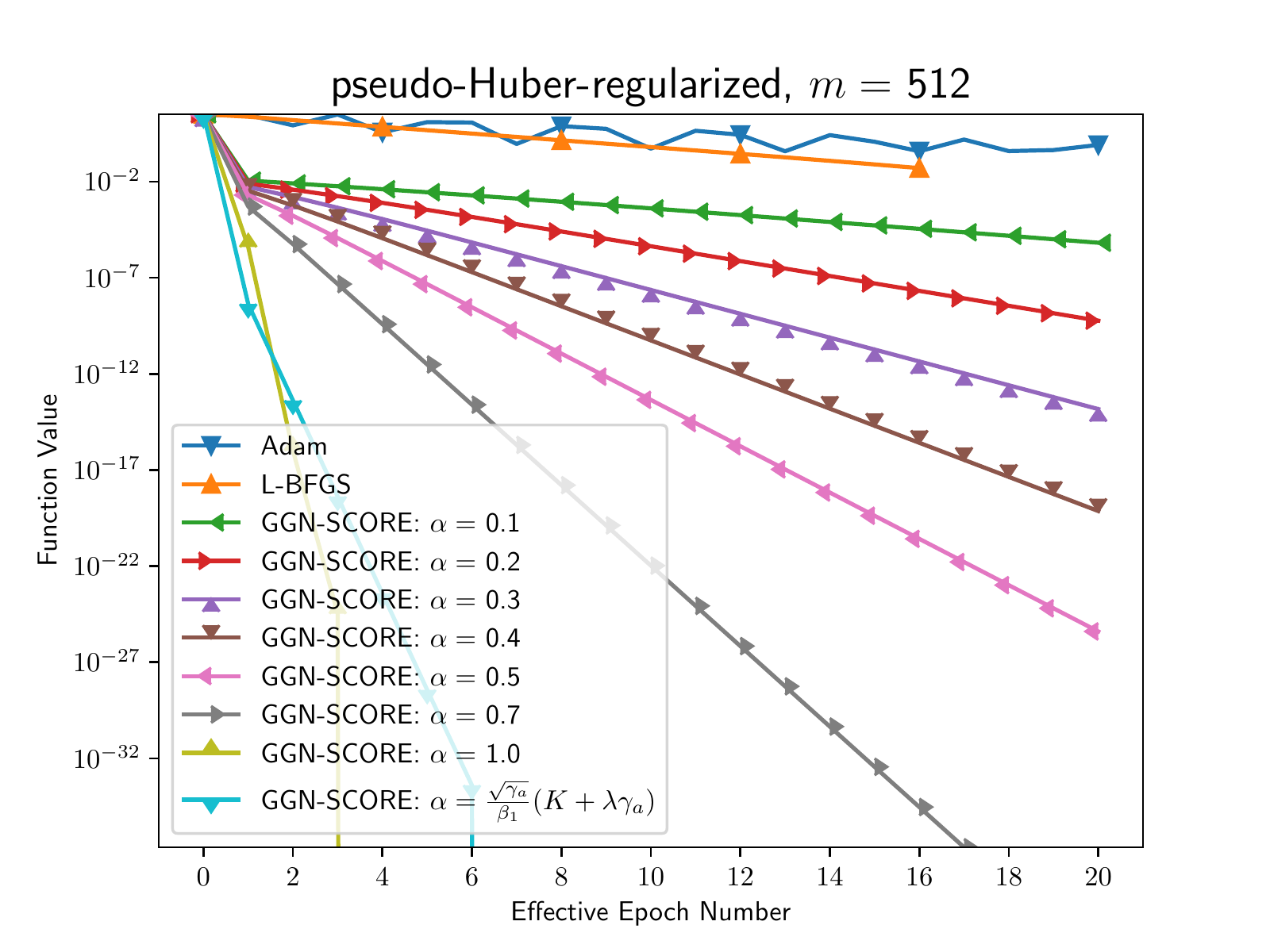}}
			\centerline{\includegraphics[width=1.2\linewidth]{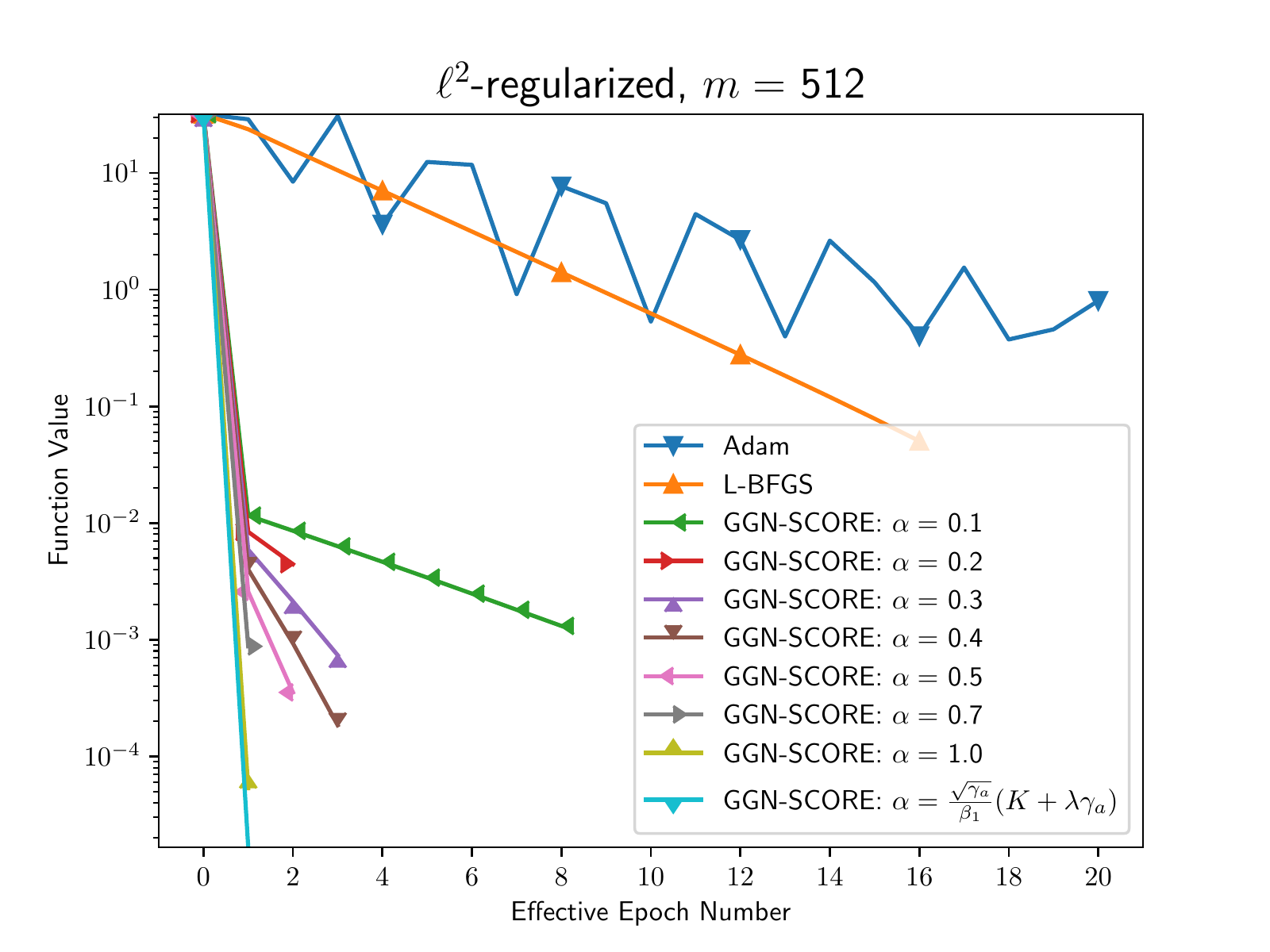}}
			\centerline{\includegraphics[width=1.2\linewidth]{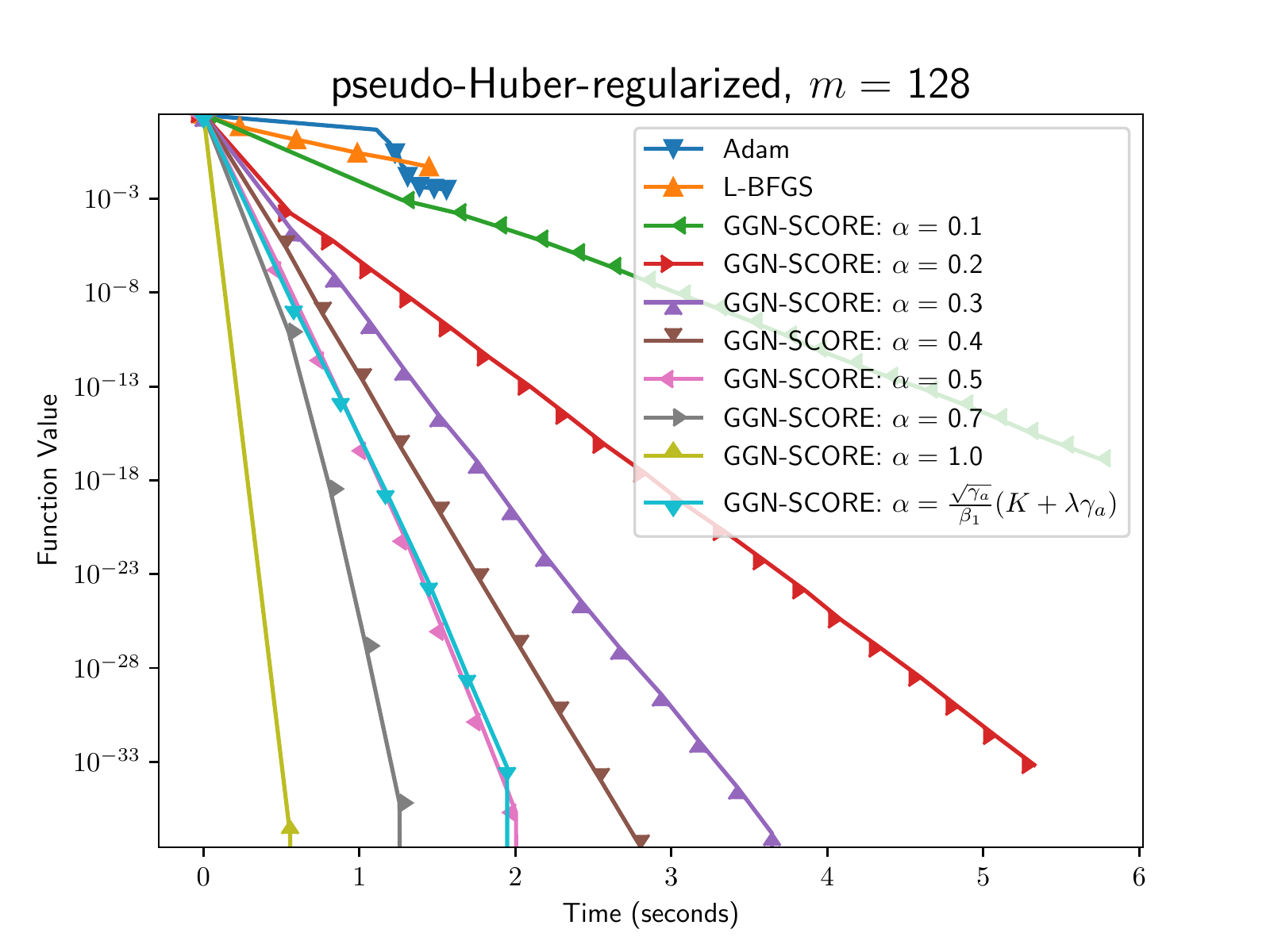}}
			\centerline{\includegraphics[width=1.2\linewidth]{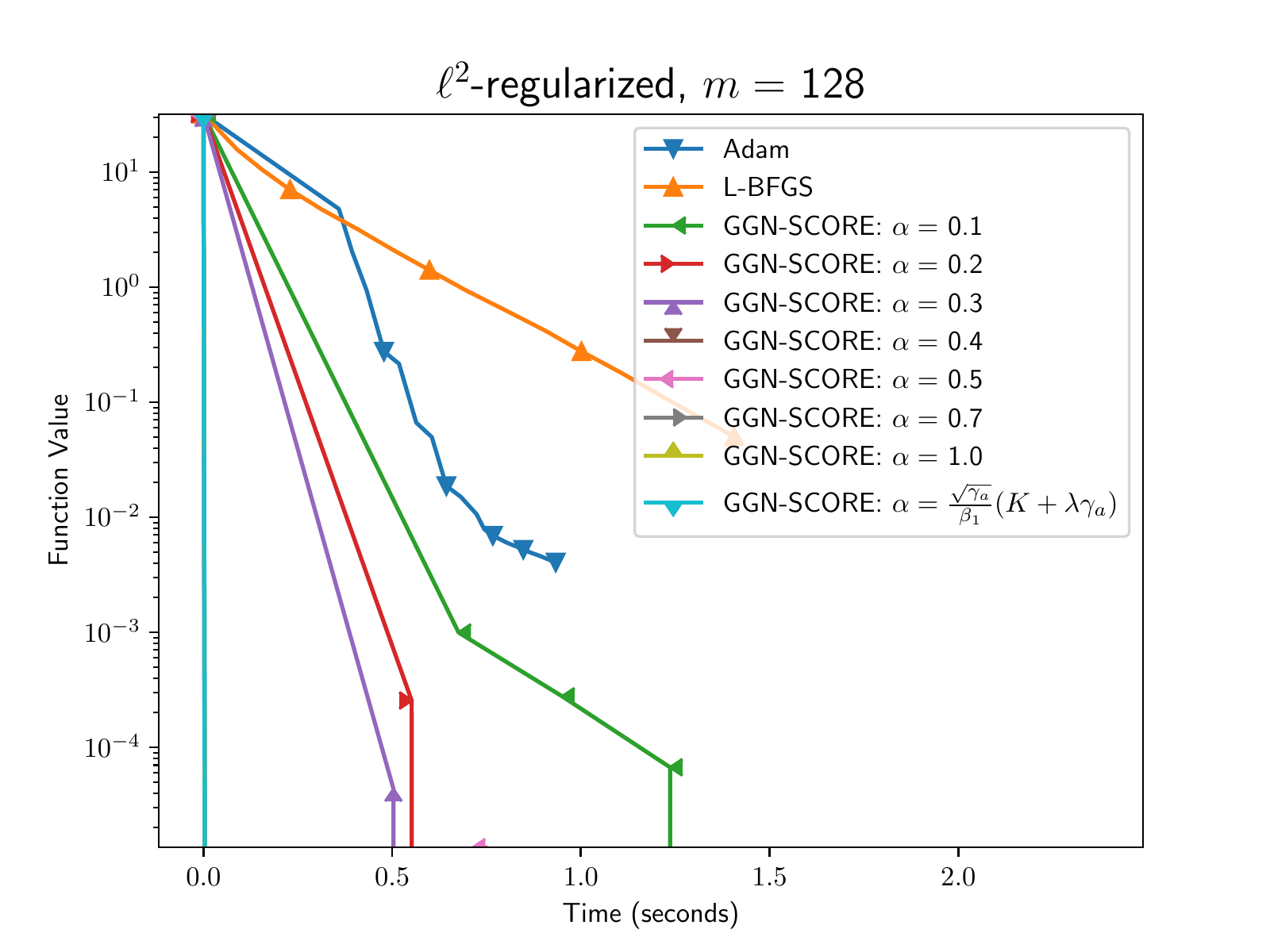}}
			\centerline{\includegraphics[width=1.2\linewidth]{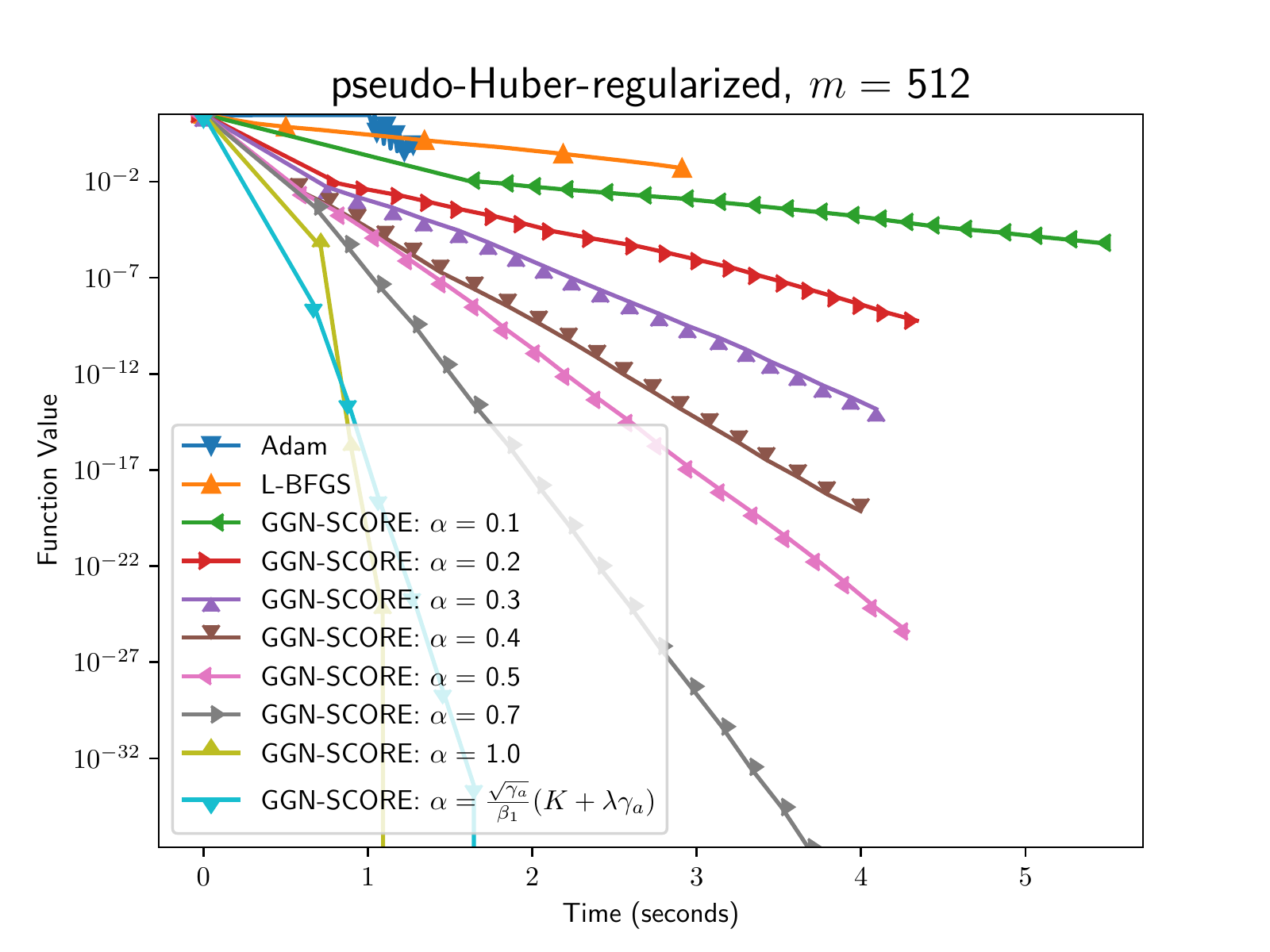}}
			\centerline{\includegraphics[width=1.2\linewidth]{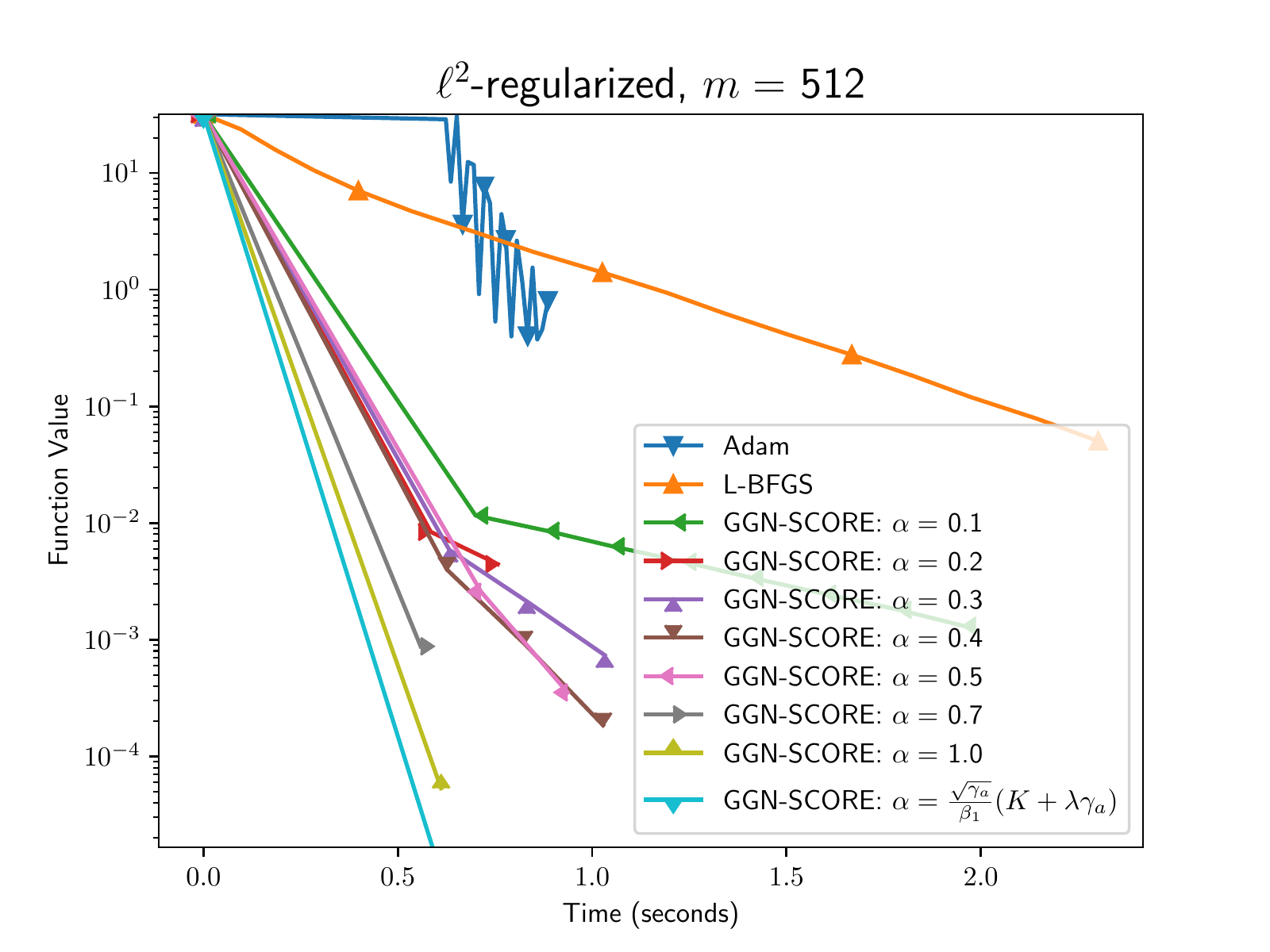}}
		\end{multicols}
	\end{subfigure}
	\caption{Numerical behaviour of GGN-SCORE for different values of $\alpha$ in the strongly convex quadratic test problem \eqref{eq:testproblem}.}
	\label{fig:step-scale-toy}
\end{figure*}
\begin{figure*}
	\vspace{1.4\baselineskip}
	\begin{subfigure}{1.0\textwidth}
		\vspace{-1.7\baselineskip}
		\begin{multicols}{4}
			\centerline{\includegraphics[width=1.2\linewidth]{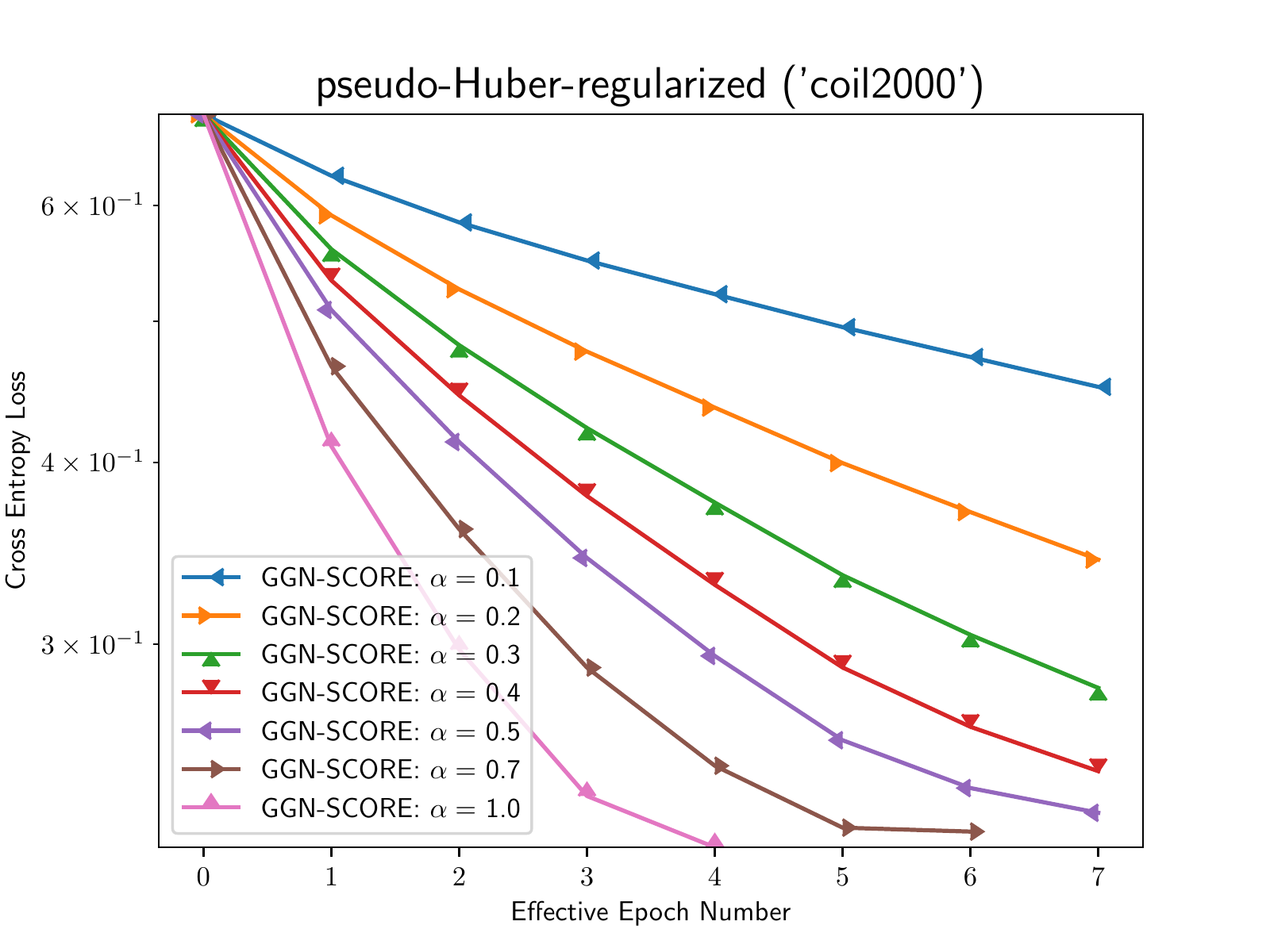}}
			\centerline{\includegraphics[width=1.2\linewidth]{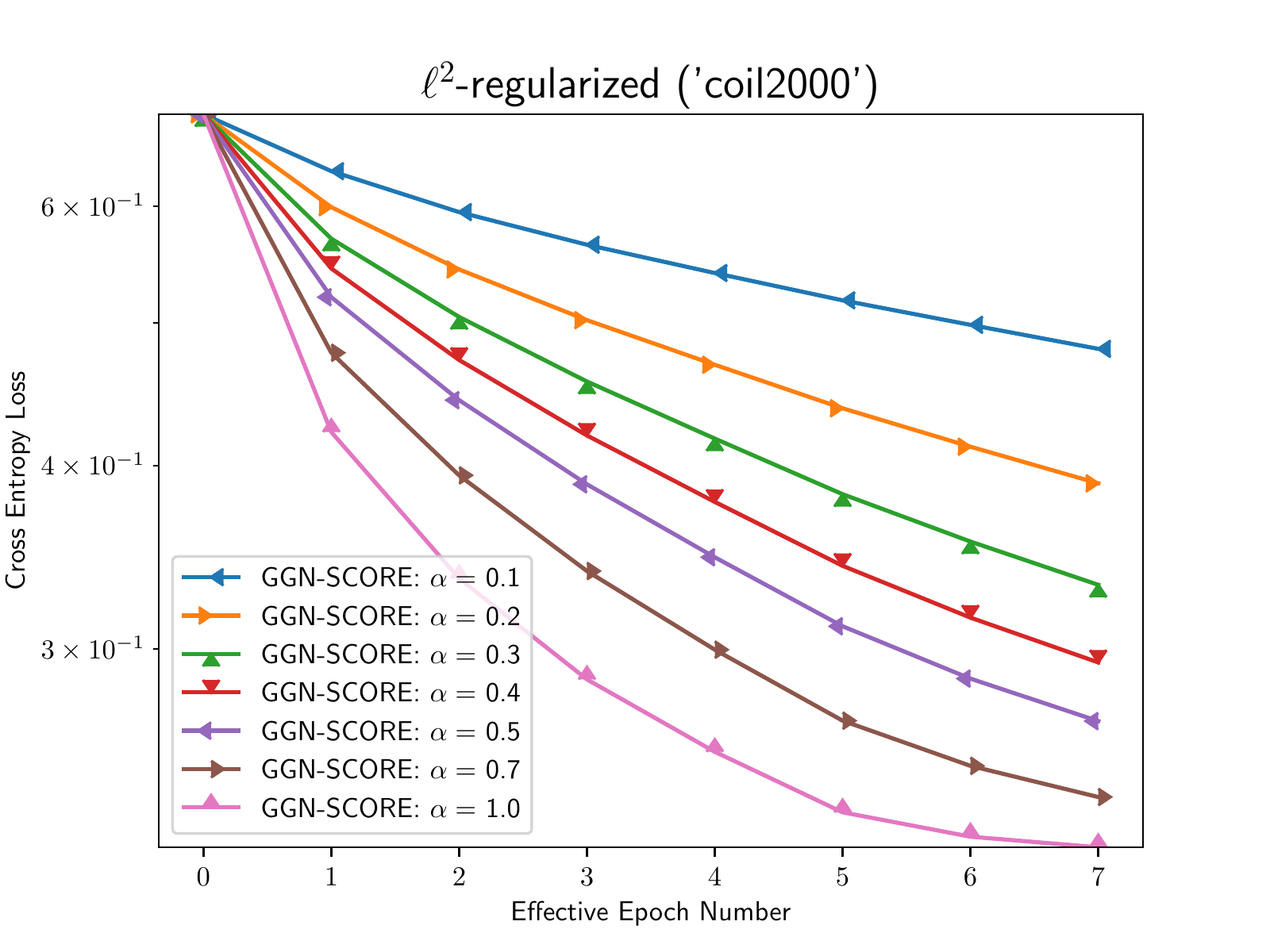}}
			\centerline{\includegraphics[width=1.2\linewidth]{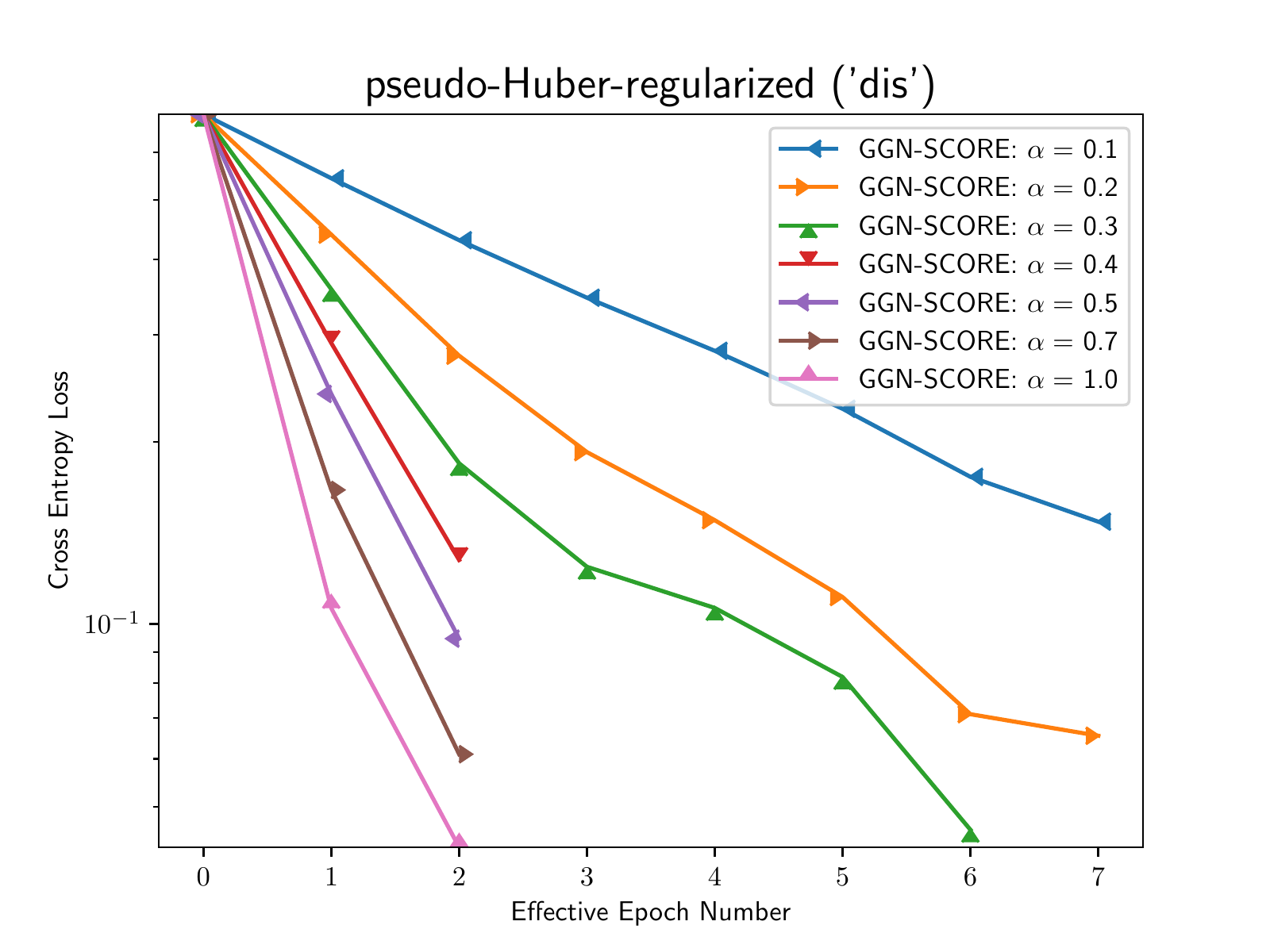}}
			\centerline{\includegraphics[width=1.2\linewidth]{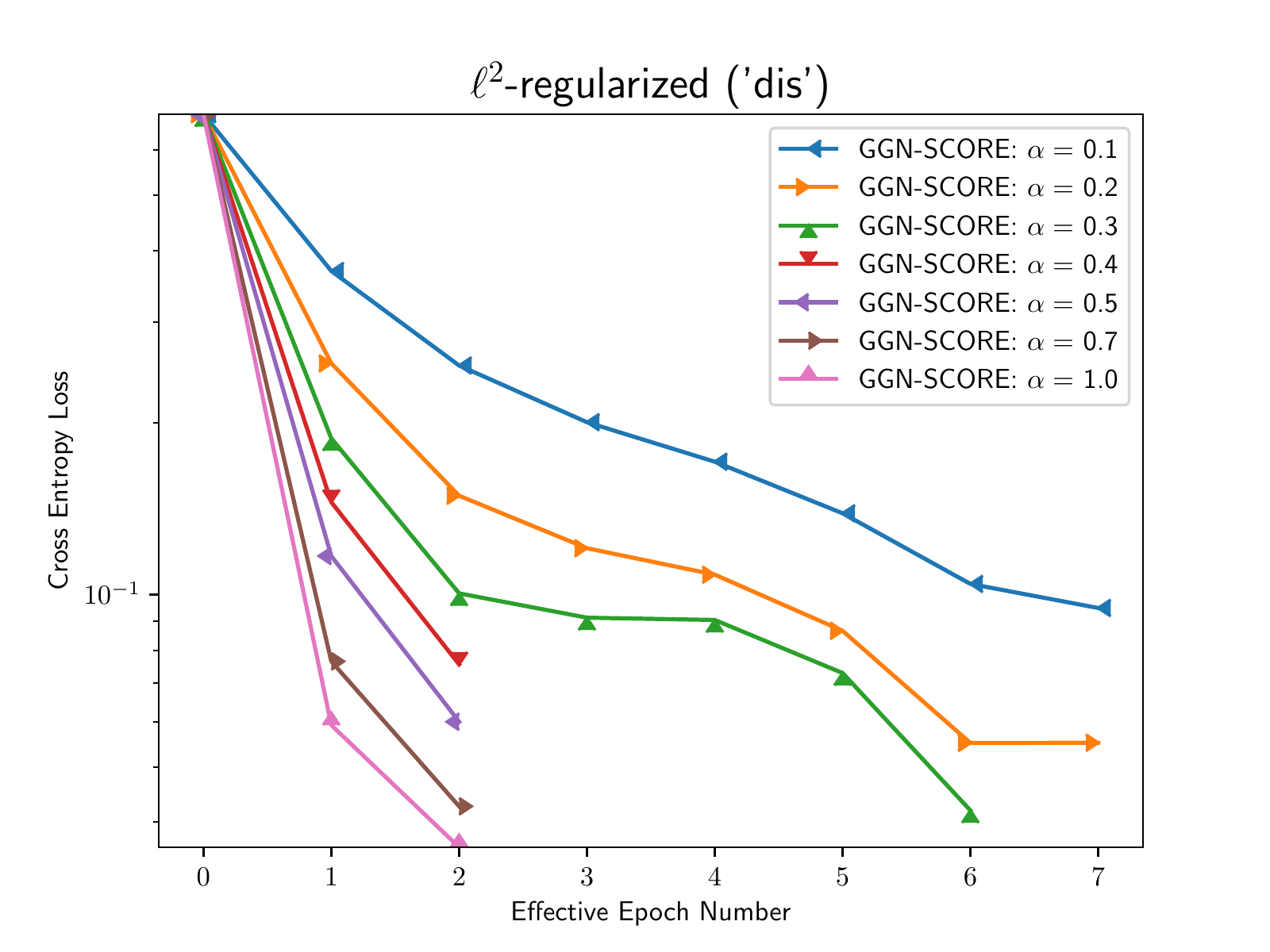}}
			\centerline{\includegraphics[width=1.2\linewidth]{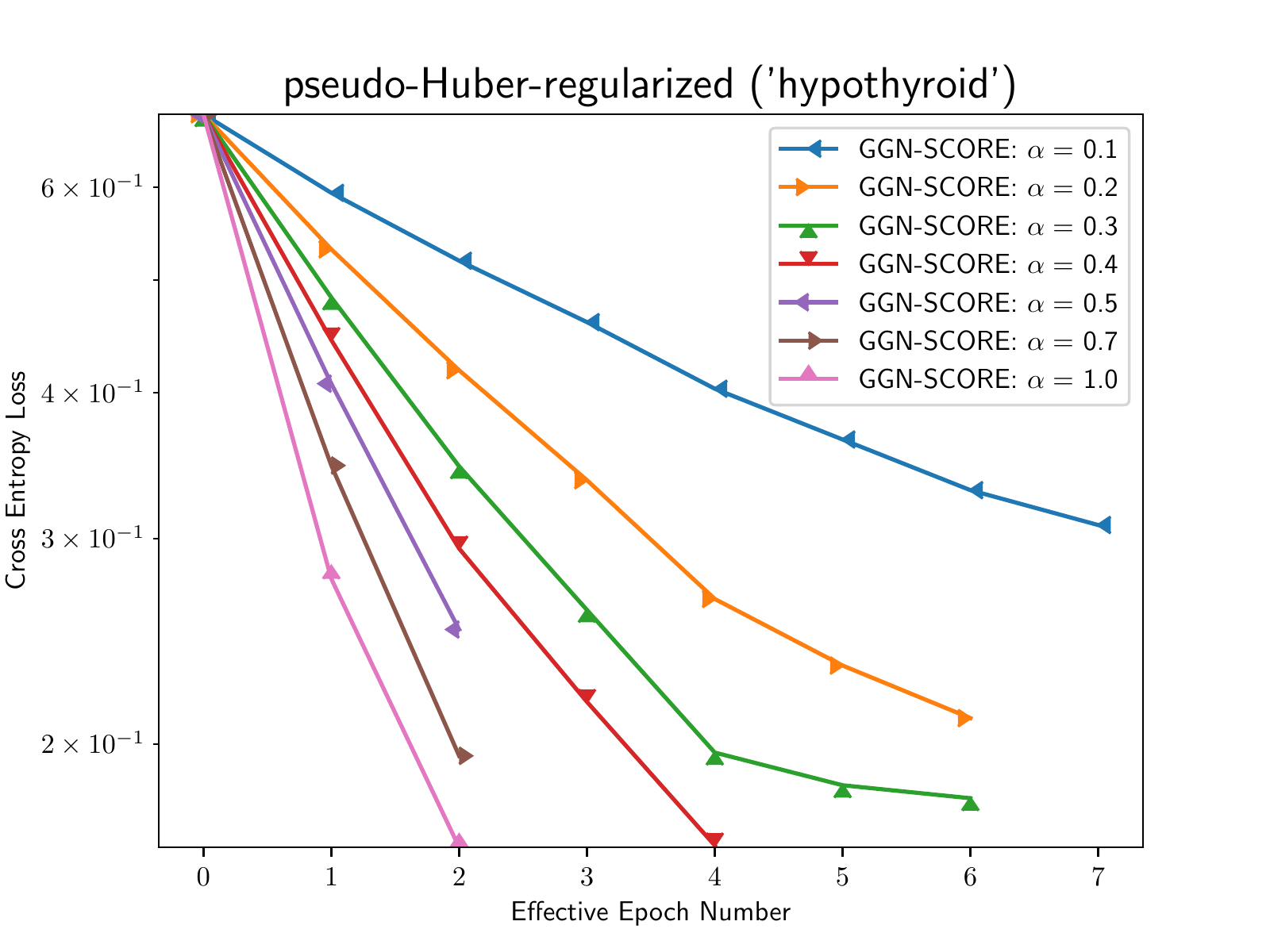}}
			\centerline{\includegraphics[width=1.2\linewidth]{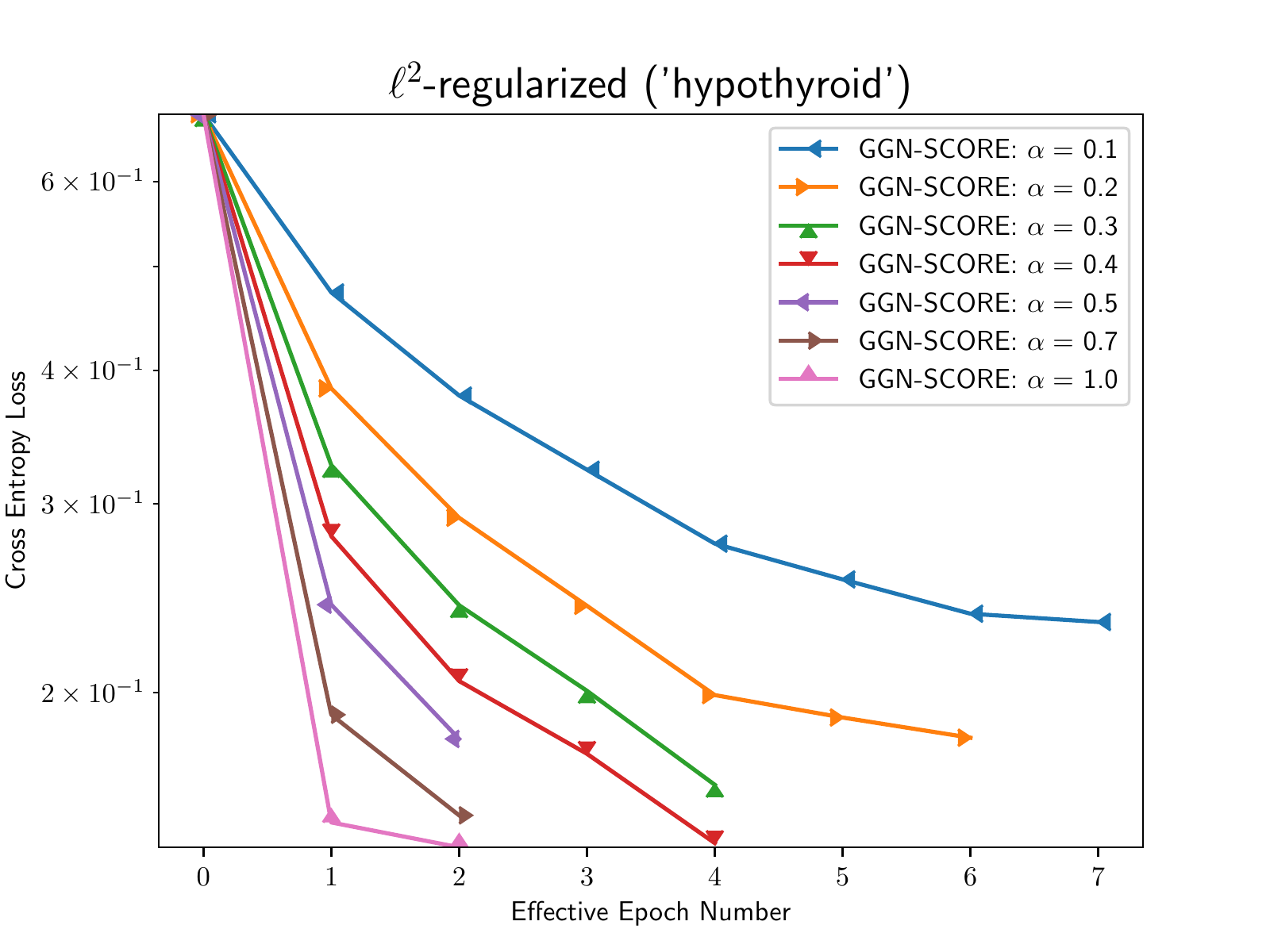}}
			\centerline{\includegraphics[width=1.2\linewidth]{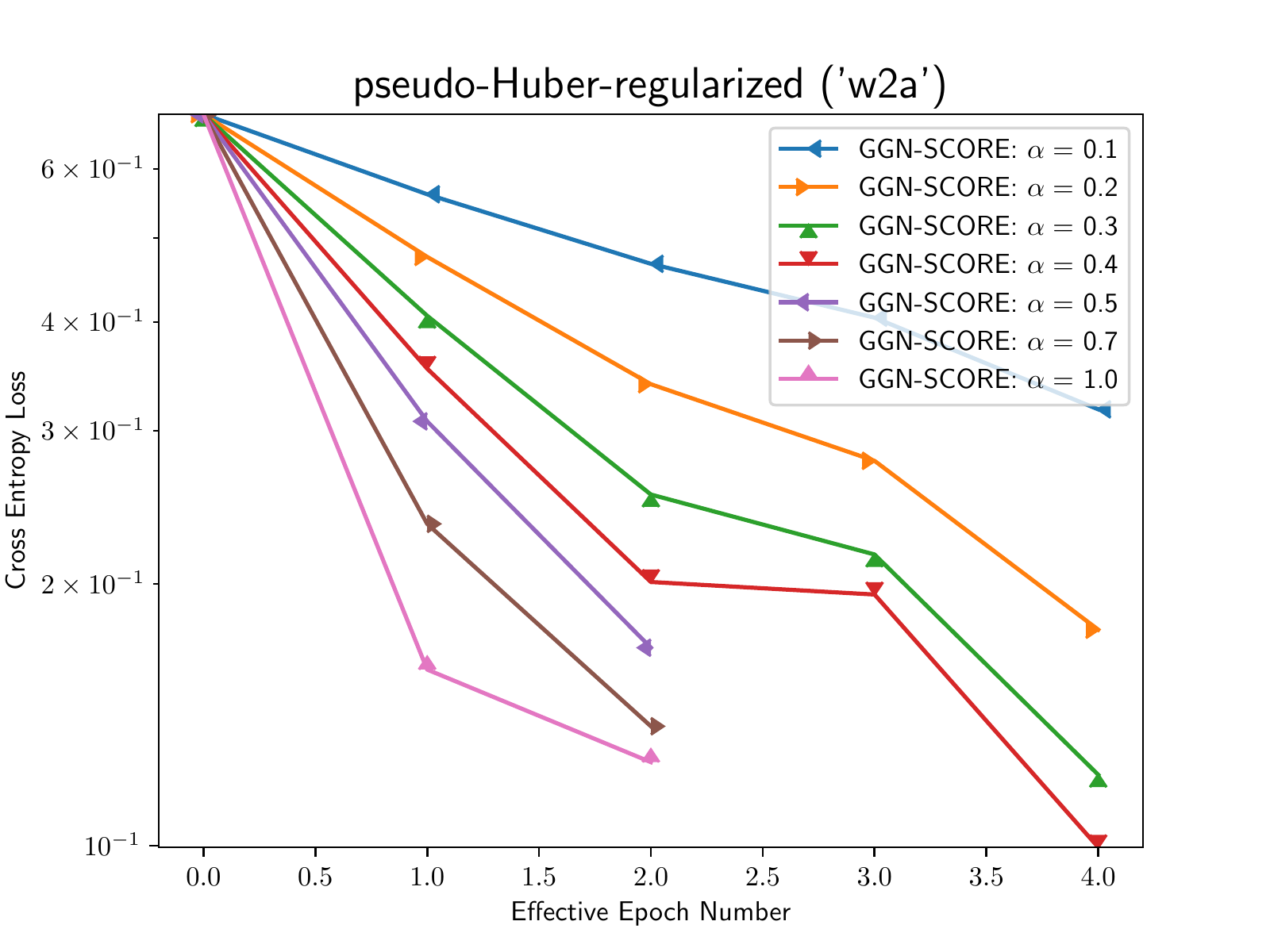}}
			\centerline{\includegraphics[width=1.2\linewidth]{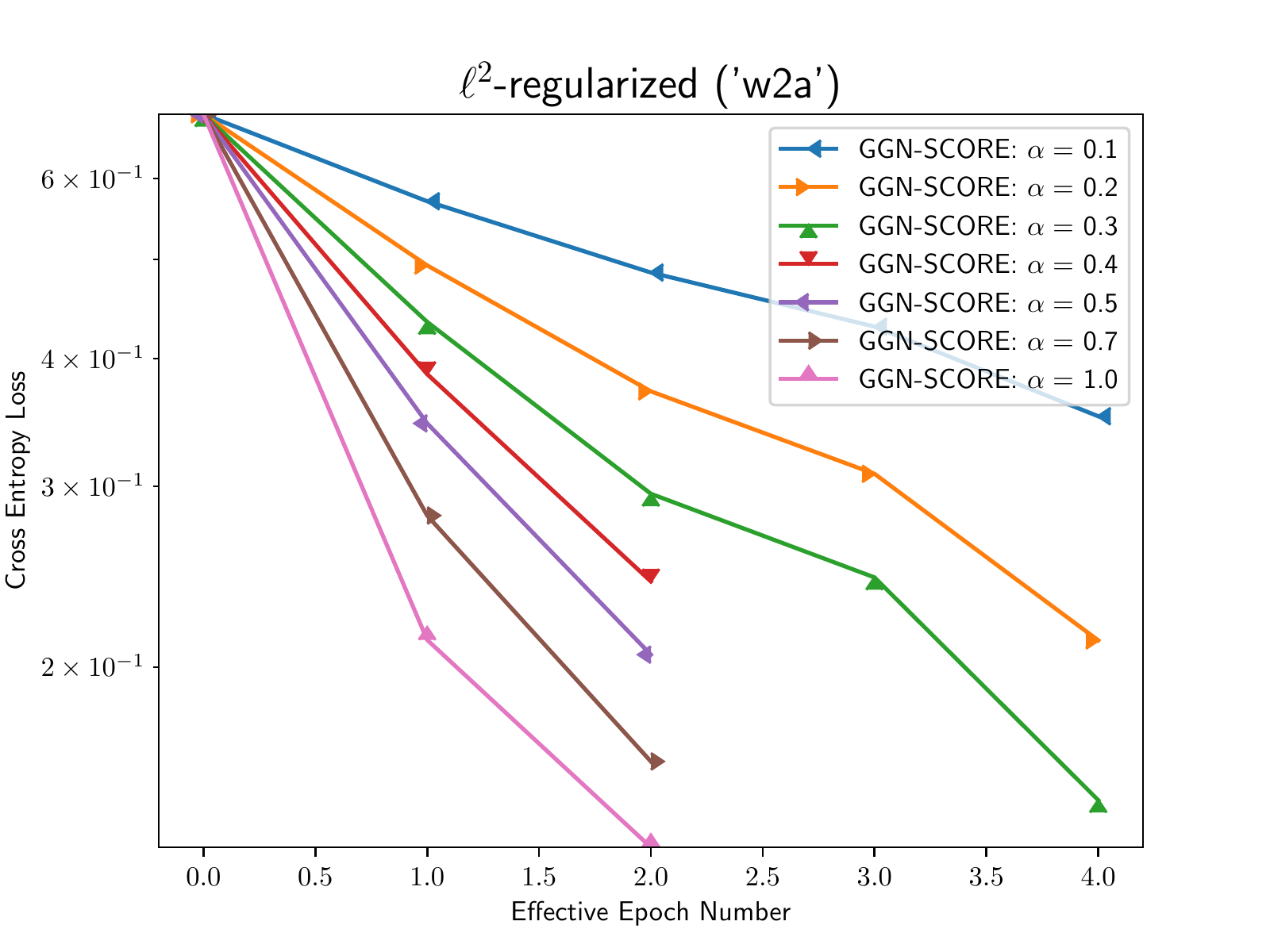}}
		\end{multicols}
	\end{subfigure}
	\caption{Numerical behaviour of GGN-SCORE for different values of $\alpha$ using real datasets. $m = 512$ for \texttt{w2a}, \texttt{dis} and \texttt{hypothyroid}, and $m=2048$ for \texttt{coil2000}.}
	\label{fig:step-scale}
\end{figure*}
We map the input space to higher dimensional feature space using the radial basis function (RBF) kernel $K(\bm{x}_n,\bm{x}_n')=\exp(-\gamma\|\bm{x}_n-\bm{x}_n'\|^2)$ with $\gamma=0.1$. In each experiment, we use the penalty value $\lambda=0.1$ with both pseudo-Huber regularization $h_\mu(\bm{\theta})$ \cite{ostrovskii2021finite} parameterized by $\mu>0$ \cite{charbonnier1997deterministic, hartley_zisserman_2004} and $\ell^2$ regularization $h_2(\bm{\theta})$ defined respectively as
\begin{align*}
	h_\mu(\bm{\theta}) \coloneqq \sqrt{\mu^2 +\bm{\theta}^2}-\mu, \qquad
	h_2(\bm{\theta}) = \|\bm{\theta}\|_2^2 \coloneqq \sum_{i=1}^{n_w}\left\vert\theta_i\right\vert^2,
\end{align*}
with coefficient $\mu=1.0$. Throughout, we choose a batch size, $m$ of $512$ for $\verb*|w2a|$, $\verb*|dis|$ and $\verb*|hypothyroid|$, $2048$ for $\verb*|coil2000|$ and $4096$ for $\verb*|ijcnn1|$, unless otherwise stated. We assume a scaled self-concordant regularization so that $M_h=1$ \cite[Corollary 5.1.3]{nesterov2018lectures}.
\subsection{GGN-SCORE for different values of $\alpha$}
To illustrate the behaviour of GGN-SCORE for different values of $\alpha$ in Algorithm \ref{alg:GGN-SCORE} versus its value indicated in \thmref{thm:main}, we consider the problem of minimizing a regularized strongly convex quadratic function:
\begin{align}
	\min_{\bm{\theta}}\mathcal{L}(\bm{\theta}) \coloneqq \frac{1}{2}\bm{\theta}^\top\hat{\bm{Q}}\bm{\theta} - \bm{p}^\top\bm{\theta} + \lambda h(\bm{\theta}) \equiv g(\bm{\theta}) + \lambda h(\bm{\theta}), \label{eq:testproblem}
\end{align}
where $\hat{\bm{Q}}\in \rr^{n_w\times n_w}$ is symmetric positive definite, $\bm{p}\in\rr^{n_w}$, $g$ is $\gamma_a$-strongly convex and has $\gamma_u$-Lipschitz gradient, with the smallest and largest eigenvalues of $\hat{\bm{Q}}$ corresponding to $\gamma_a$ and  $\gamma_u$, respectively. For this function, suppose the gradient and Hessian of $h(\bm{\theta})$ is known, for example when we choose $h=h_\mu$ or $h=h_2$, we have $\partial \mathcal{L}(\bm{\theta}) = \hat{\bm{Q}}\bm{\theta} - \bm{p} + \lambda\partial h(\bm{\theta})$ and $\partial^2\mathcal{L}(\bm{\theta}) = \hat{\bm{Q}} + \lambda\partial^2h(\bm{\theta})$. The coefficients $\hat{\bm{Q}}$ and $\bm{p}$ form our data, and with $\hat{\bm{Q}}\equiv 0.1\times \bm{M}^\top \bm{M}$, $N=n_w=1000$, we generate the data $\bm{M}\in\rr^{N\times N}$ randomly from a uniform distribution on $[0,1]$ and consider the case $\mathcal{L}^* = \bm{0}$ in which $\bm{p}$ is the zero vector. The optimization variable $\bm{\theta}$ is initialized to a random value generated from a normal distribution with mean $0$ and standard deviation $0.01$. \figref{fig:step-scale-toy} shows the behaviour of GGN-SCORE for this problem with different values of $\alpha$ in $(0,1]$ and $\alpha = \frac{\sqrt{\gamma_a}}{\beta_1}(K+\lambda\gamma_a)$ indicated in \thmref{thm:main}. We experiment with different batch sizes $m\in \{64, 128, 256, 512\}$. One observes from \figref{fig:step-scale-toy} that larger batch size yields better convergence speed when we choose $\alpha = \frac{\sqrt{\gamma_a}}{\beta_1}(K+\lambda\gamma_a)$, validating the recommendation in \remref{thm:remJ}. \figref{fig:step-scale-toy} also shows the comparison of GNN-SCORE with the first-order Adam \cite{kingma2014adam} algorithm, and the quasi-Newton Limited-memory Broyden-Fletcher-Goldfarb-Shanno (L-BFGS) \cite{nocedal1980updating} method using optimally tuned learning rates. While choosing $\alpha = \frac{\sqrt{\gamma_a}}{\beta_1}(K+\lambda\gamma_a)$ yields the kind of convergence shown in \thmref{thm:main}, \figref{fig:step-scale-toy} shows that by choosing $\alpha$ in $(0,1]$, we can similarly achieve a great convergence that scale well with the problem.

Strictly speaking, the value of $\alpha$ indicated in \thmref{thm:main} is not of practical interest, as it contains terms that may not be straightforward to retrieve in practice. In practice, we treat $\alpha$ as a hyperparameter that takes a \emph{fixed} positive value in $(0, 1]$. For an adaptive step-size selection rule, such as that in Line \ref{alg:step-size} of Algorithm \ref{alg:GGN-SCORE}, choosing a suitable scaling constant such as $\alpha$ is often straightforward, as the main step-size selection task is accomplished by the defined rule. We show the behaviour of GGN-SCORE on the real datasets for different values of $\alpha$ in $(0, 1]$ in \figref{fig:step-scale}. In general, a suitable scaling factor $\alpha$ should be selected based on the application demands.
\subsection{Comparison with SGD, Adam, and L-BFGS methods on real datasets}
Using the real datasets, we compare GGN-SCORE for solving \eqref{eq:emp} with results from the SGD, Adam, and the L-BFGS algorithms using optimally tuned learning rates. We also consider the training problem of a neural network with two hidden layers of dimensions $(2, 128)$, respectively for the $\verb*|coil2000|$ dataset, one hidden layer with dimension $1$ for the $\verb*|ijcnn1|$ dataset, and two hidden layers of dimensions $(4, 128)$, respectively for the remaining datasets. We use ReLU activation functions in the hidden layers of the networks, and the network is overparameterized for $\verb*|dis|$, $\verb*|hypothyroid|$, $\verb*|w2a|$ and $\verb*|coil2000|$ with $25425$, $21529$, $23497$ and $16229$ trainable parameters, respectively. We choose $\alpha\in\{0.2, 0.5\}$ for GGN-SCORE. Minimization variables are initialized to the zero vector for all the methods. The neural network training problems are solved under the same settings. The results are respectively displayed in \figref{fig:lossplots} and \figref{fig:nonconvex} for the convex and non-convex cases.
\begin{figure*}
	\vspace{1.4\baselineskip}
	\begin{subfigure}{1.0\textwidth}
		\vspace{-1.7\baselineskip}
		\begin{multicols}{4}
			\centerline{\includegraphics[width=1.2\linewidth]{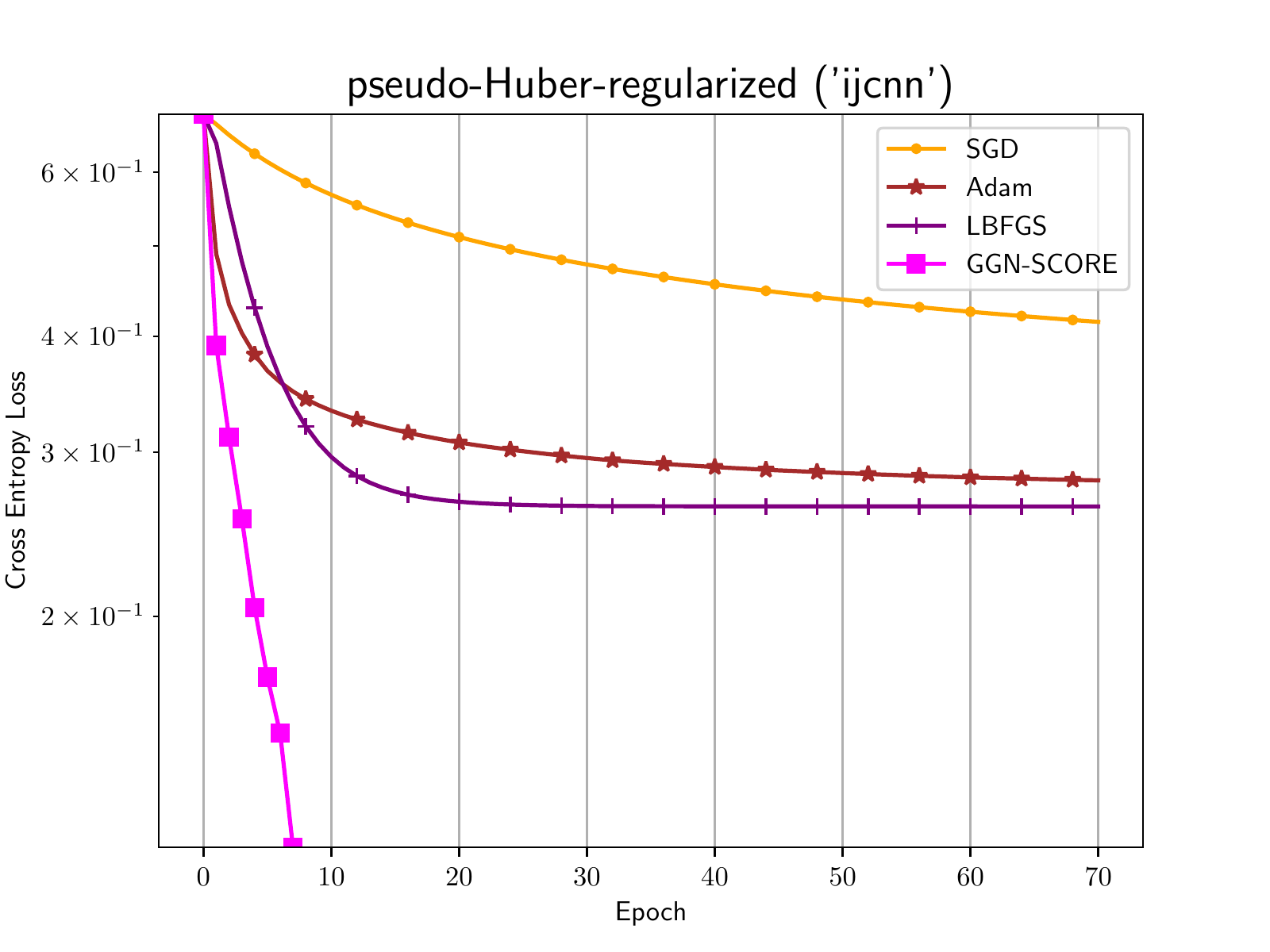}}
			\centerline{\includegraphics[width=1.2\linewidth]{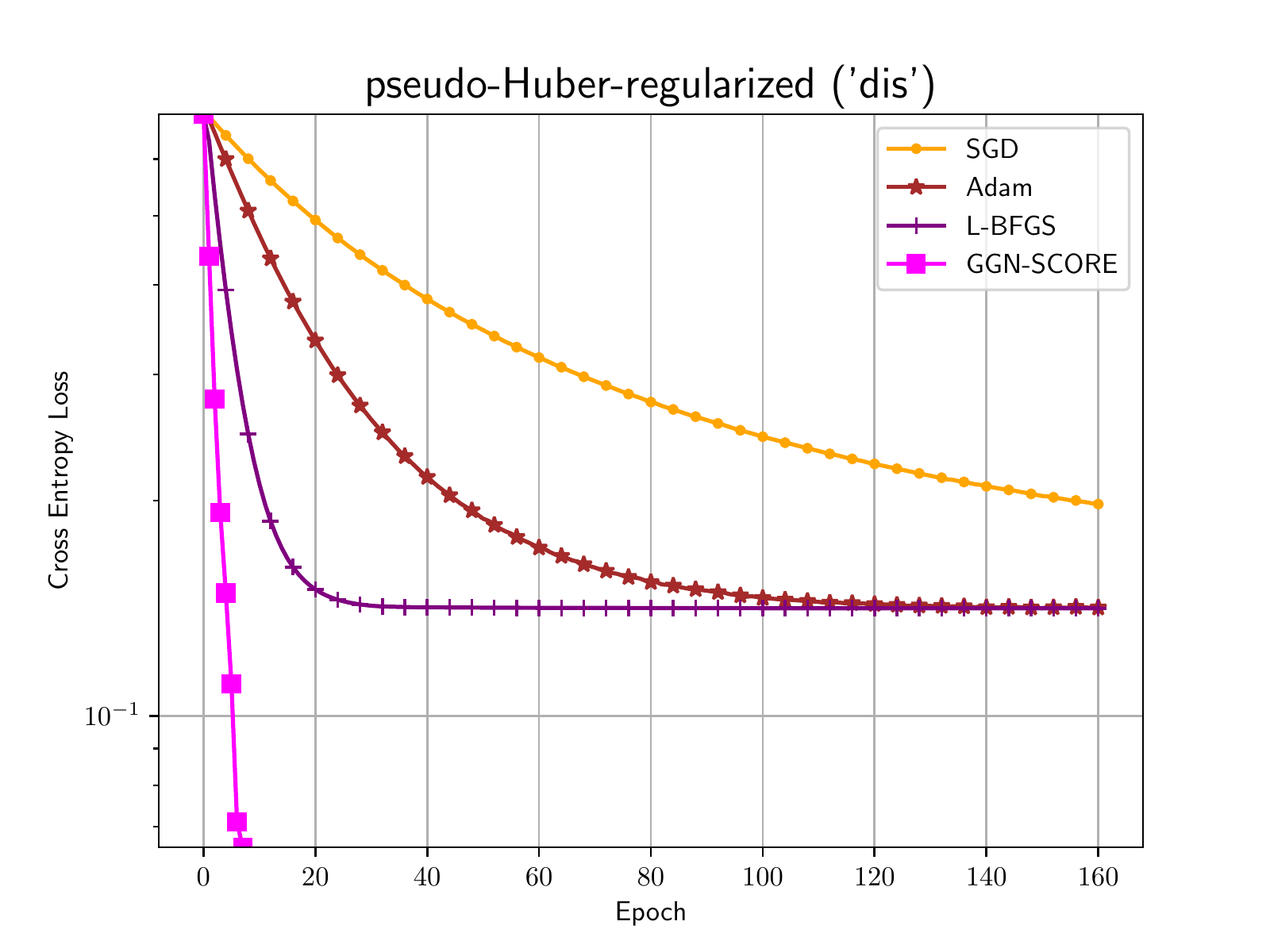}}
			\centerline{\includegraphics[width=1.2\linewidth]{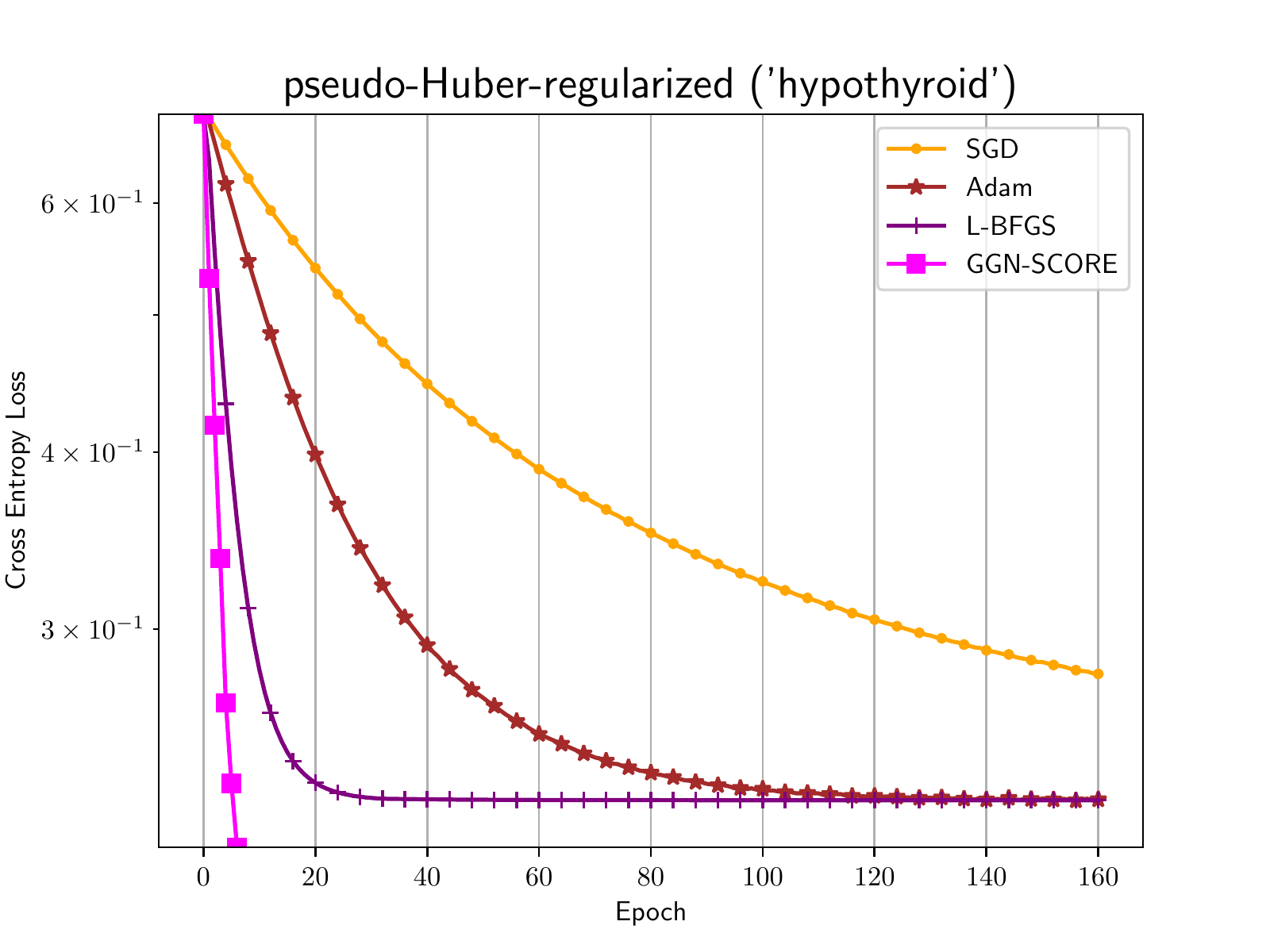}}
			\centerline{\includegraphics[width=1.2\linewidth]{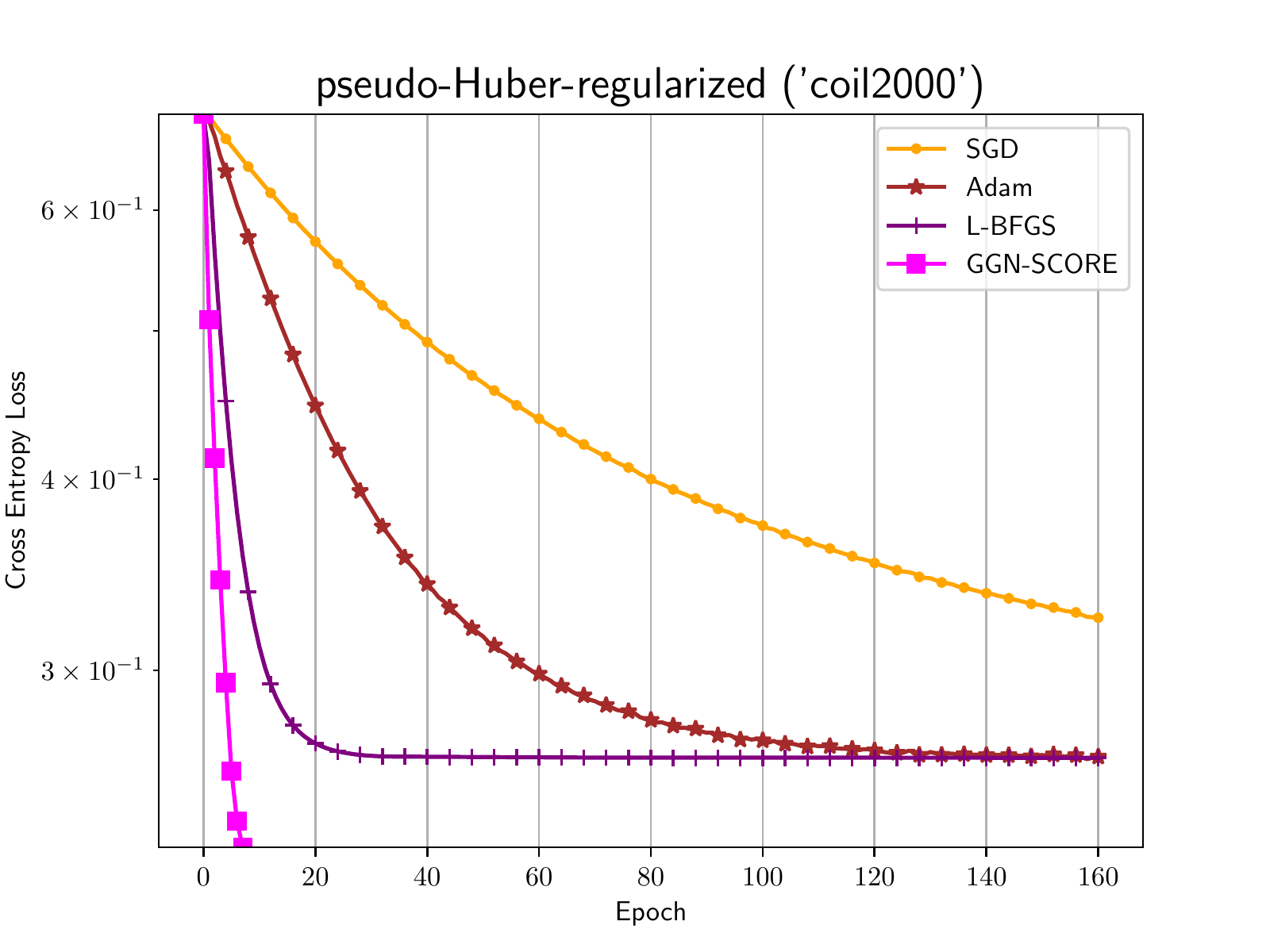}}
			\centerline{\includegraphics[width=1.2\linewidth]{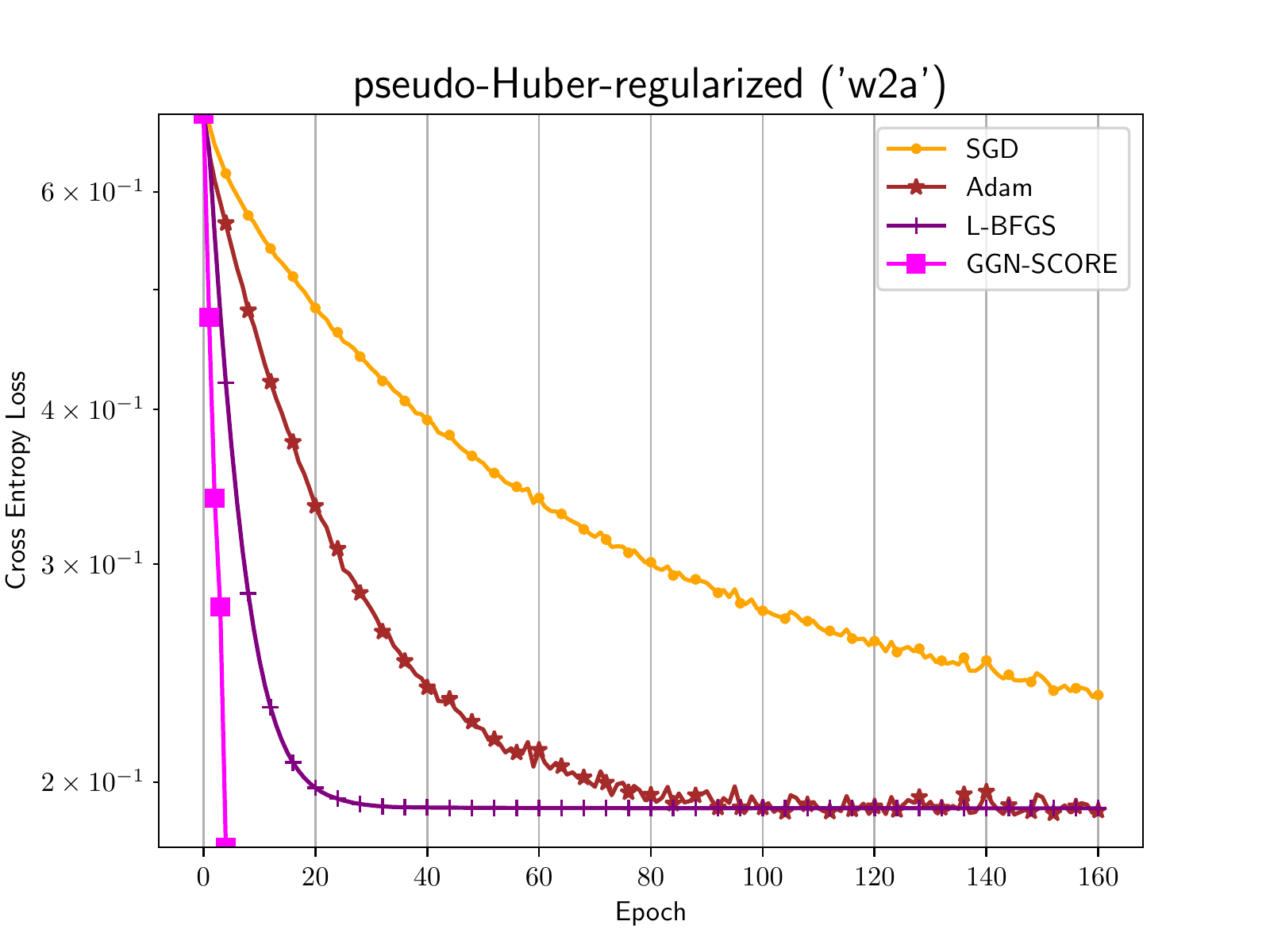}}

			\centerline{\includegraphics[width=1.2\linewidth]{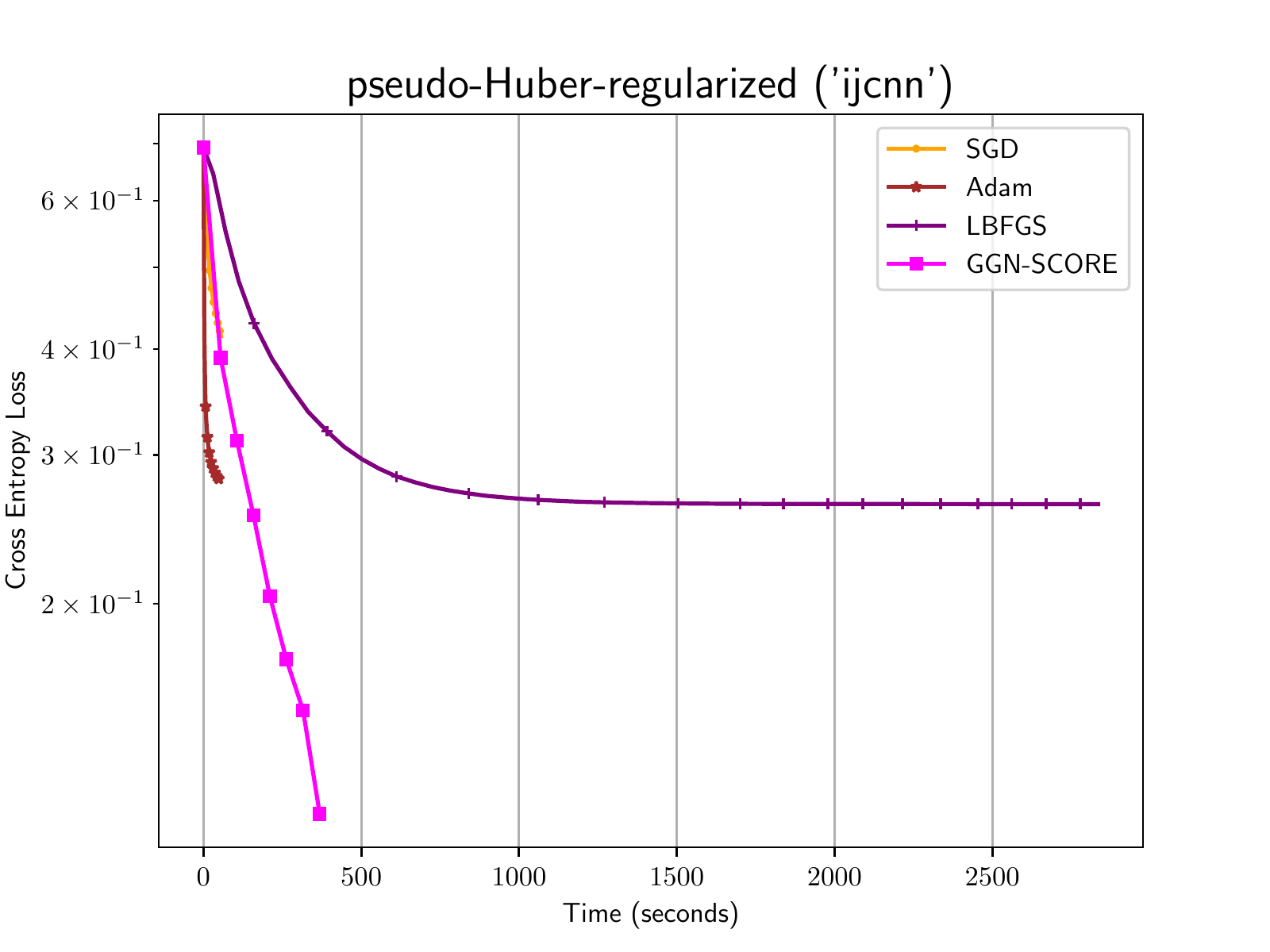}}
			\centerline{\includegraphics[width=1.2\linewidth]{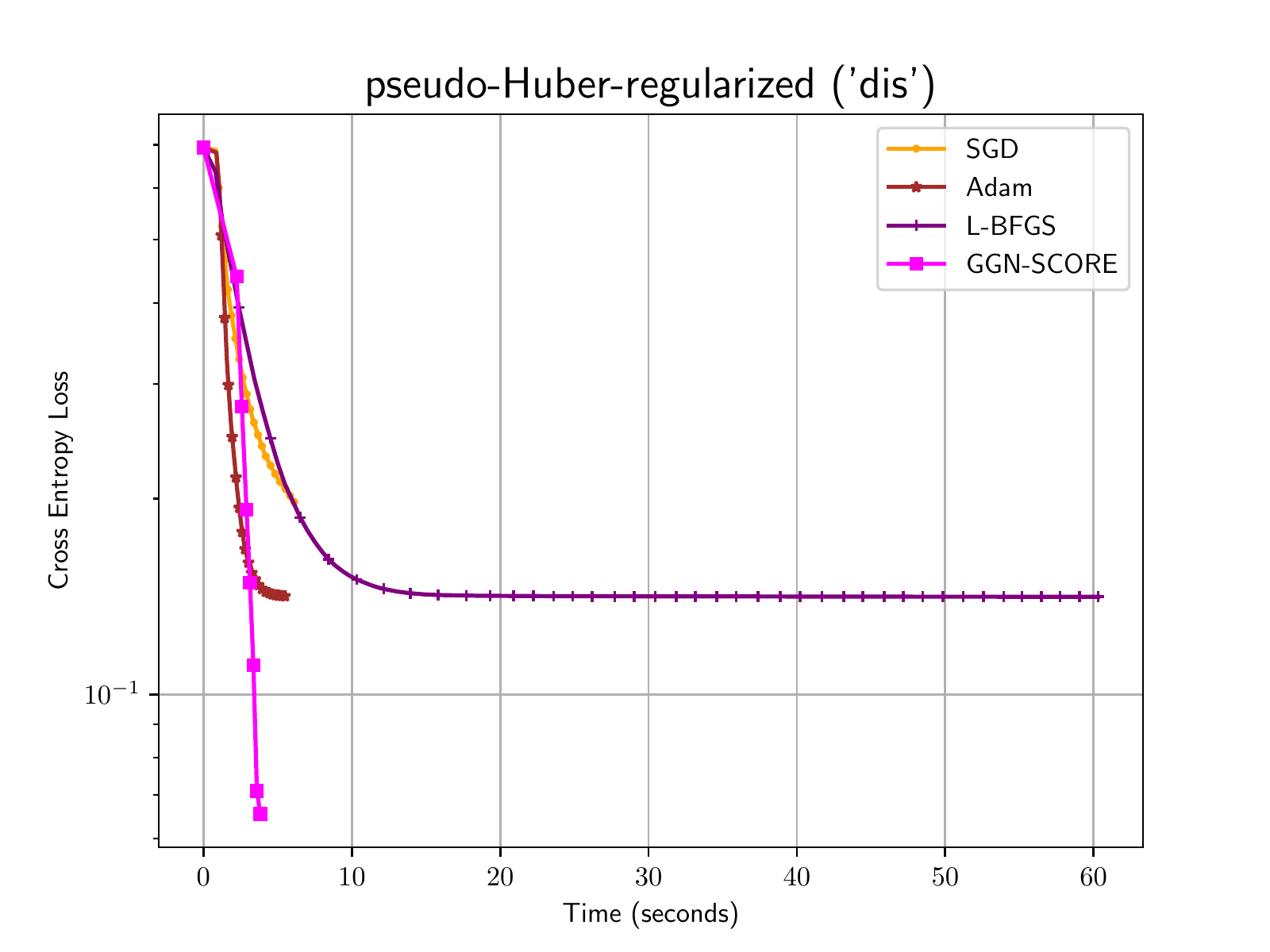}}
			\centerline{\includegraphics[width=1.2\linewidth]{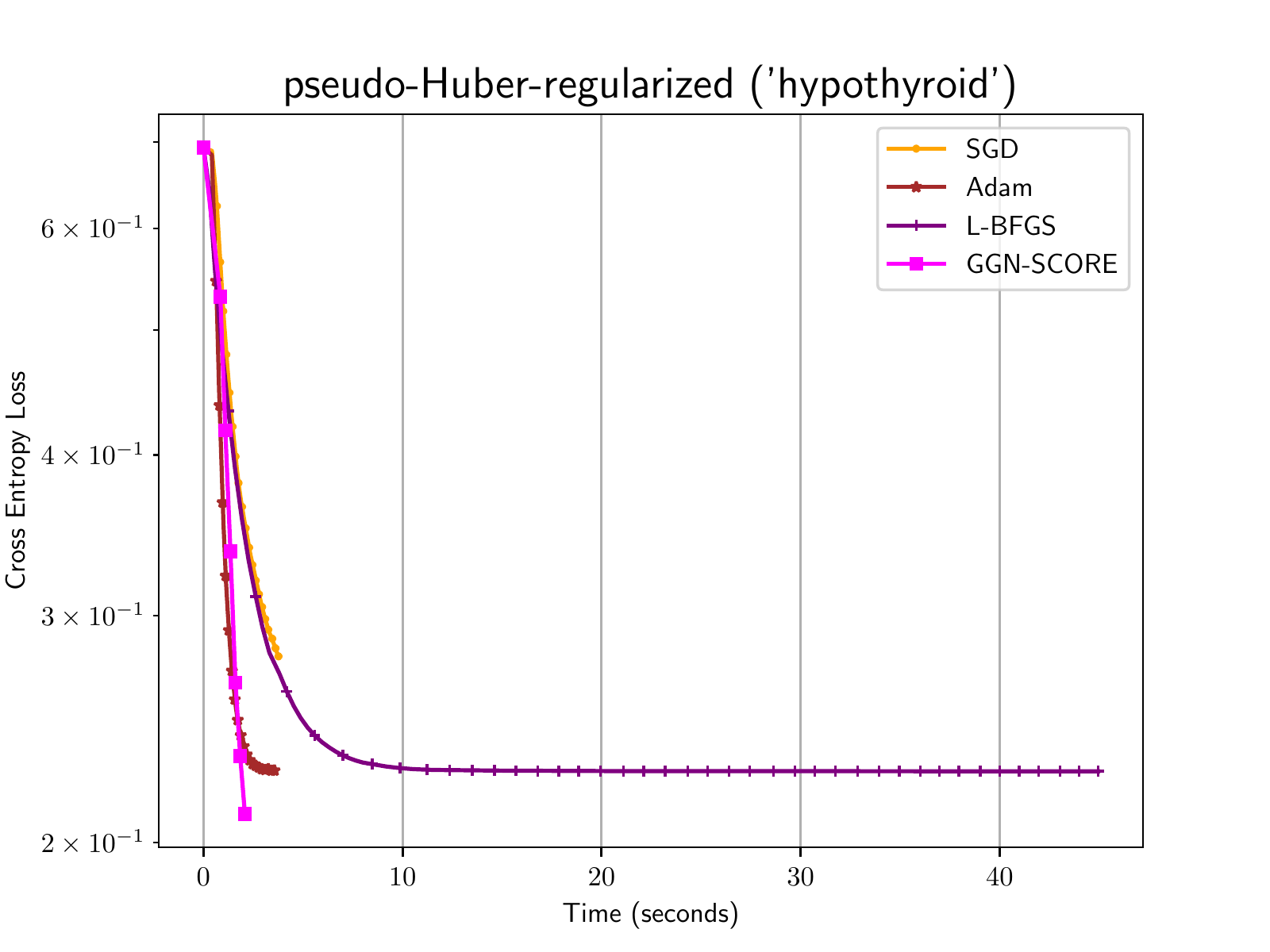}}
			\centerline{\includegraphics[width=1.2\linewidth]{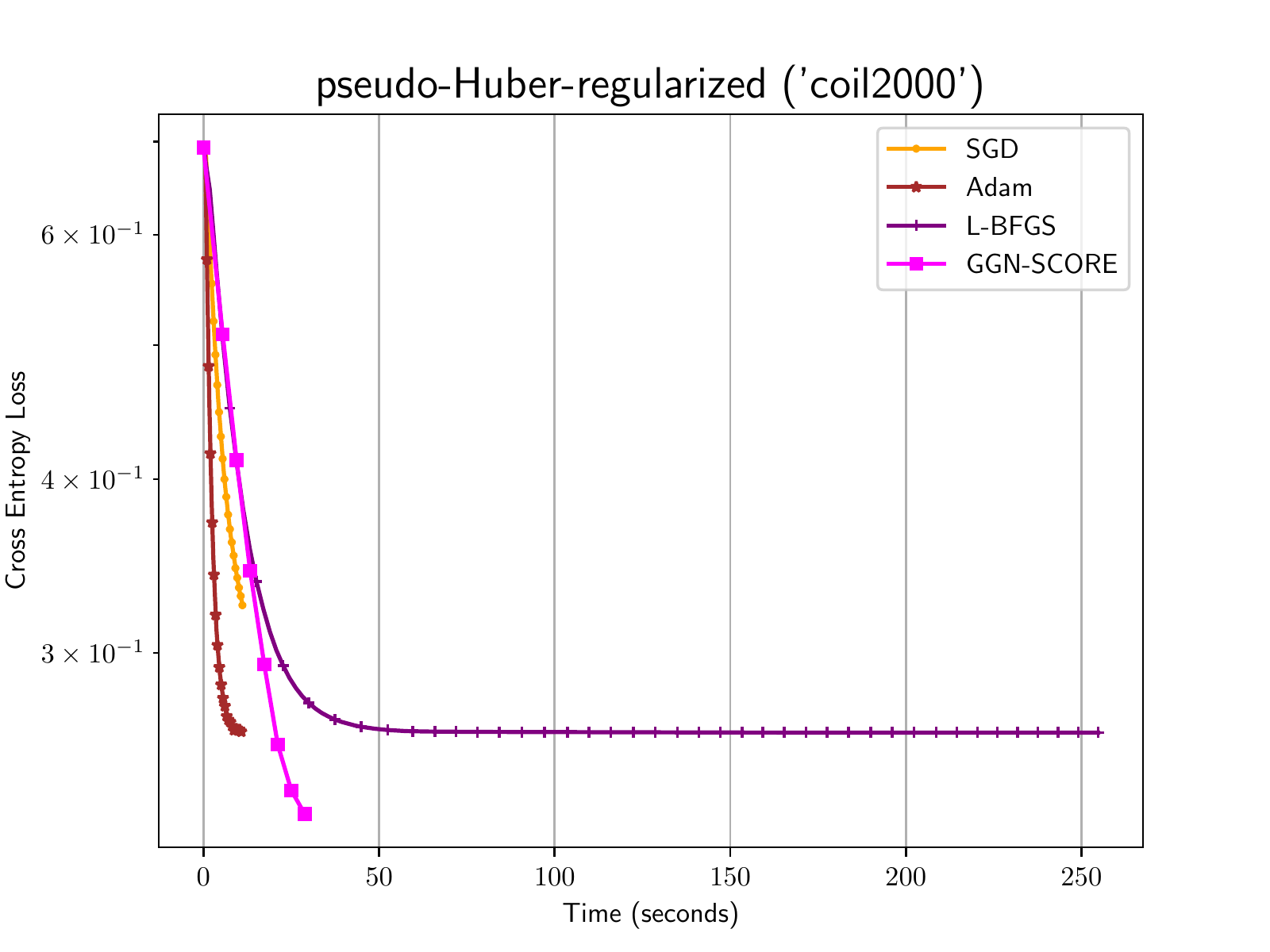}}
			\centerline{\includegraphics[width=1.2\linewidth]{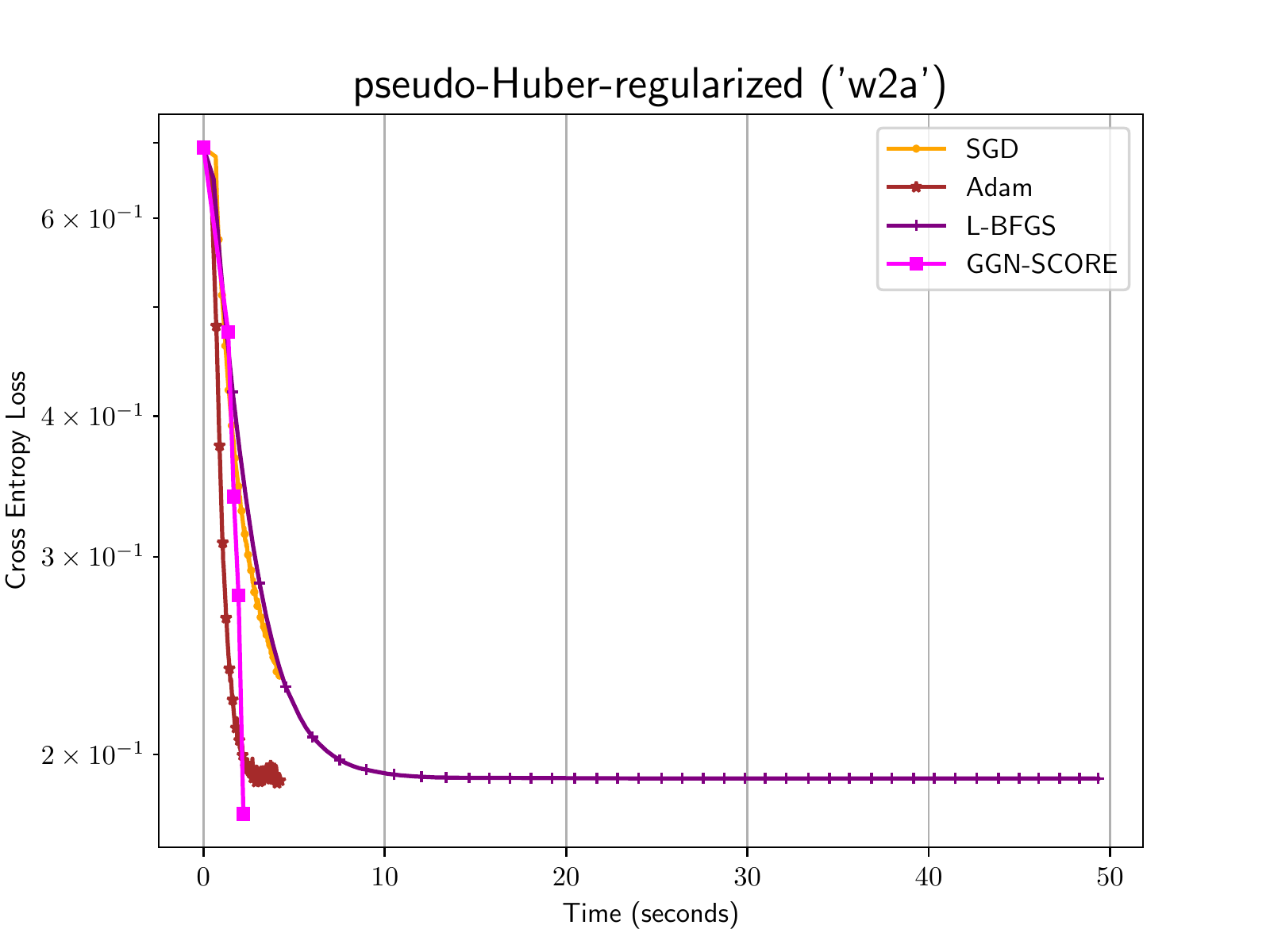}}

			\centerline{\includegraphics[width=1.2\linewidth]{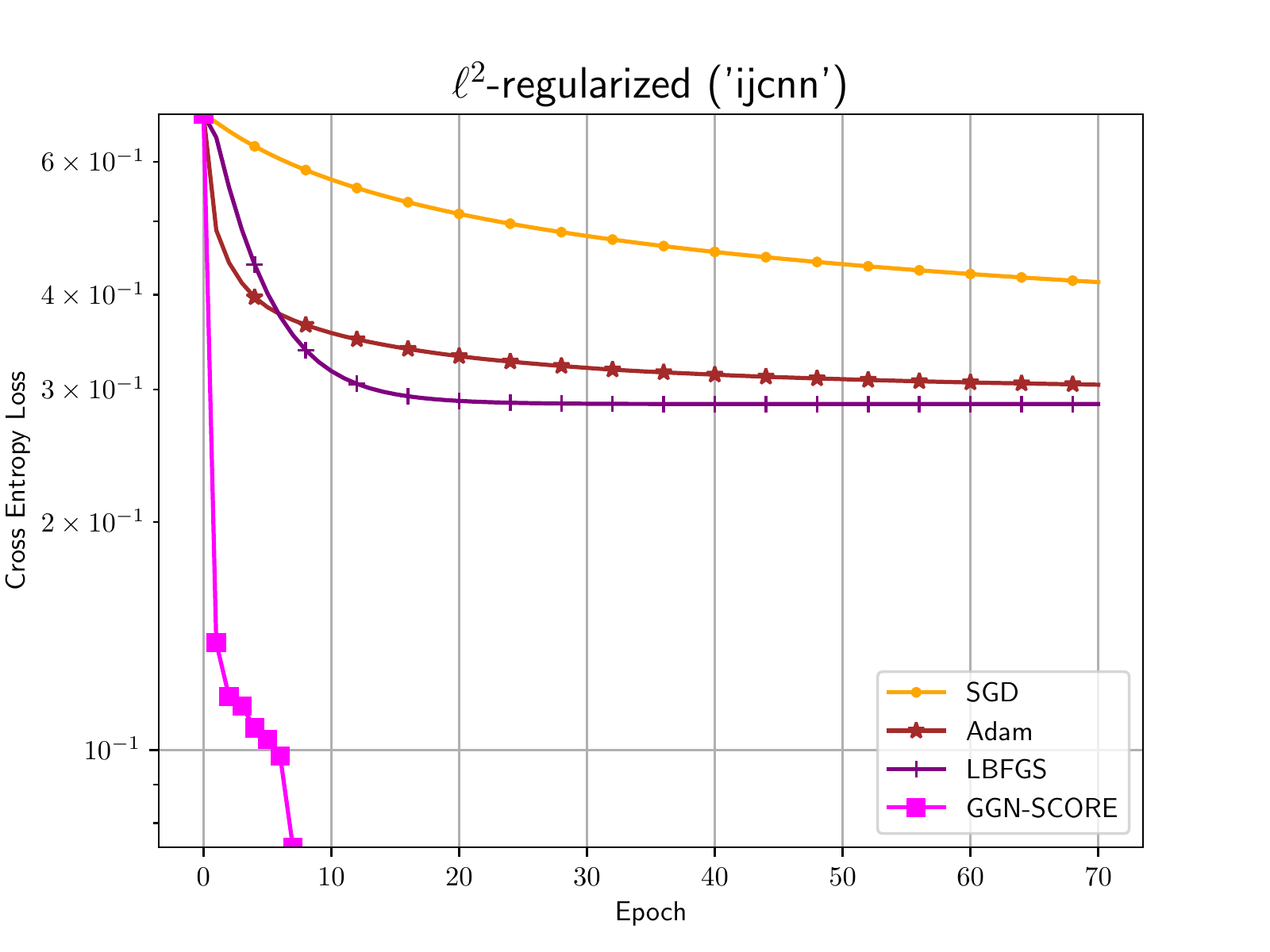}}
			\centerline{\includegraphics[width=1.2\linewidth]{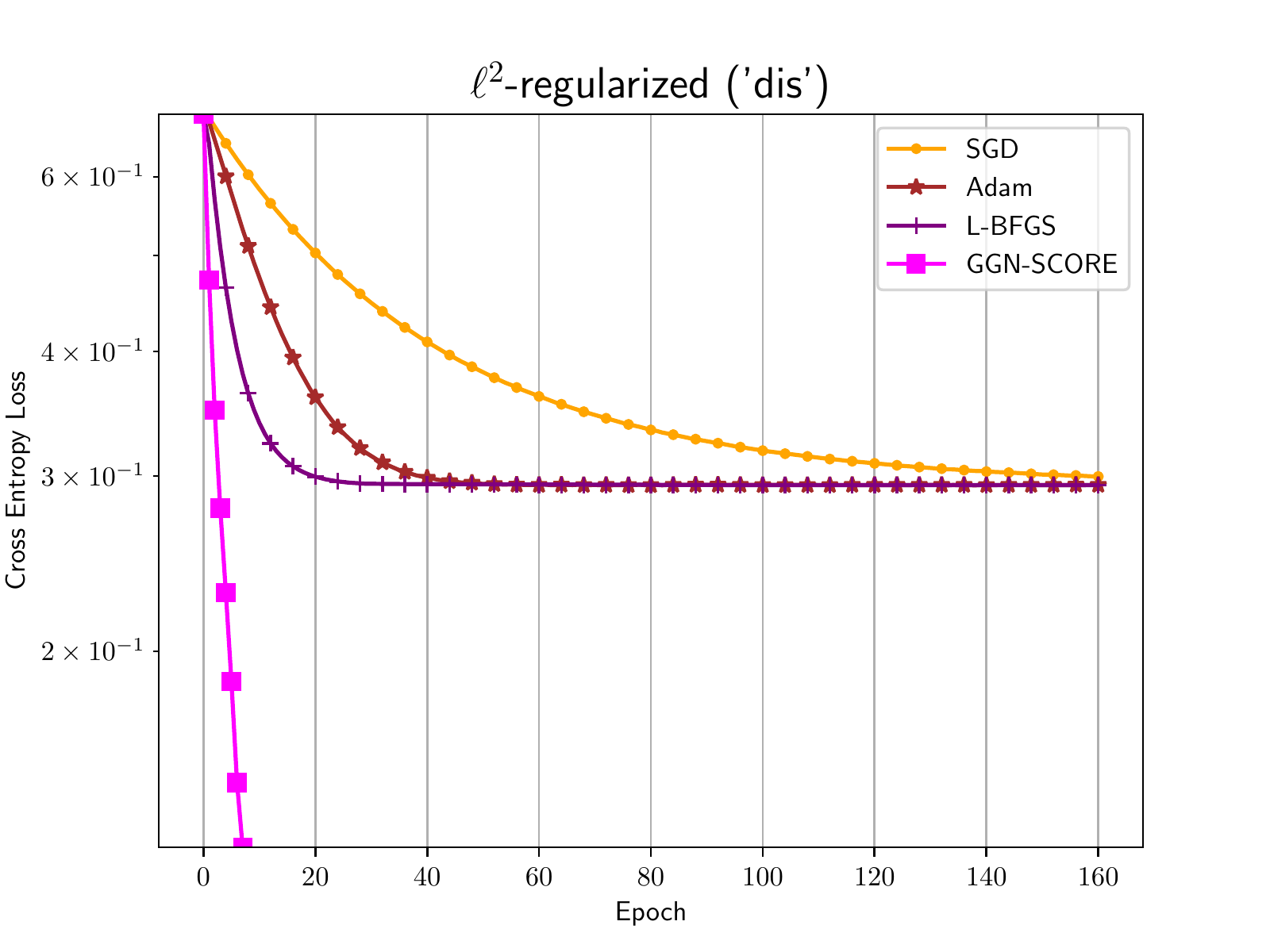}}
			\centerline{\includegraphics[width=1.2\linewidth]{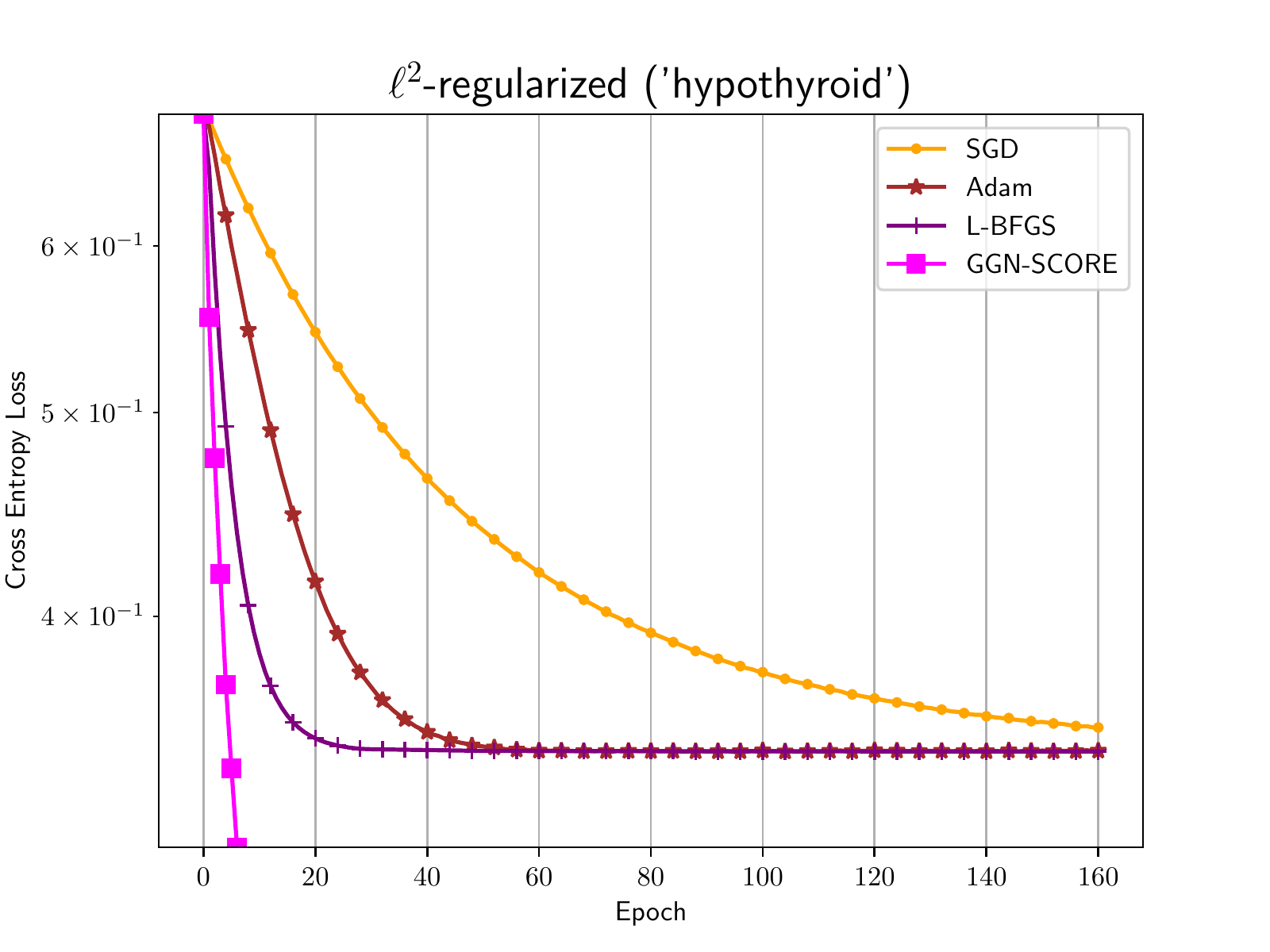}}
			\centerline{\includegraphics[width=1.2\linewidth]{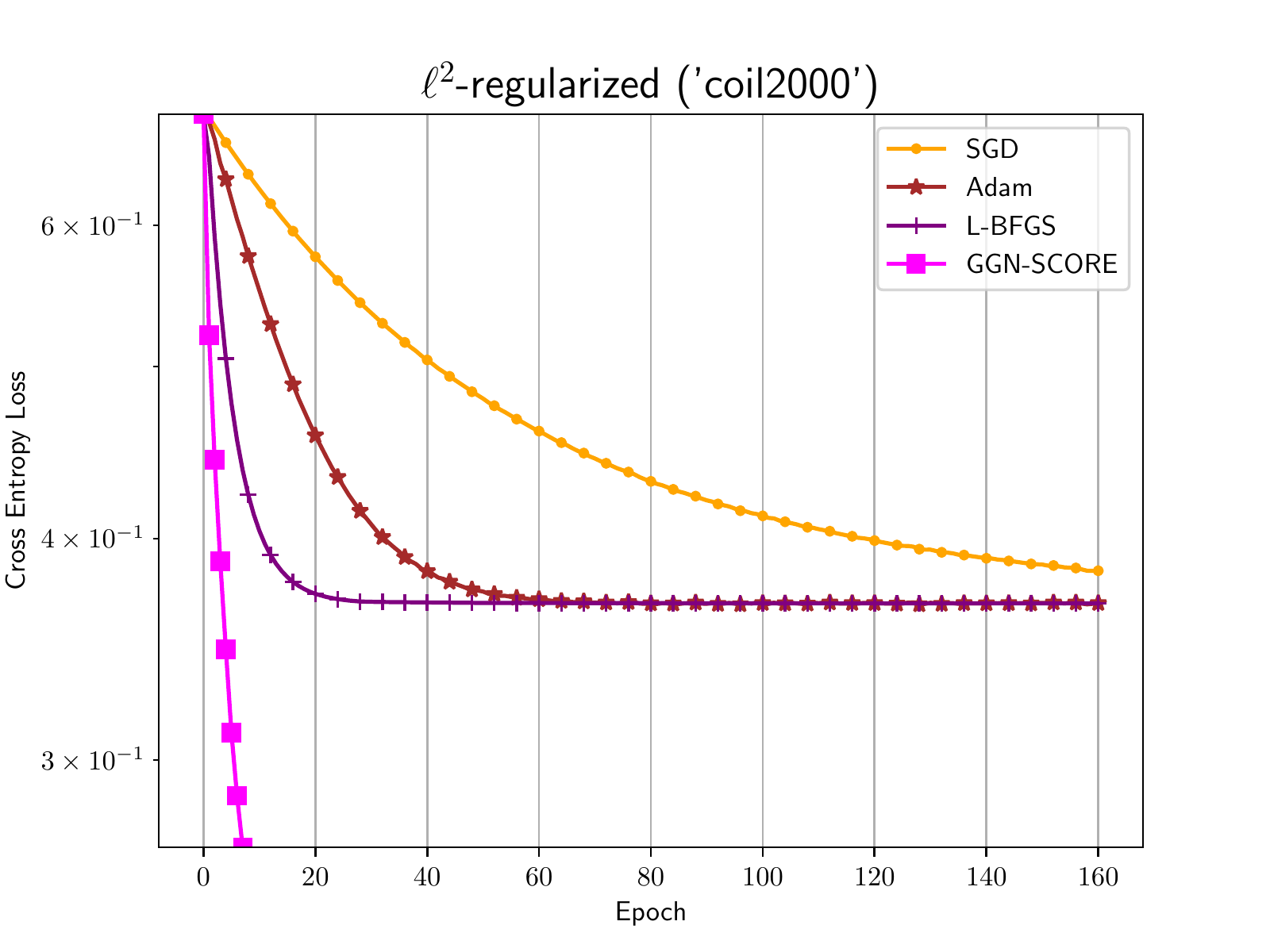}}
			\centerline{\includegraphics[width=1.2\linewidth]{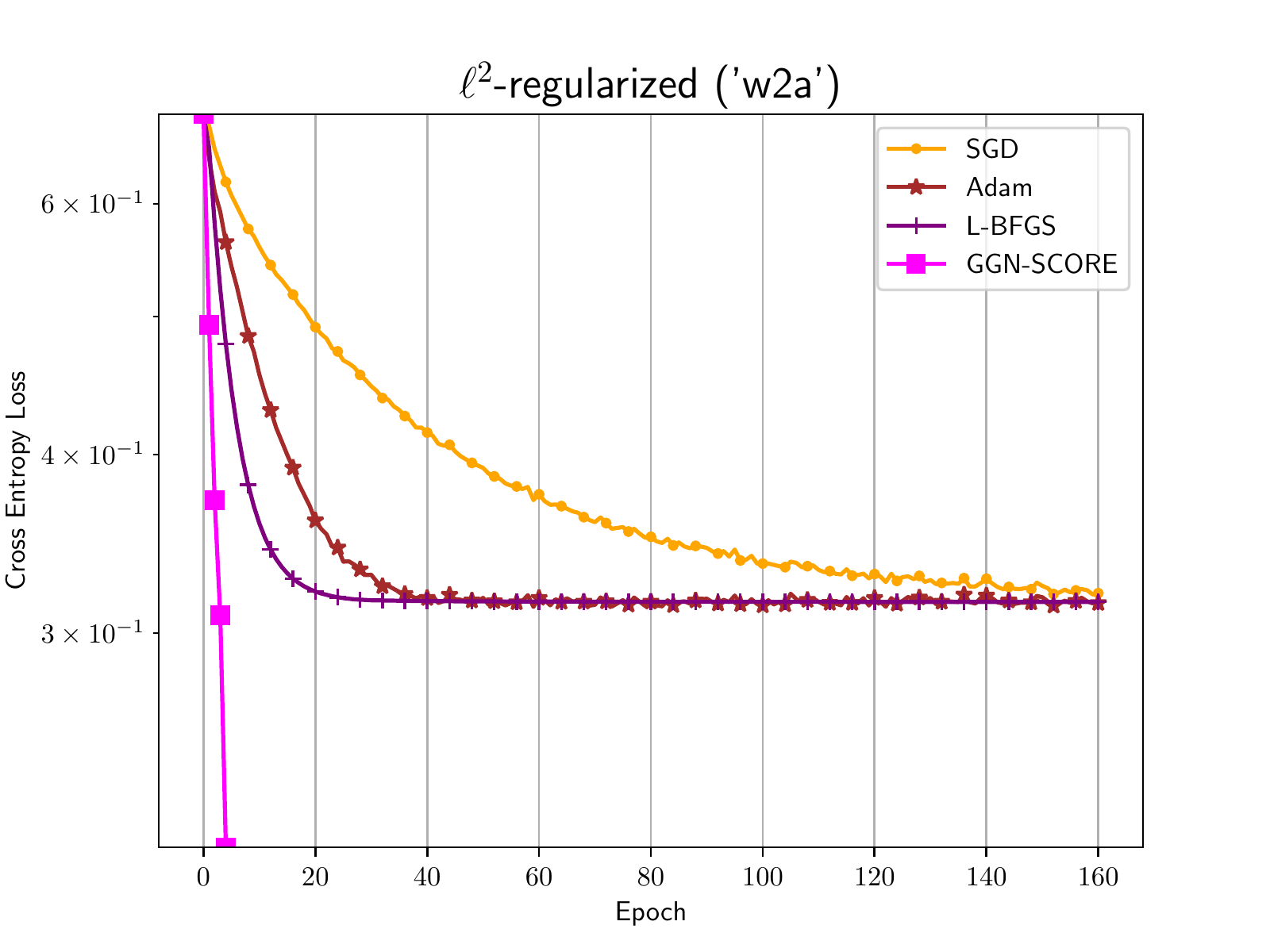}}

			\centerline{\includegraphics[width=1.2\linewidth]{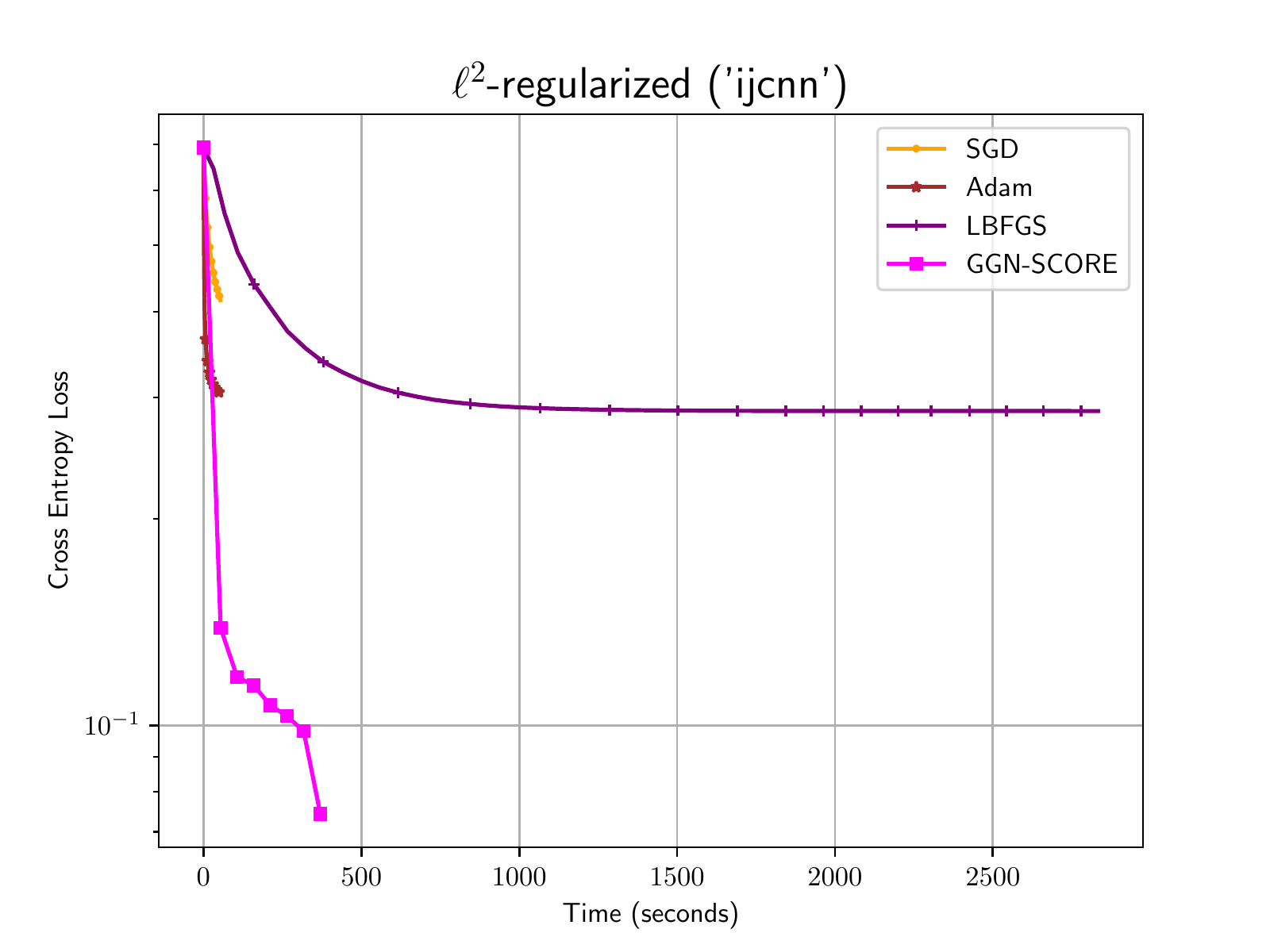}}
			\centerline{\includegraphics[width=1.2\linewidth]{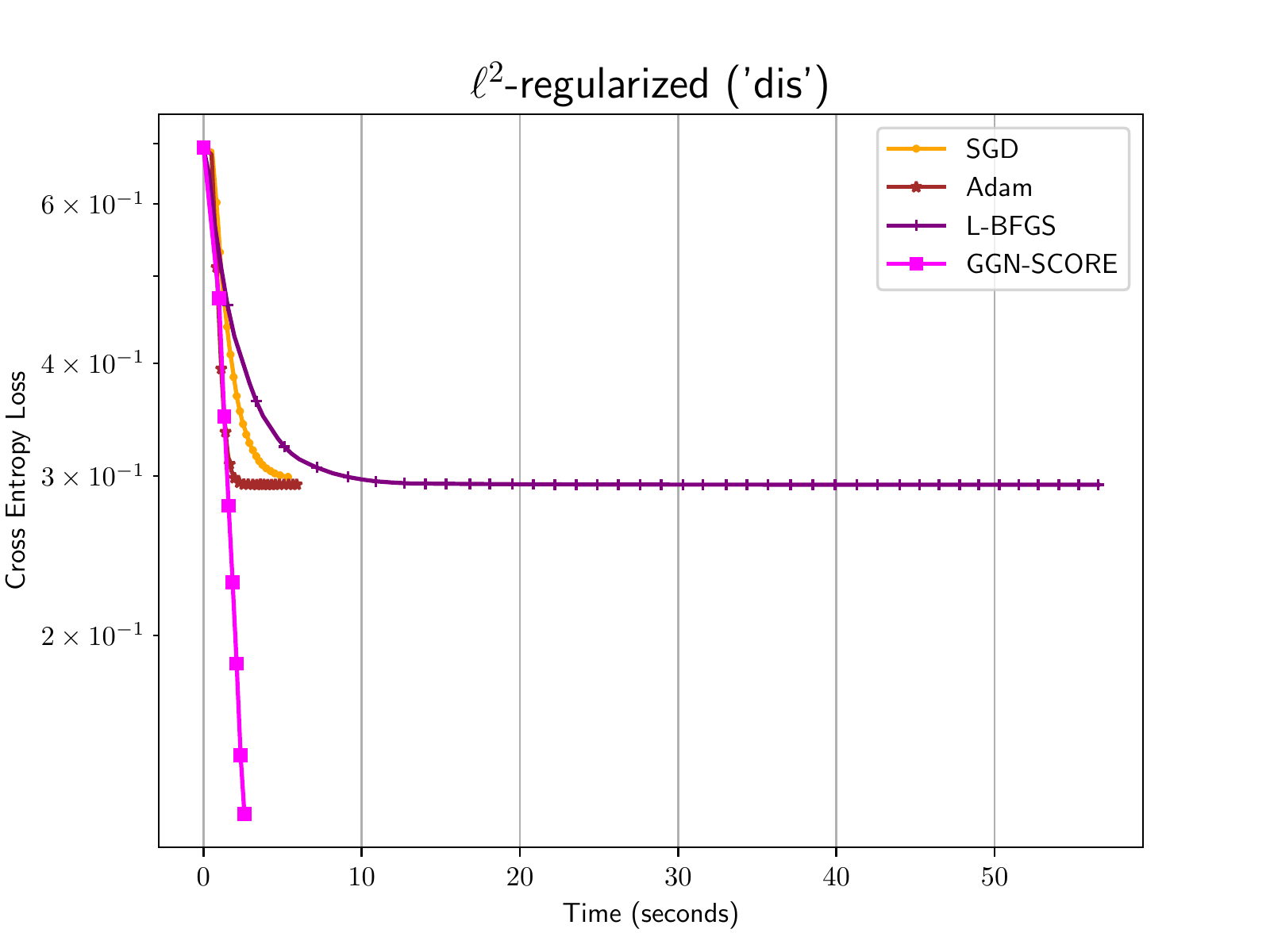}}
			\centerline{\includegraphics[width=1.2\linewidth]{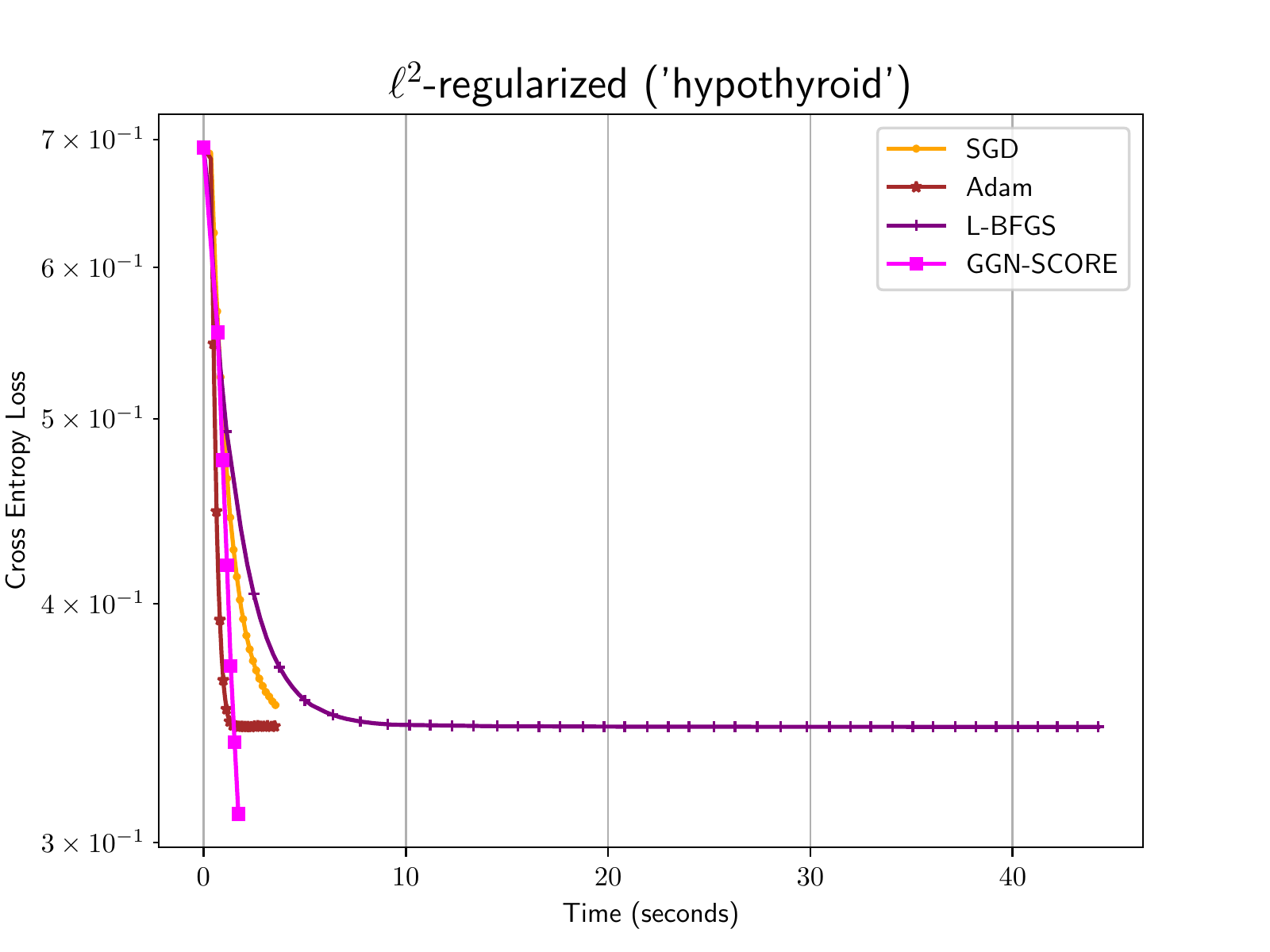}}
			\centerline{\includegraphics[width=1.2\linewidth]{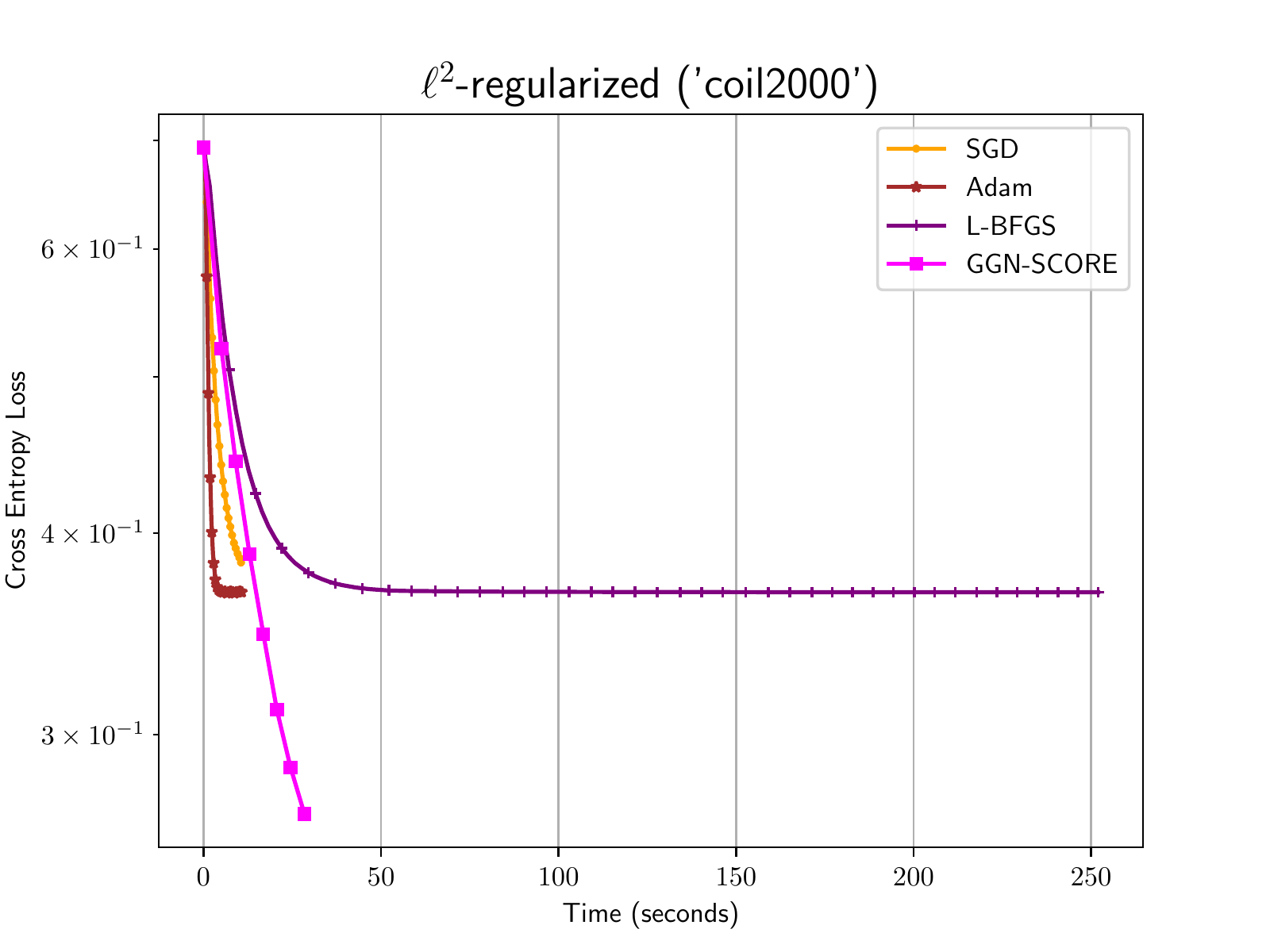}}
			\centerline{\includegraphics[width=1.2\linewidth]{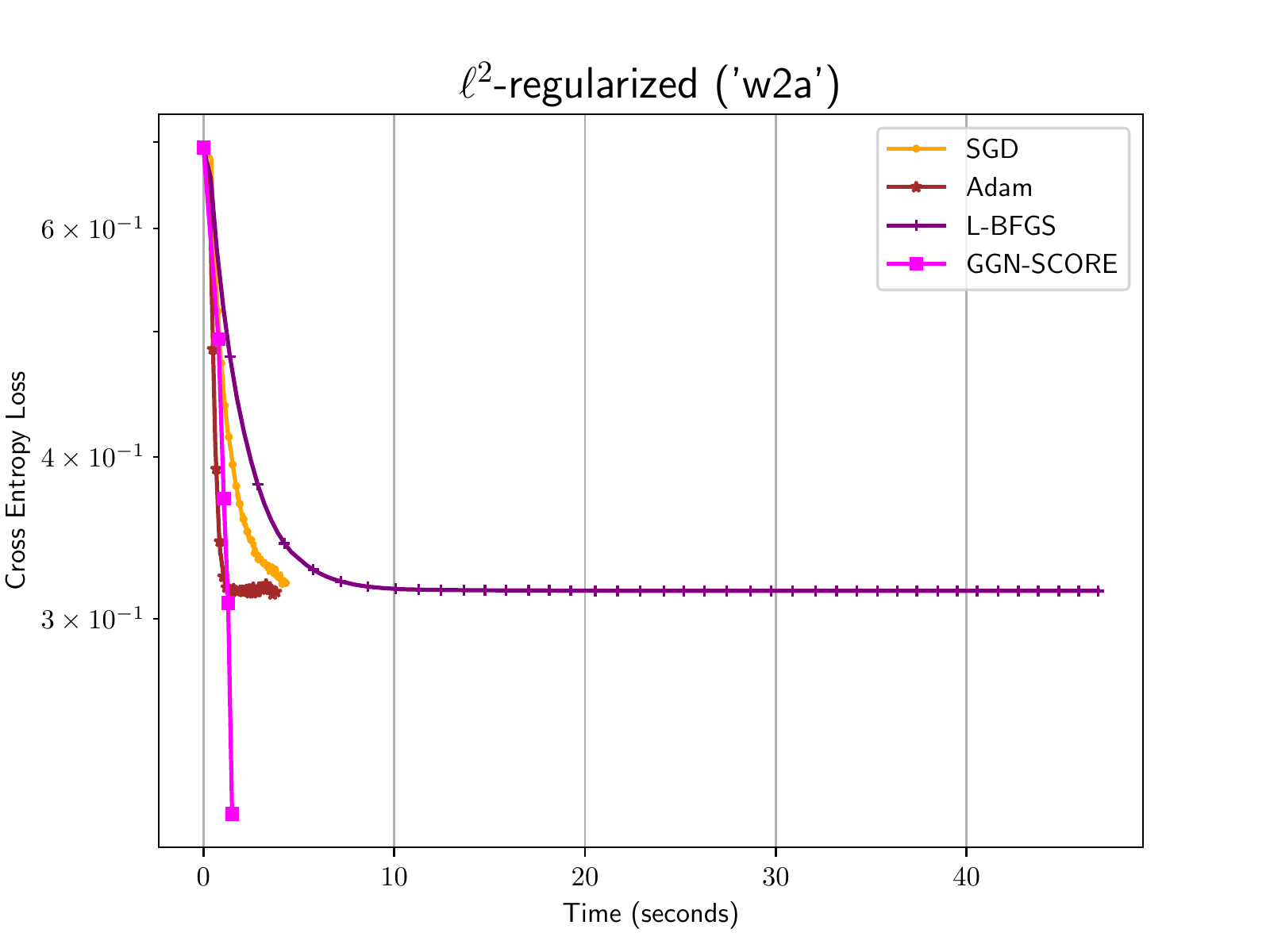}}
		\end{multicols}
	\end{subfigure}
	\caption{Convergence curves for GGN-SCORE, SGD, Adam and L-BFGS in the proposed convex problem. $m = 512$ for \texttt{w2a}, \texttt{dis} and \texttt{hypothyroid}, $m=2048$ for \texttt{coil2000}, and $m=4096$ for \texttt{ijcnn1}.}
	\label{fig:lossplots}
\end{figure*}
\begin{figure*}
	\begin{subfigure}{1.0\textwidth}
		\vspace{-2.0\baselineskip}
		\begin{multicols}{4}
			\centerline{\includegraphics[width=1.2\linewidth]{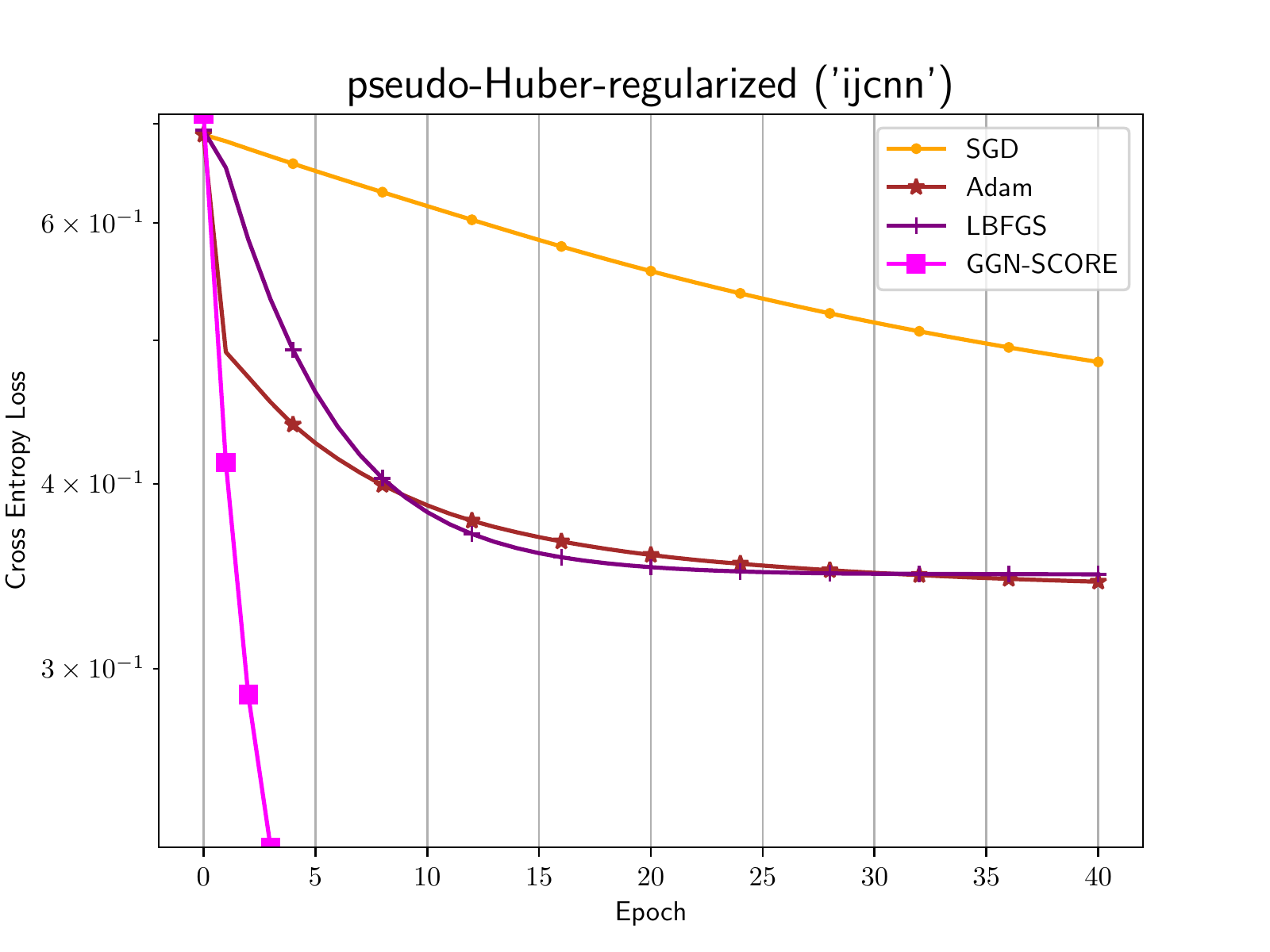}}
			\centerline{\includegraphics[width=1.2\linewidth]{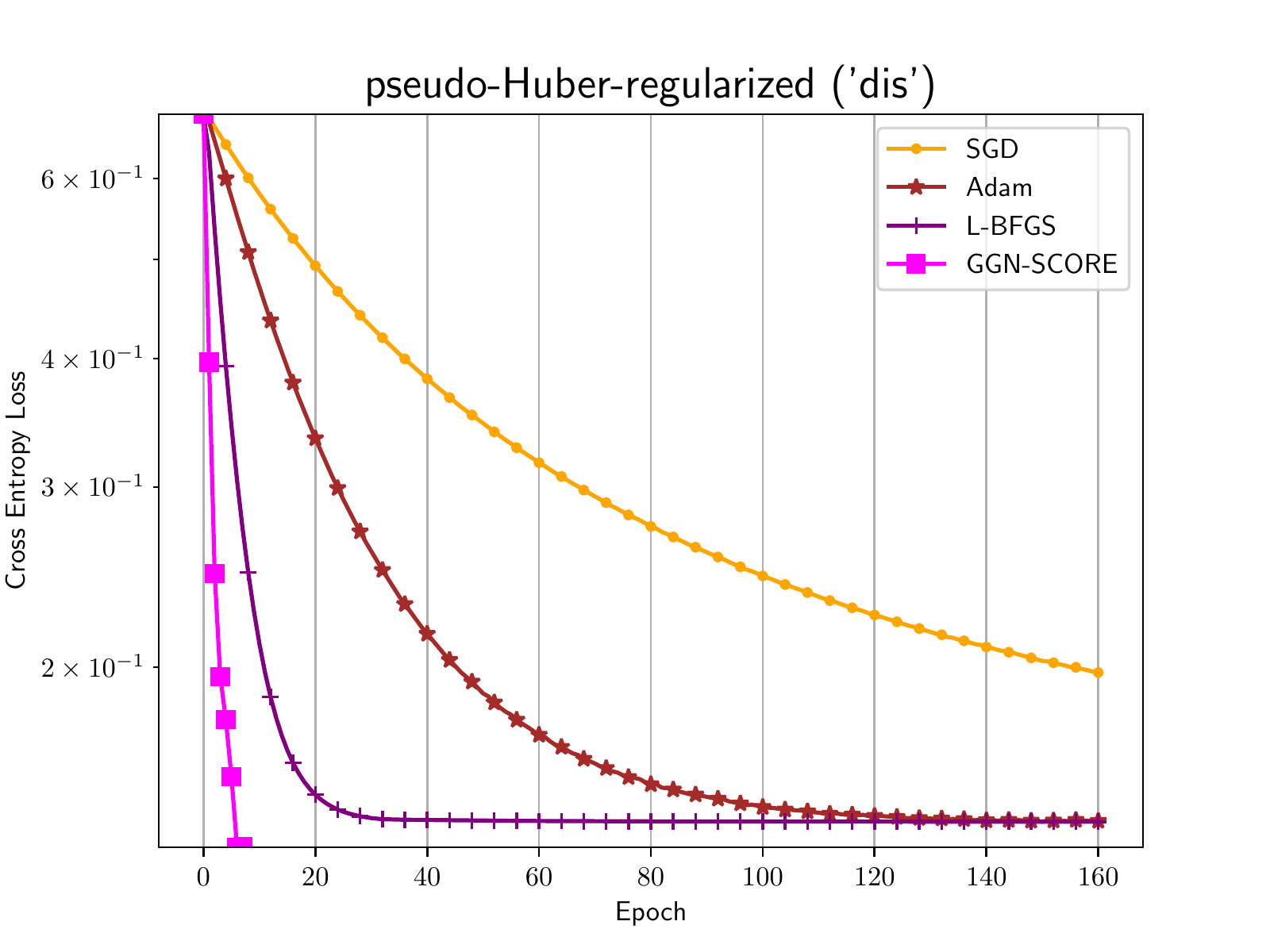}}
			\centerline{\includegraphics[width=1.2\linewidth]{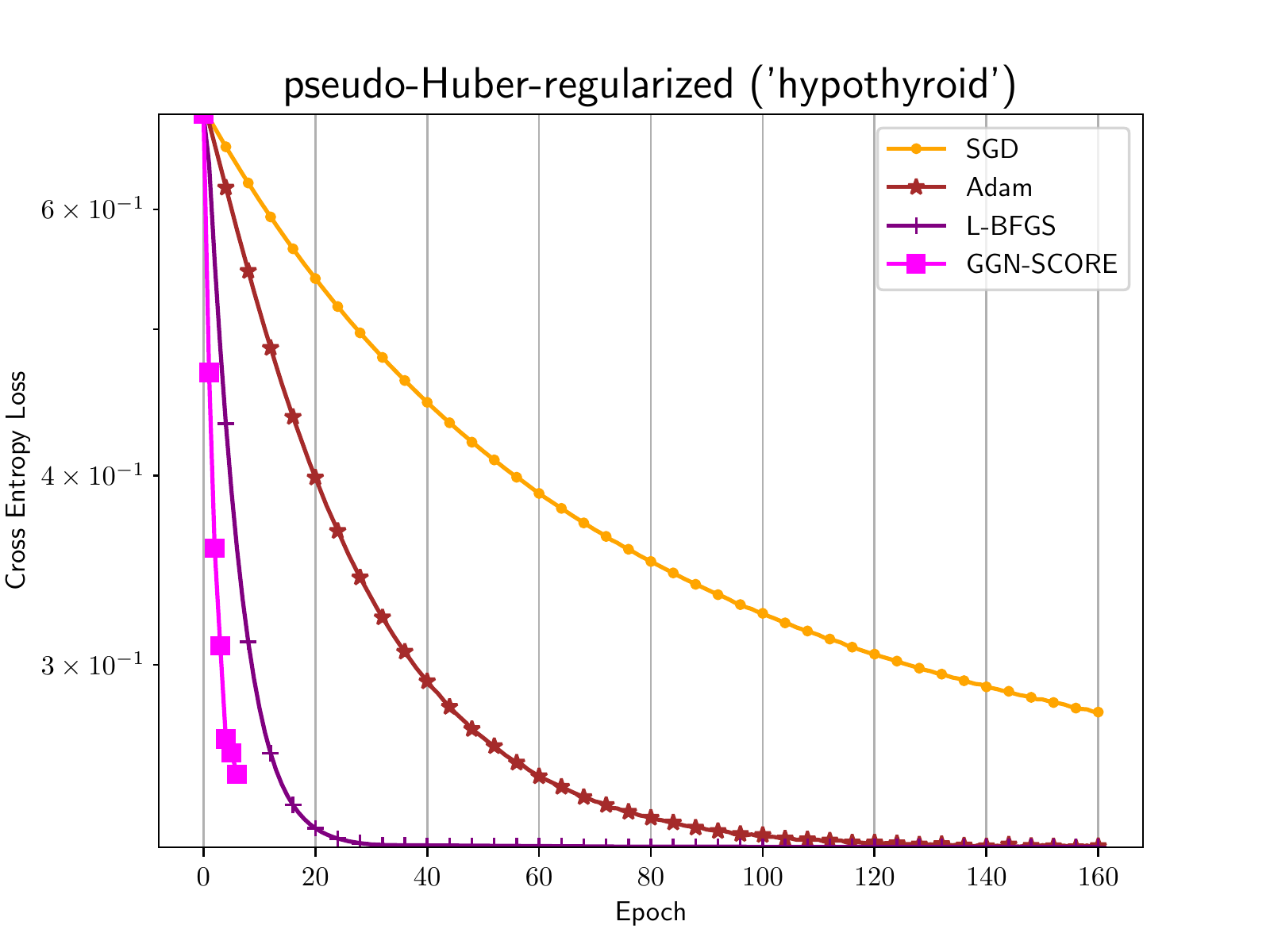}}
			\centerline{\includegraphics[width=1.2\linewidth]{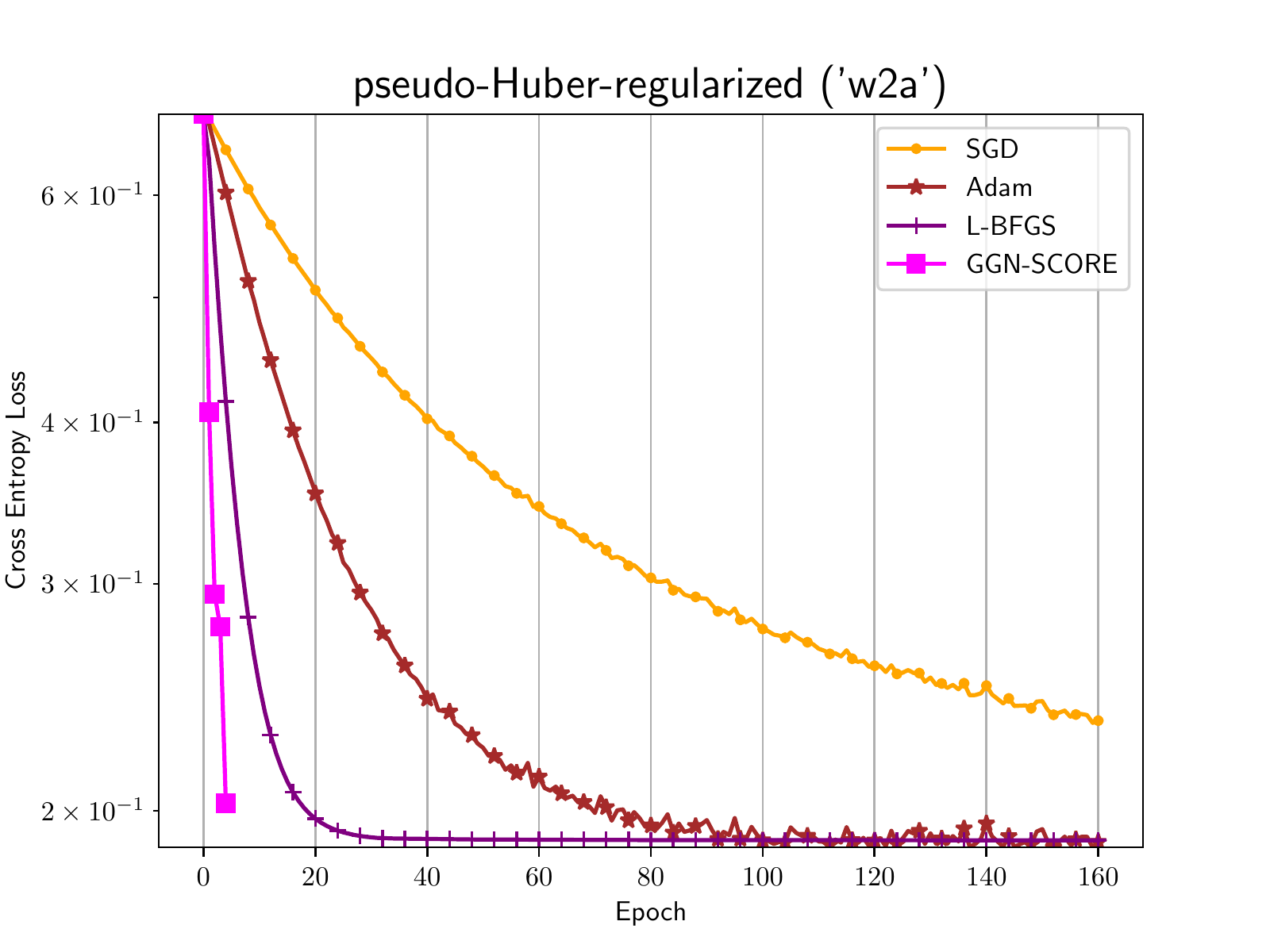}}

			\centerline{\includegraphics[width=1.2\linewidth]{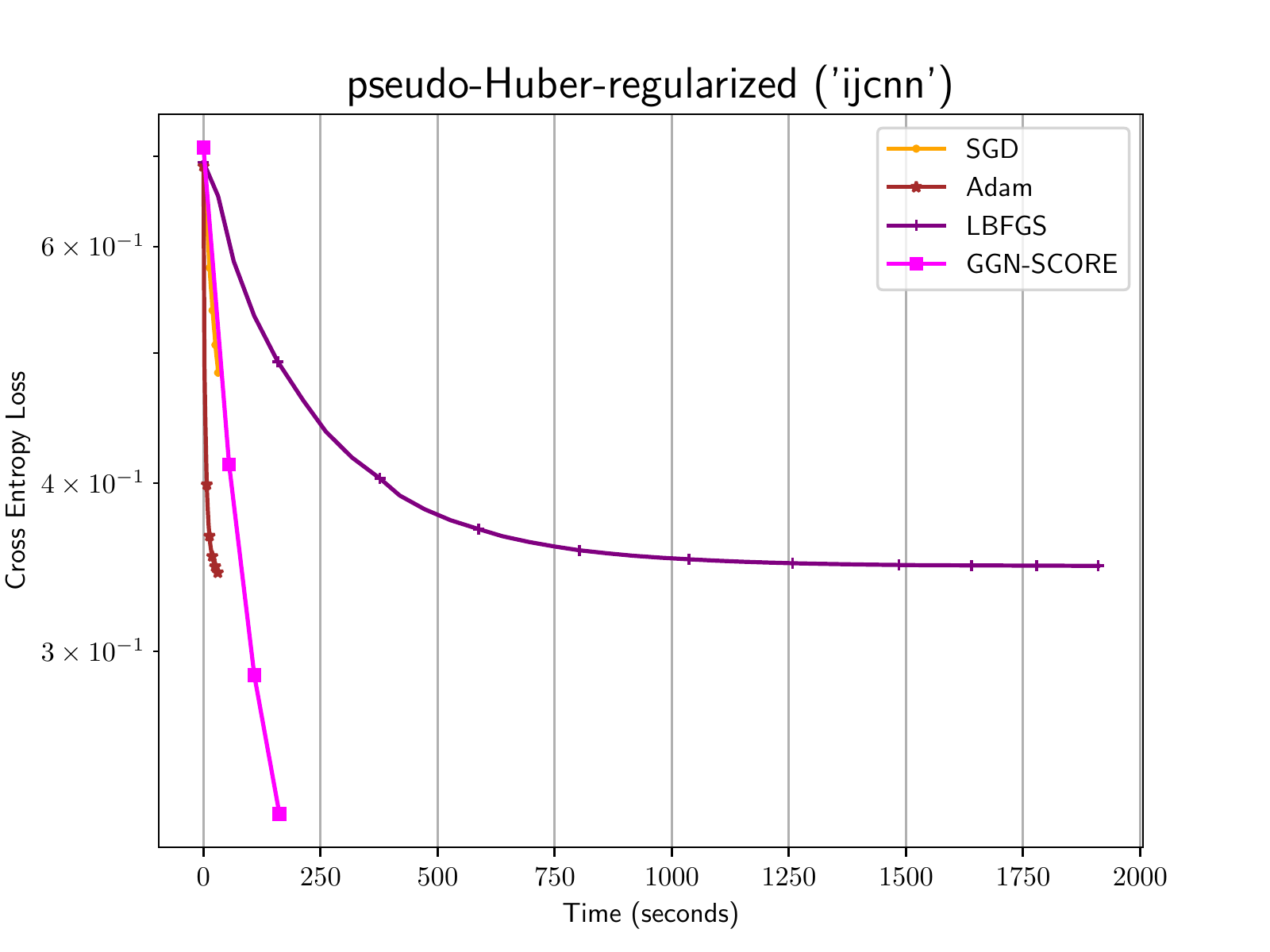}}
			\centerline{\includegraphics[width=1.2\linewidth]{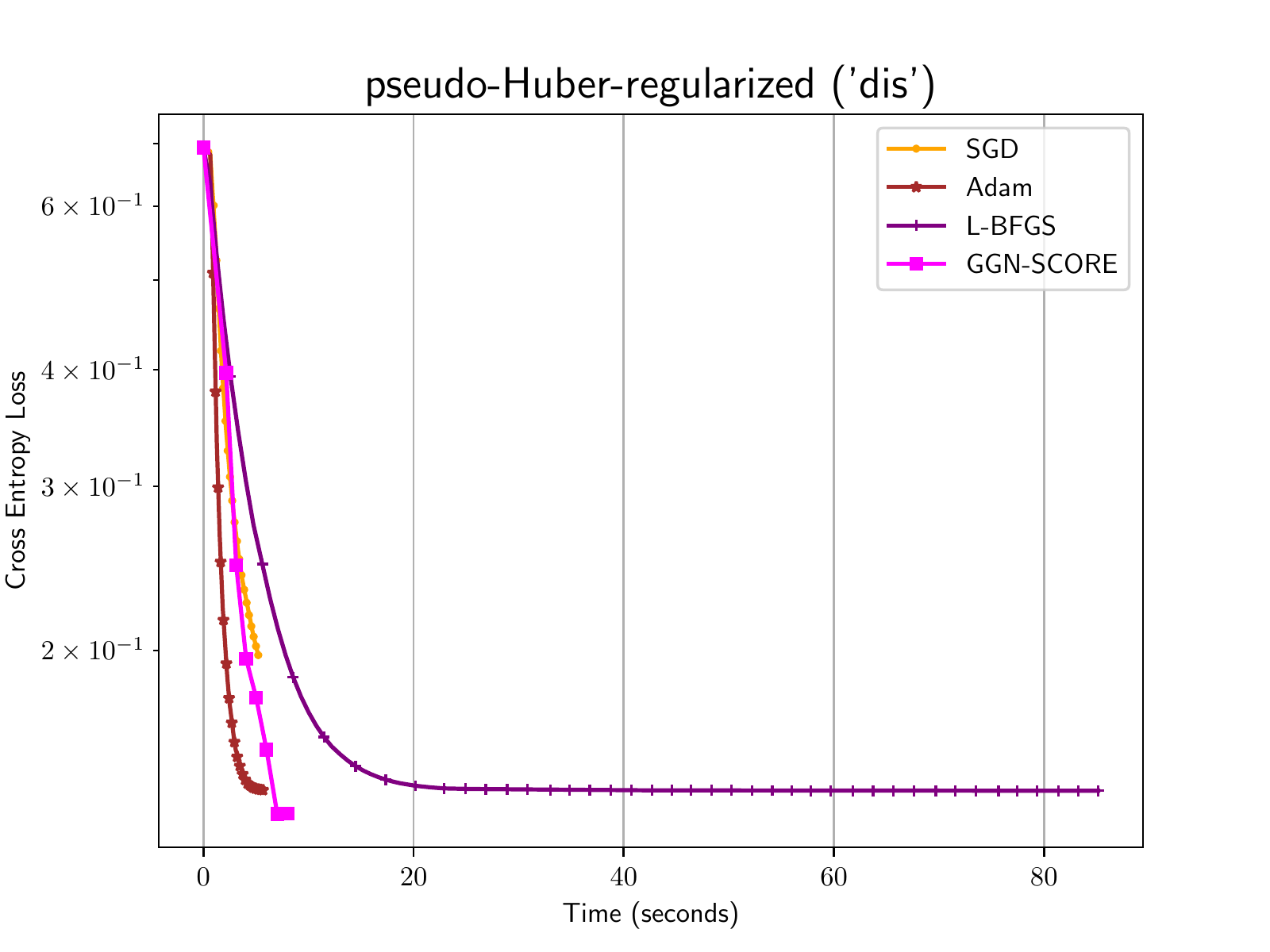}}
			\centerline{\includegraphics[width=1.2\linewidth]{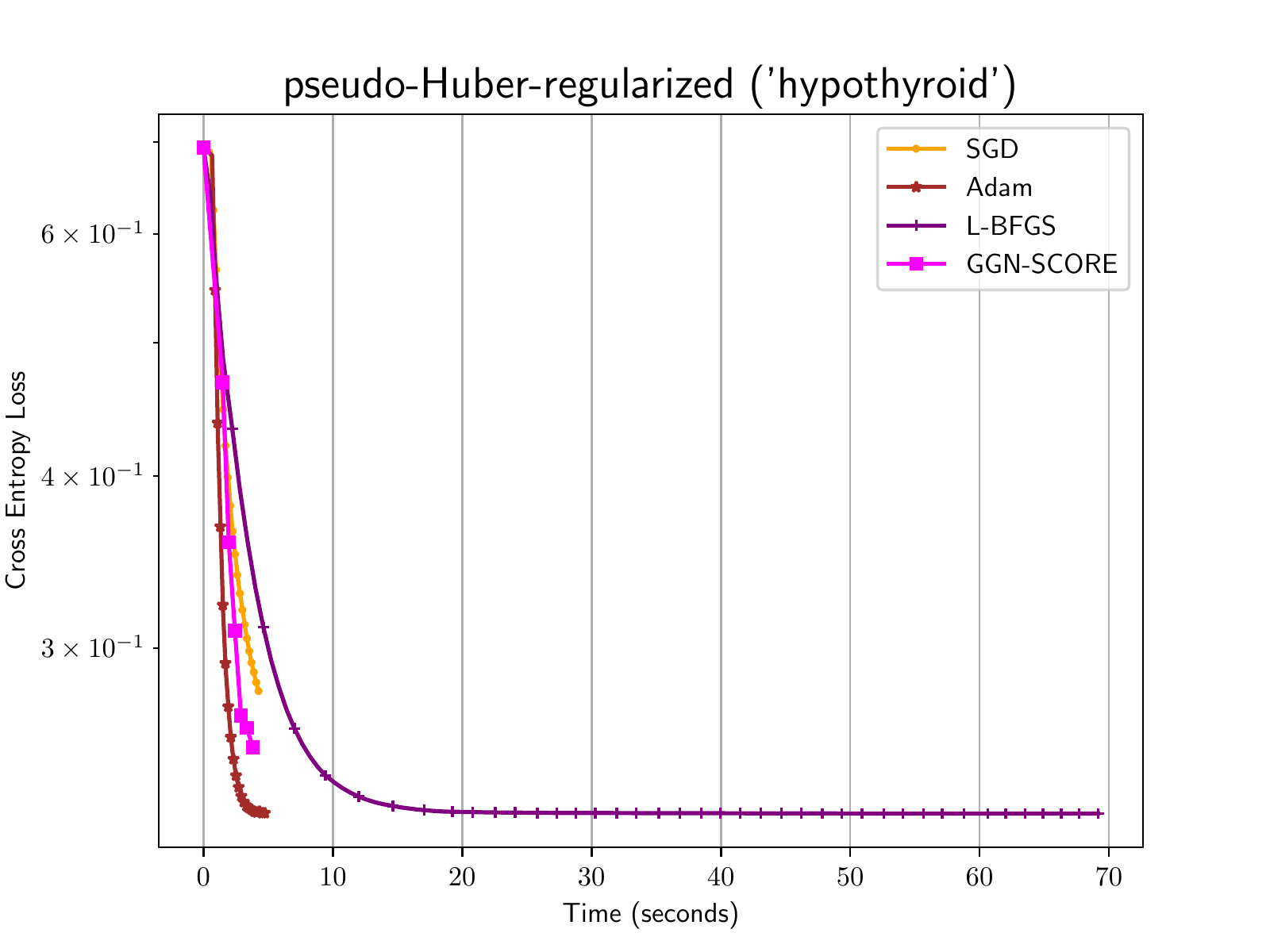}}
			\centerline{\includegraphics[width=1.2\linewidth]{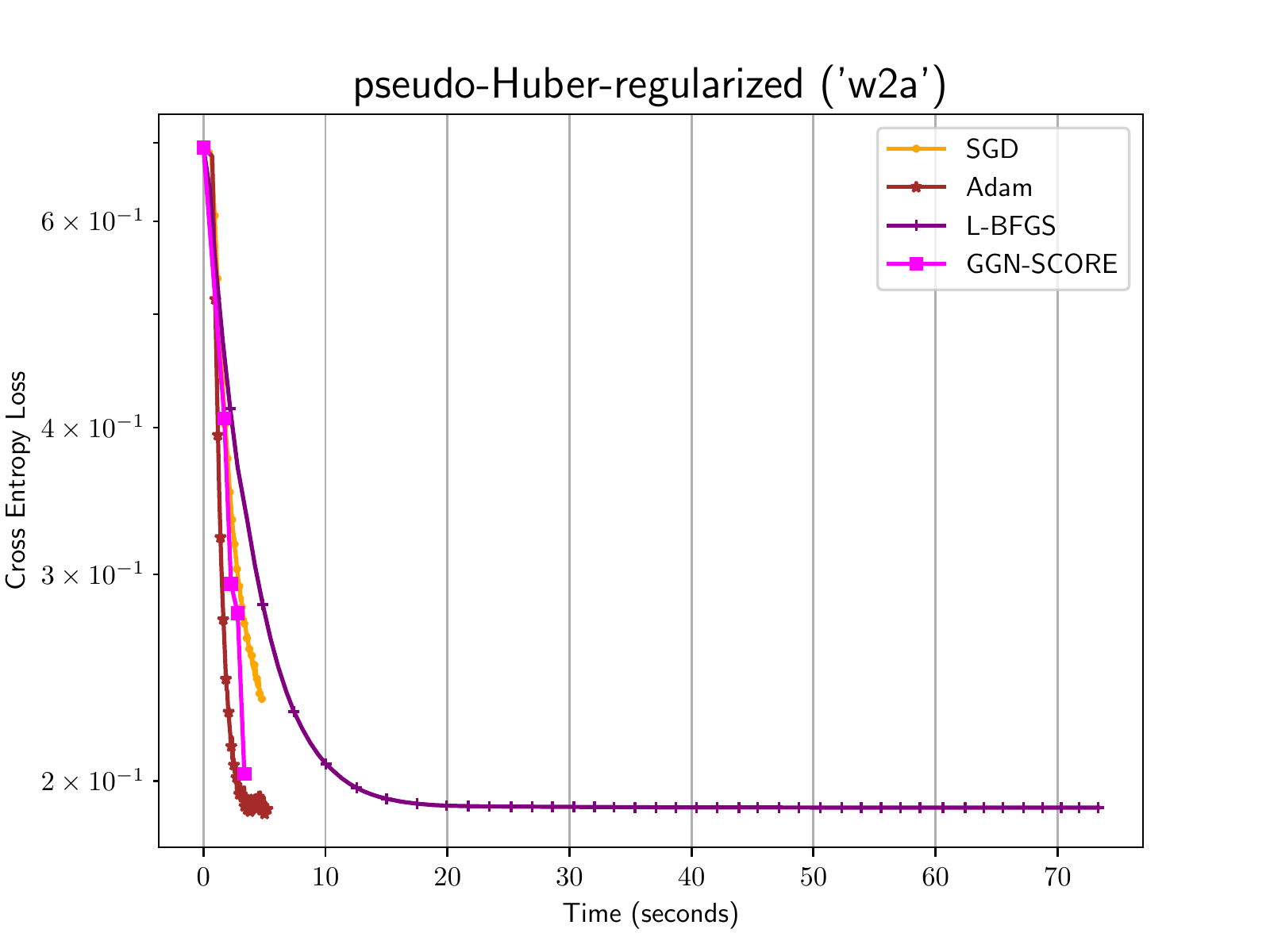}}

			\centerline{\includegraphics[width=1.2\linewidth]{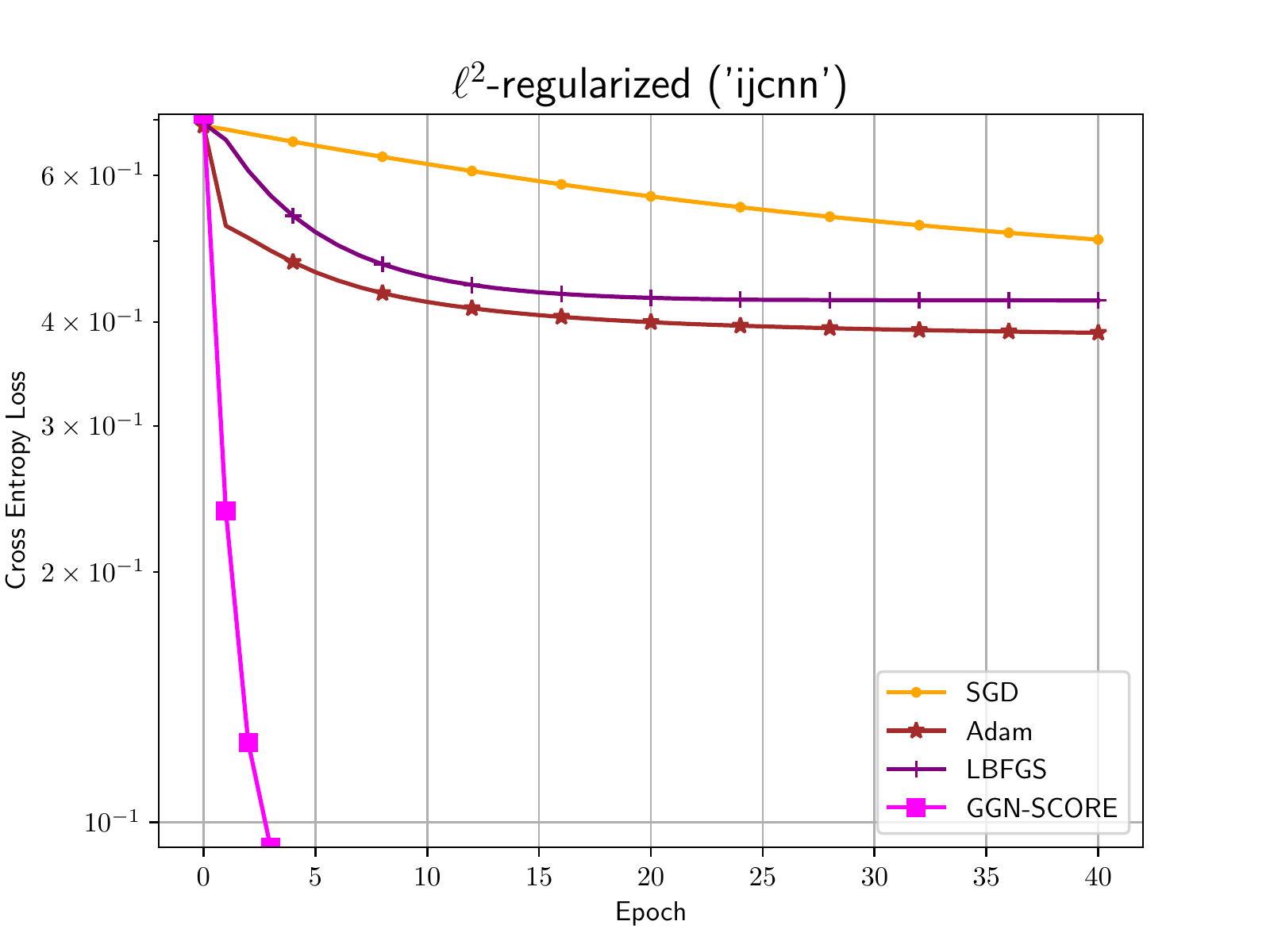}}
			\centerline{\includegraphics[width=1.2\linewidth]{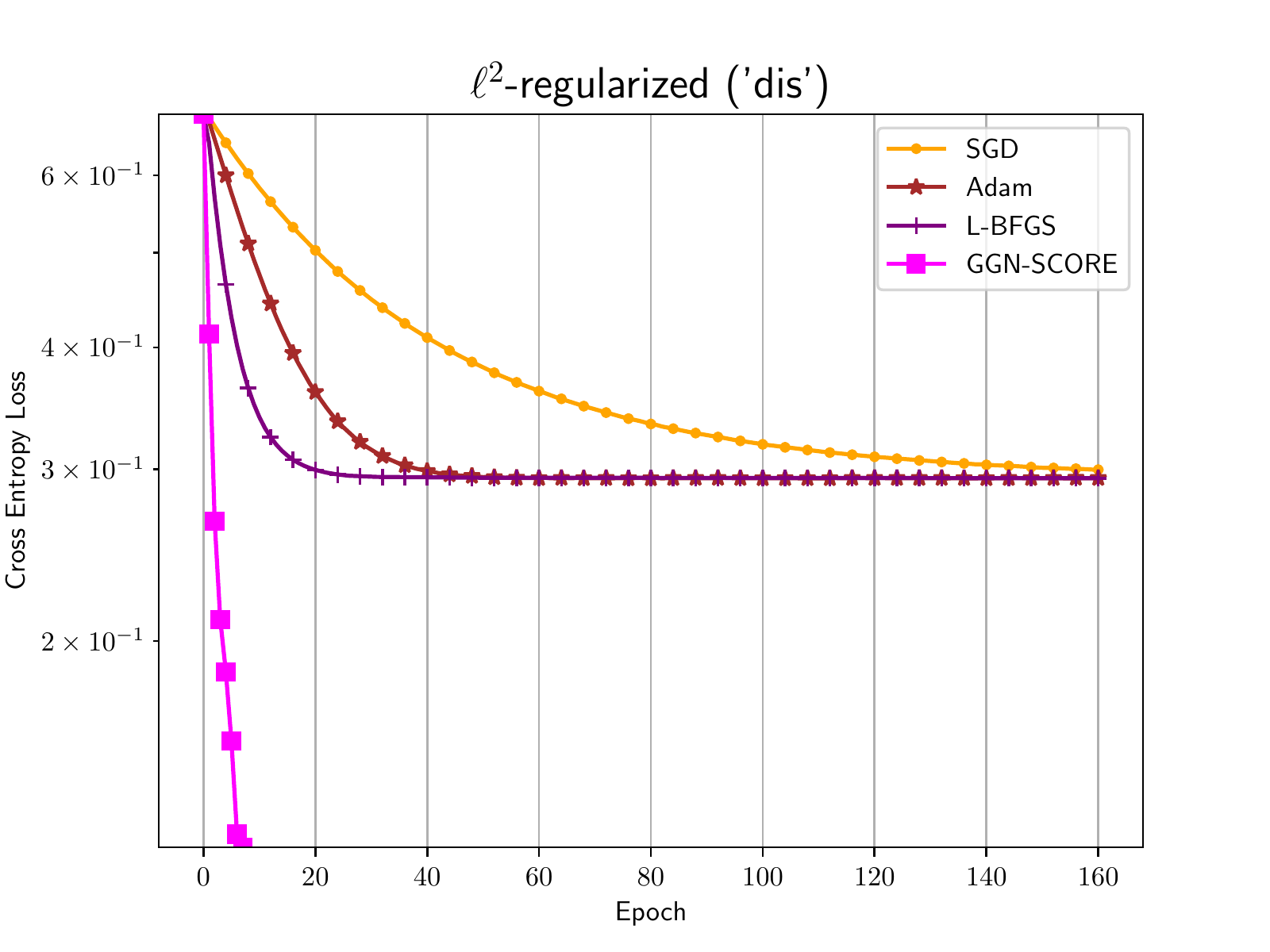}}
			\centerline{\includegraphics[width=1.2\linewidth]{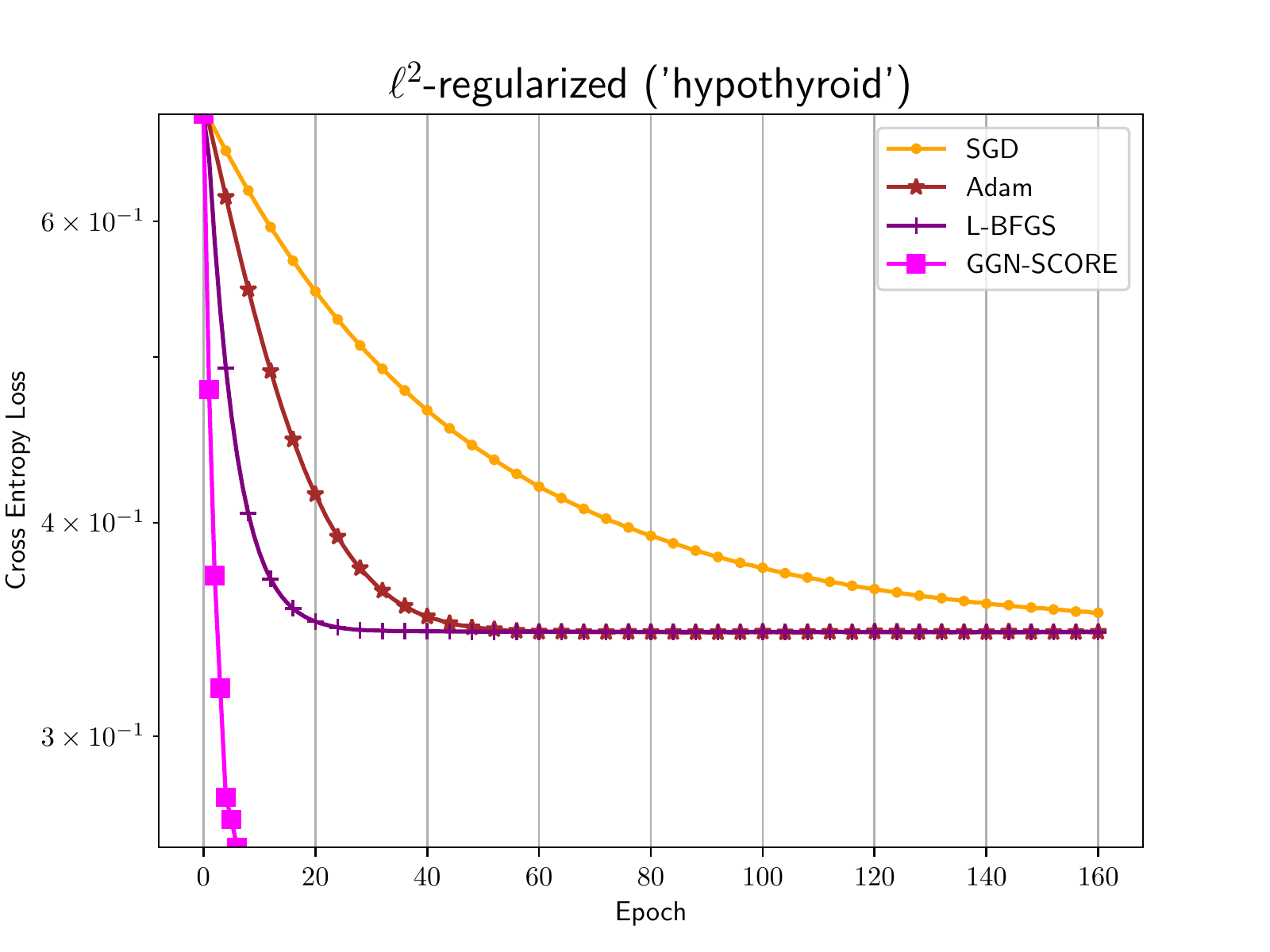}}
			\centerline{\includegraphics[width=1.2\linewidth]{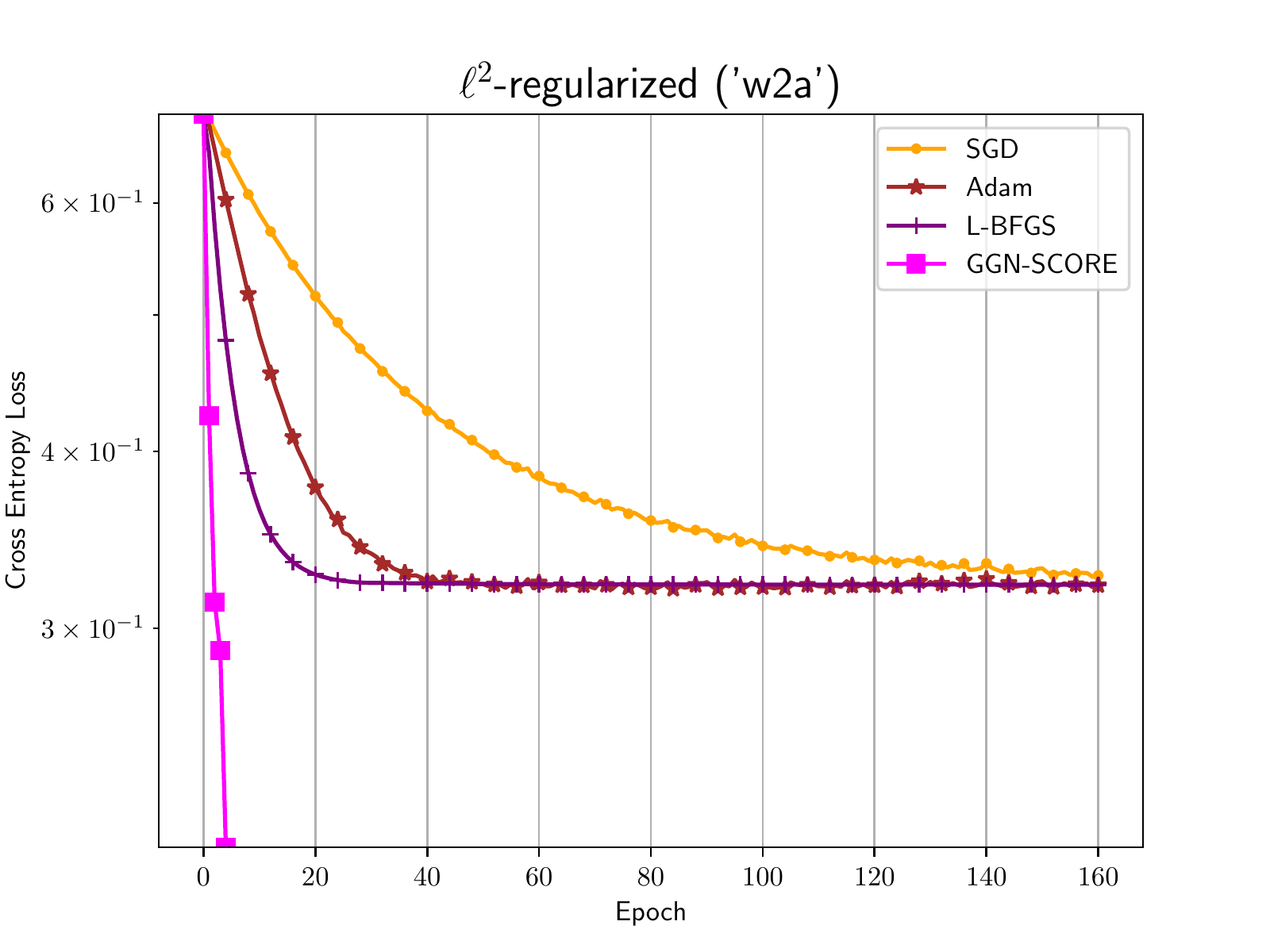}}

			\centerline{\includegraphics[width=1.2\linewidth]{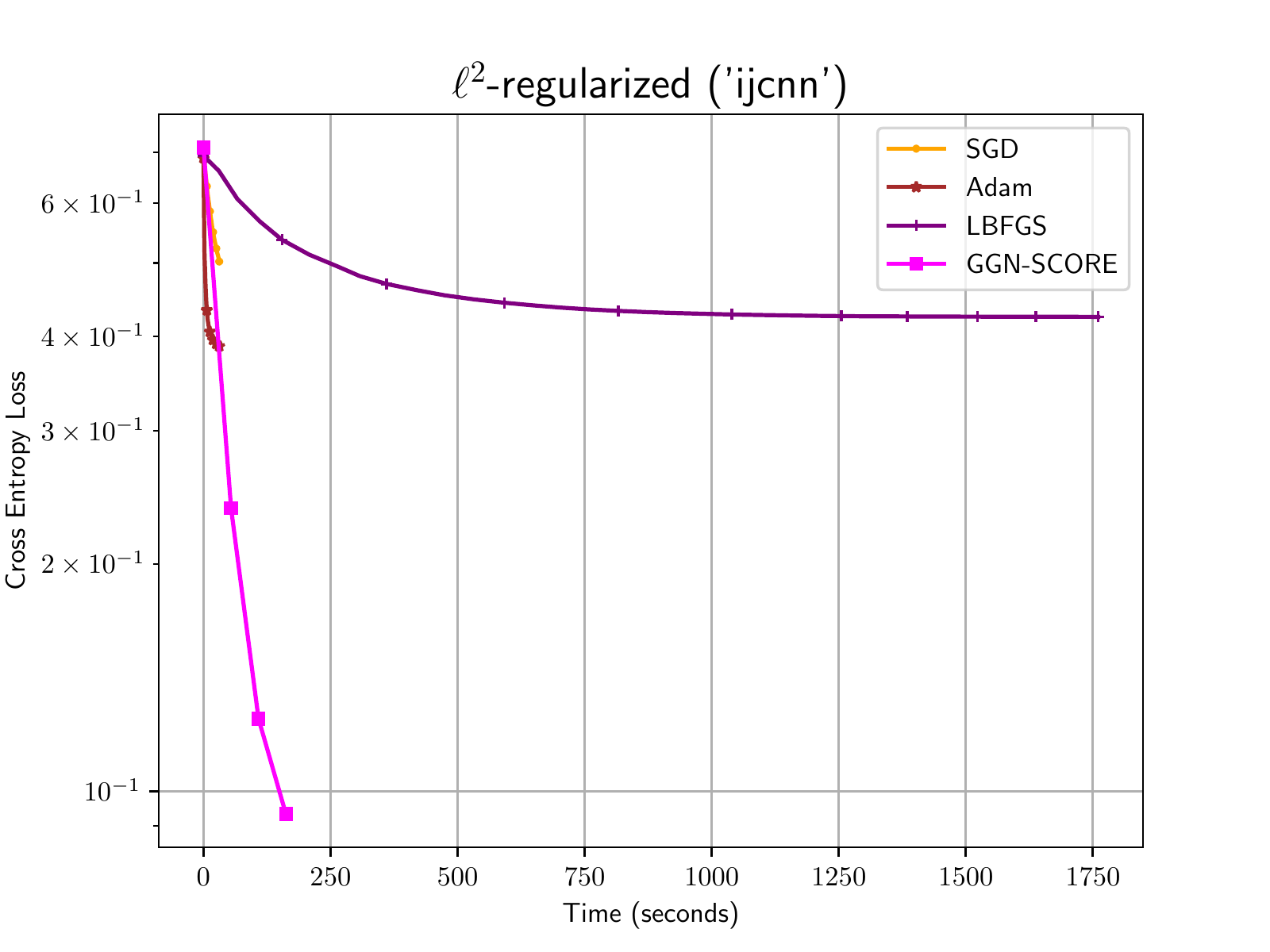}}
			\centerline{\includegraphics[width=1.2\linewidth]{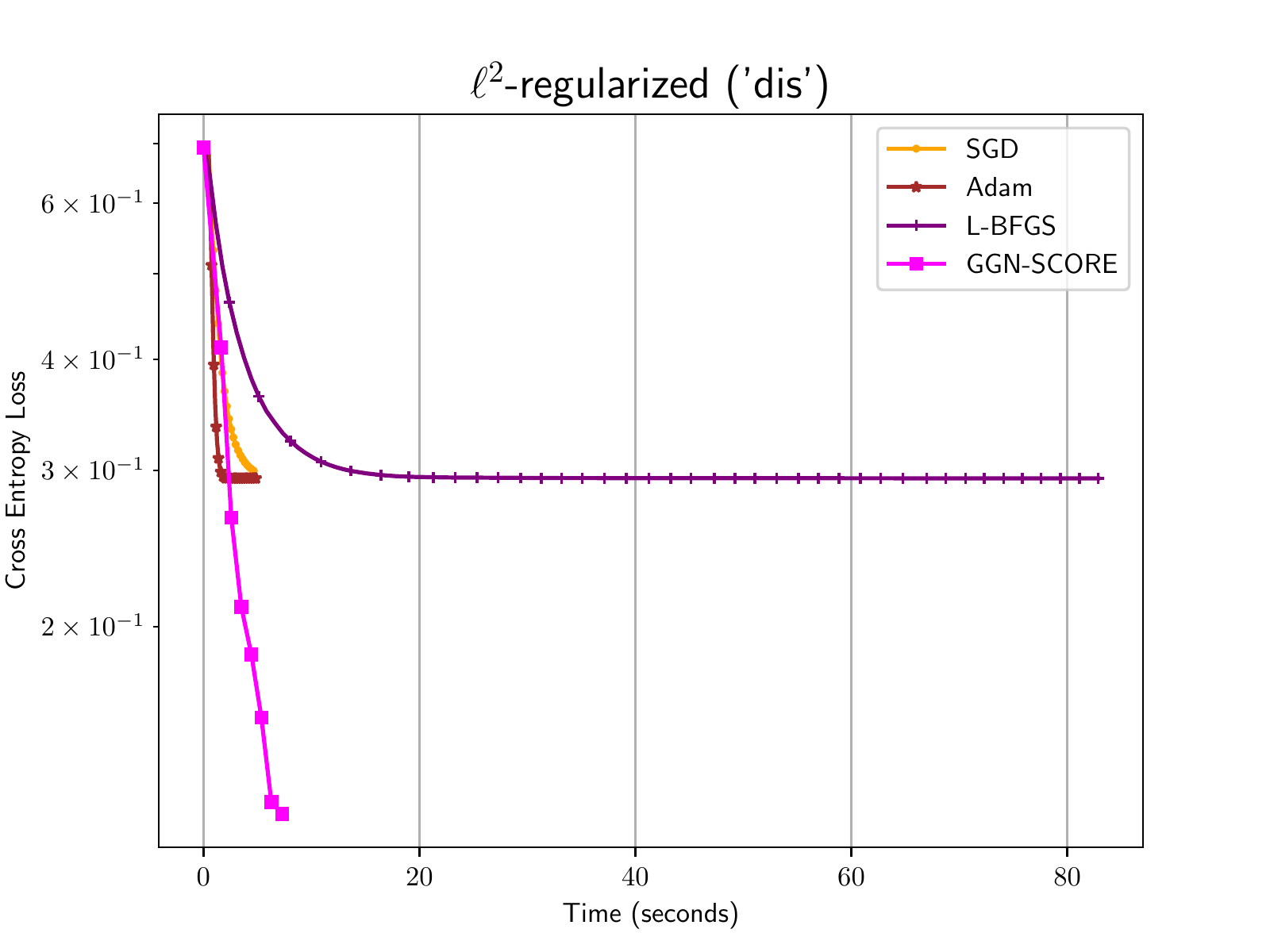}}
			\centerline{\includegraphics[width=1.2\linewidth]{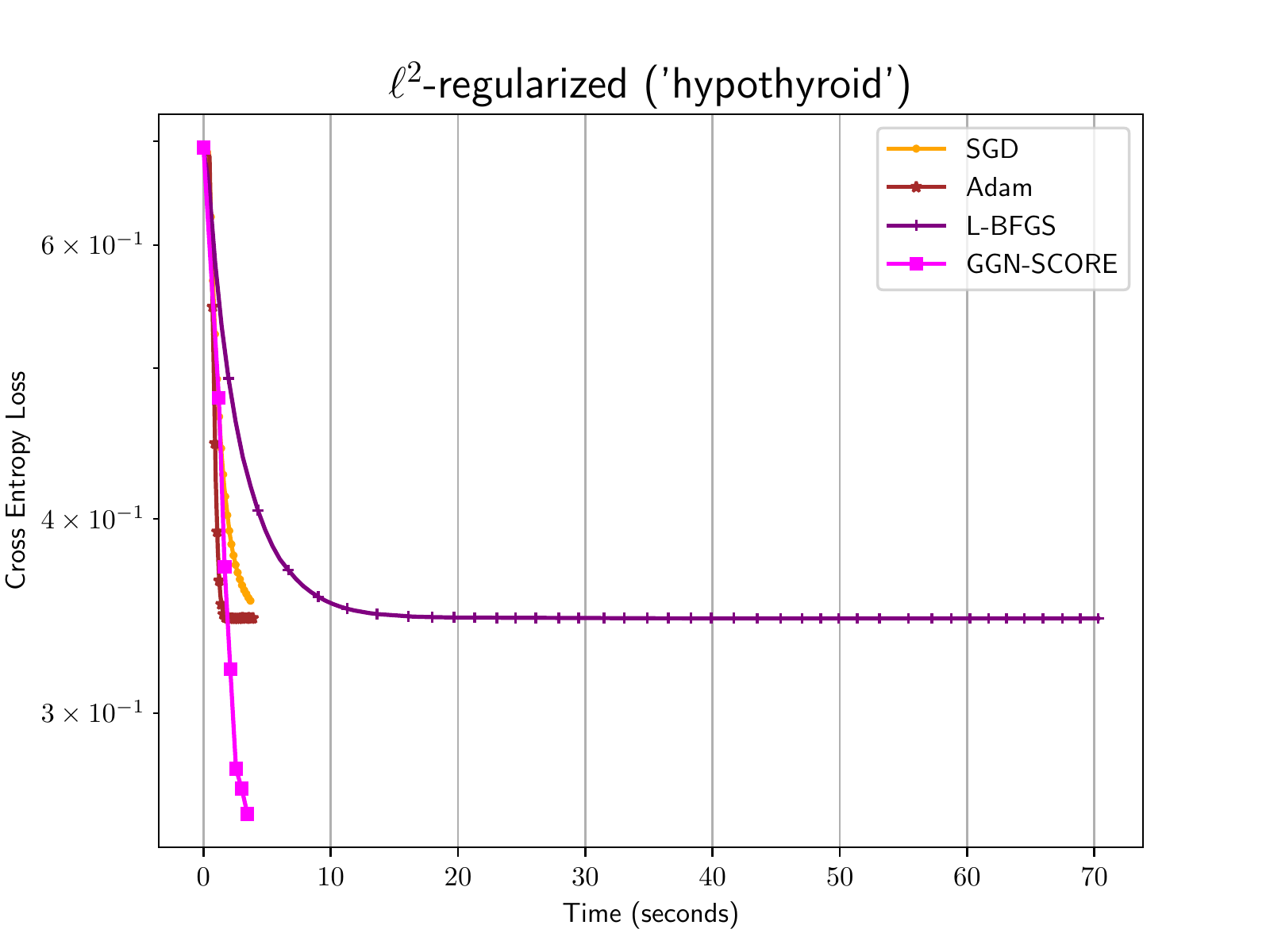}}
			\centerline{\includegraphics[width=1.2\linewidth]{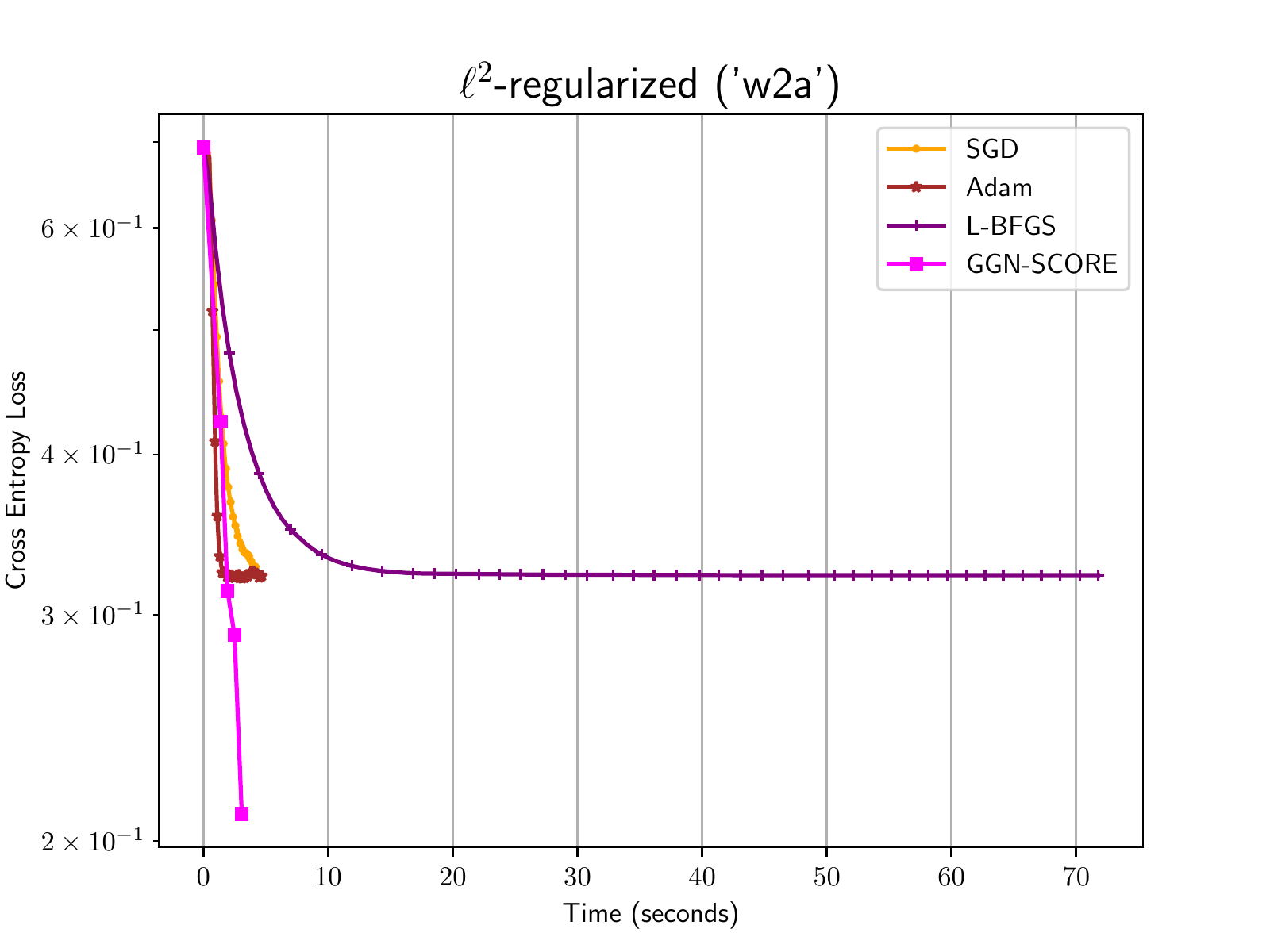}}
		\end{multicols}
	\end{subfigure}
	\caption{Convergence curves for GGN-SCORE, SGD, Adam and L-BFGS in the non-convex neural network training problem: overparameterized for \texttt{dis}, \texttt{hypothyroid}, \texttt{w2a} and \texttt{coil2000}. $m = 512$ for \texttt{w2a}, \texttt{dis} and \texttt{hypothyroid}, $m=2048$ for \texttt{coil2000}, and $m=4096$ for \texttt{ijcnn1}.}
	\label{fig:nonconvex}
\end{figure*}
\begin{figure*}
	\begin{subfigure}{1.0\textwidth}
		\vspace{-2.0\baselineskip}
		\centerline{\includegraphics[width=0.9\textwidth]{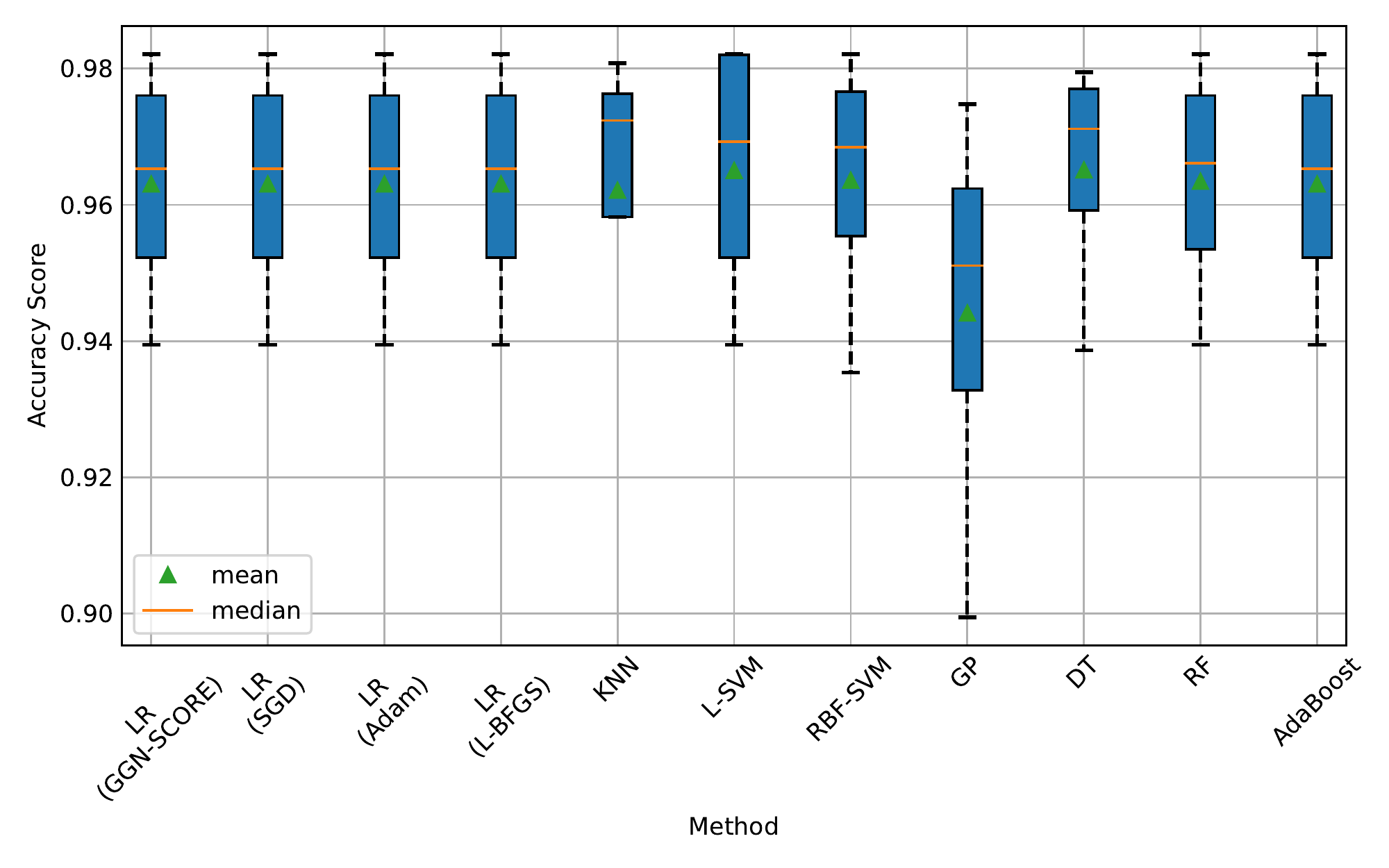}}\caption{Mean accuracy score.}
		\vfil
		\centerline{\includegraphics[width=0.9\textwidth]{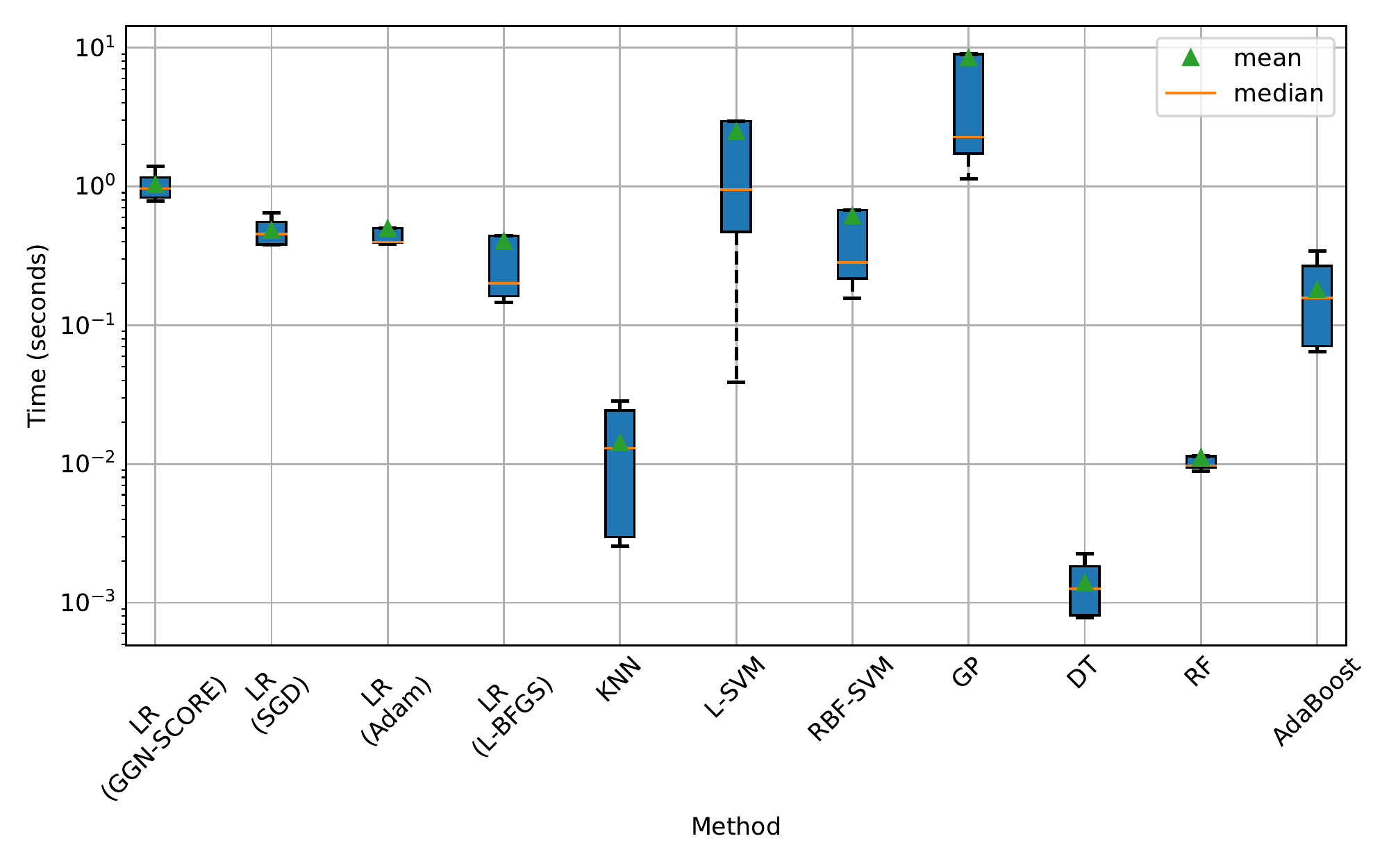}}\caption{Mean CPU time (in seconds).}
	\end{subfigure}
	\caption{Classification task results using the method of this paper -- logistic regression (LR), k-nearest neighbours (KNN), linear support vector machine (L-SVM), RBF-SVM, Gaussian process (GP), decision tree (DT), random forest (RF) and adaptive boosting (AdaBoost) techniques for \texttt{dis}, \texttt{hypothyroid}, \texttt{coil2000} and \texttt{w2a} datasets. LR and SVM methods use the $\ell_2$ penalty function with parameter $\lambda = 1.0$. Solutions with GGN-SCORE, SGD, Adam, and L-BFGS are obtained over one data pass with $m=128$.}
	\label{fig:benchmarkscores}
\end{figure*}

To investigate how well the learned model generalizes, we use the binary accuracy metric which measures how often the model predictions match the true labels when presented with new, previously unseen data: $Accuracy = \frac{1}{N}\sum_{n=1}^{N}\left(2\bm{y}_n\hat{\bm{y}}_n-\bm{y}_n-\hat{\bm{y}}_n+1\right)$. While GGN-SCORE converges faster than SGD, Adam and L-BFGS methods, it generalizes comparatively well. The results are further compared with other known binary classification techniques to measure the quality of our solutions. The accuracy scores for \texttt{dis}, \texttt{hypothyroid}, \texttt{coil2000} and \texttt{w2a} datasets, with a $60$:$40$ train:test split each, are computed on the test set. The mean scores are compared with those from the different classification techniques, and are shown in \figref{fig:benchmarkscores}. The CPU runtimes are also compared where it is indicated that on average GGN-SCORE solves each of the problems within one second. This scales well with the other techniques, as we note that while GNN-SCORE solves each of the problems in high dimensions, the success of most of the other techniques are limited to relatively smaller dimensions of the problems. The obtained results from the classification techniques used for comparison are computed directly from the respective scikit-learn \cite{scikit-learn} functions.

The L-BFGS experiments are implemented with PyTorch \cite{paszke2017automatic} (v.~1.10.1+cu102) in full-batch mode. The GGN-SCORE, Adam and SGD methods are implemented using the open-source Keras API with TensorFlow \cite{tensorflow2015-whitepaper} backend (v.~2.7.0). All experiments are performed on a laptop with dual (2.30GHz + 2.30GHz) Intel Core i7-11800H CPU and 16GB RAM.

In summary, GGN-SCORE converges way faster (in terms of number of epochs) than SGD, Adam, and L-BFGS, and generalizes comparatively well. Experimental results show the computational convenience and elegance achieved by our ``augmented" approach for including regularization functions in the GGN approximation. Although GGN-SCORE comes with a higher computational cost (in terms of wall-clock time per iteration) than first-order methods on average, if per-iteration learning time is not provided as a bottleneck, this may not become an obvious issue as we need to pass the proposed optimizer on the dataset only a few times (epochs) to obtain superior function approximation and relatively high-quality solutions in our experiments.
\section{Conclusion}\label{ss:conclusion}
In this paper, we have proposed GGN-SCORE, a generalized Gauss-Newton-type algorithm for solving unconstrained regularized minimization problems, where the regularization function is considered to be self-concordant. In this generalized setting, we employed a matrix approximation scheme that significantly reduces the computational overhead associated with the method. Unlike existing techniques that impose self-concordance on the problem's objective function, our analysis involves a less restrictive condition from a practical point of view but similarly benefits from the idea of self-concordance by considering scaled optimization step-lengths that depend on the self-concordant parameter of the regularization function. We proved a quadratic local convergence rate for our method under certain conditions, and validate its efficiency in numerical experiments that involve both strongly convex problems and the general non-convex problems that arise when training neural networks. In both cases, our method compare favourably against Adam, SGD, and L-BFGS methods in terms of per-iteration convergence speed, as well as some machine learning techniques used in binary classification, in terms of solution quality.

In future research, it would be interesting to relax some conditions on the problem and analyze a global convergence rate for the proposed method. We would also consider an analysis for our method in a general non-convex setting even though numerically we have observed a similar convergence speed as the strongly convex case.

\bibliographystyle{unsrt}
\bibliography{references}

\appendix

\section{Useful Results}\label{sec:appA}
Following Assumptions \ref{thm:ass1} -- \ref{thm:ass2}, we get that \cite[Theorem 2.1.6]{nesterov2018lectures}
\begin{align}
	\gamma_l\bm{I}_{n_w}\preceq\partial^2g(\bm{\theta})\preceq \gamma_u \bm{I}_{n_w},\quad
	\gamma_a\bm{I}_{n_w}\preceq\partial^2h(\bm{\theta})\preceq \gamma_b \bm{I}_{n_w},
\end{align}
where $\bm{I}_{n_w}\in\mathbb{R}^{n_w \times n_w}$ is an identity matrix. Consequently,
\begin{align}
	(\gamma_l+\lambda\gamma_a)\bm{I}_{n_w}\preceq\partial^2\mathcal{L}(\bm{\theta})\preceq (\gamma_u+\lambda\gamma_b) \bm{I}_{n_w}.
\end{align}
In addition, the second derivative of $\mathcal{L}(\bm{\theta})$ is $(\gamma_g + \lambda\gamma_h)$-Lipschitz continuous $\forall \bm{x} \in \rr^{n_p}, \bm{y} \in \rr^d$, that is,
\begin{align}
	\left\|\partial^2 \mathcal{L}(\bm{y}, f(\bm{\theta_1};\bm{x})) - \partial^2 \mathcal{L}(\bm{y}, f(\bm{\theta_2};\bm{x}))\right\| \le (\gamma_g + \lambda\gamma_h)\left\|\bm{\theta_1} - \bm{\theta_2}\right\|. \label{eq:lipell2}
\end{align}

\begin{lemma}\label{thm:hessppty}
	Let Assumptions \ref{thm:ass1}, \ref{thm:ass1-1} and \ref{thm:ass2} hold. Let $\bm{\theta_1}, \bm{\theta_2}$ be any two points in $\rr^{n_w}$. Then we have
	\begin{align}
		\left\|\bm{g}(\bm{\theta_1}) - \bm{g}(\bm{\theta_2}) - \left<\bm{H}(\bm{\theta_2}),\bm{\theta_1}-\bm{\theta_2}\right>\right\| \le \frac{\gamma_g + \lambda\gamma_h}{2}\left\|\bm{\theta_1-\bm{\theta_2}}\right\|^2, \label{eq:hessppty1}
		\\
		\left\vert\mathcal{L}(\bm{\theta_1})-\mathcal{L}(\bm{\theta_2})-\left<\bm{g}(\bm{\theta_2}),\bm{\theta_1-\bm{\theta_2}}\right>-\frac{1}{2}\left<\bm{H}(\bm{\theta_2})(\bm{\theta_1}-\bm{\theta_2}),\bm{\theta_1}-\bm{\theta_2}\right>\right\vert\nonumber\\\le\frac{\gamma_g + \lambda\gamma_h}{6}\left\|\bm{\theta_1-\bm{\theta_2}}\right\|^3. \label{eq:hessppty2}
	\end{align}
\end{lemma}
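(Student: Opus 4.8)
The plan is to read $\bm{H}=\partial^2\mathcal{L}$ as the \emph{exact} Hessian here and lean entirely on the Hessian-Lipschitz estimate \eqref{eq:lipell2}, which is already in force under Assumptions \ref{thm:ass1}, \ref{thm:ass1-1} and \ref{thm:ass2}. The two bounds are then precisely the standard second- and third-order Taylor-remainder estimates for a function with Lipschitz Hessian (cf. \cite[Lemma~1.2.4]{nesterov2018lectures}). Since $\mathcal{L}$ is twice continuously differentiable on $\rr^{n_w}$, the fundamental theorem of calculus applies along the segment joining $\bm{\theta_2}$ to $\bm{\theta_1}$, so every integral representation used below is justified.

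For the first bound \eqref{eq:hessppty1}, I would write the gradient increment as $\bm{g}(\bm{\theta_1})-\bm{g}(\bm{\theta_2})=\int_0^1\partial^2\mathcal{L}(\bm{\theta_2}+\tau(\bm{\theta_1}-\bm{\theta_2}))(\bm{\theta_1}-\bm{\theta_2})\,d\tau$, subtract the constant integrand representation $\bm{H}(\bm{\theta_2})(\bm{\theta_1}-\bm{\theta_2})=\int_0^1\partial^2\mathcal{L}(\bm{\theta_2})(\bm{\theta_1}-\bm{\theta_2})\,d\tau$, and move the norm inside the integral. Applying \eqref{eq:lipell2} to the integrand gives $\|\partial^2\mathcal{L}(\bm{\theta_2}+\tau(\bm{\theta_1}-\bm{\theta_2}))-\partial^2\mathcal{L}(\bm{\theta_2})\|\le(\gamma_g+\lambda\gamma_h)\tau\|\bm{\theta_1}-\bm{\theta_2}\|$, so the integrand is at most $(\gamma_g+\lambda\gamma_h)\tau\|\bm{\theta_1}-\bm{\theta_2}\|^2$, and $\int_0^1\tau\,d\tau=\tfrac12$ produces the claimed constant.

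For the second bound \eqref{eq:hessppty2}, I would start from $\mathcal{L}(\bm{\theta_1})-\mathcal{L}(\bm{\theta_2})-\left<\bm{g}(\bm{\theta_2}),\bm{\theta_1}-\bm{\theta_2}\right>=\int_0^1\left<\bm{g}(\bm{\theta_2}+\tau(\bm{\theta_1}-\bm{\theta_2}))-\bm{g}(\bm{\theta_2}),\bm{\theta_1}-\bm{\theta_2}\right>d\tau$ and, using $\int_0^1\tau\,d\tau=\tfrac12$, rewrite the quadratic term as $\int_0^1\tau\left<\bm{H}(\bm{\theta_2})(\bm{\theta_1}-\bm{\theta_2}),\bm{\theta_1}-\bm{\theta_2}\right>d\tau$. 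Subtracting the two integrals and applying Cauchy--Schwarz bounds the left side of \eqref{eq:hessppty2} by $\int_0^1\|\bm{g}(\bm{\theta_2}+\tau(\bm{\theta_1}-\bm{\theta_2}))-\bm{g}(\bm{\theta_2})-\tau\bm{H}(\bm{\theta_2})(\bm{\theta_1}-\bm{\theta_2})\|\,\|\bm{\theta_1}-\bm{\theta_2}\|\,d\tau$. The inner norm is exactly the quantity controlled by \eqref{eq:hessppty1} evaluated at the displacement $\tau(\bm{\theta_1}-\bm{\theta_2})$, hence at most $\tfrac{\gamma_g+\lambda\gamma_h}{2}\tau^2\|\bm{\theta_1}-\bm{\theta_2}\|^2$; integrating against $\int_0^1\tau^2\,d\tau=\tfrac13$ yields the factor $\tfrac{\gamma_g+\lambda\gamma_h}{6}$.

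There is no genuine obstacle here: both estimates are textbook consequences of a Lipschitz Hessian, and the only bookkeeping that needs care is the reparametrization in the second bound, namely introducing the factor $\tau$ so that the quadratic term becomes an integral over the same variable, and then recognizing the integrand as an instance of \eqref{eq:hessppty1} applied at the \emph{scaled} displacement $\tau(\bm{\theta_1}-\bm{\theta_2})$ rather than at $\bm{\theta_1}-\bm{\theta_2}$. The one conceptual point worth stating explicitly is that $\bm{H}$ in this lemma denotes the exact Hessian $\partial^2\mathcal{L}$, so that \eqref{eq:lipell2} applies verbatim, and not the GGN surrogate used in the main body.
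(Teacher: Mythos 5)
Your proof is correct and takes essentially the same route as the paper: the paper likewise reads $\bm{H}$ as the exact Hessian $\partial^2\mathcal{L}$, derives \eqref{eq:hessppty1} from the fundamental-theorem-of-calculus representation of $\bm{g}(\bm{\theta_1})-\bm{g}(\bm{\theta_2})$ combined with the Lipschitz bound \eqref{eq:lipell2}, and integrates $\tau$ to obtain the factor $\tfrac{1}{2}$. For \eqref{eq:hessppty2} the paper only remarks that it ``follows immediately using a similar procedure''; your reparametrization argument, which recognizes the integrand as an instance of \eqref{eq:hessppty1} at the scaled displacement $\tau(\bm{\theta_1}-\bm{\theta_2})$ and integrates $\tau^2$ to get the factor $\tfrac{1}{6}$, is a correct and complete way of carrying out exactly that procedure.
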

\begin{proof}
	Fix $\bm{\theta_1}, \bm{\theta_2} \in\rr^{n_w}$. Then
	\begin{align*}
		\bm{g}(\bm{\theta_1}) - \bm{g}(\bm{\theta_2}) = \int_0^1\partial^2\mathcal{L}(\bm{\theta_2}+\tau(\bm{\theta_1}-\bm{\theta_2}))(\bm{\theta_1}-\bm{\theta_2})d\tau.
	\end{align*}
	As $\tau\in[0,1]$, we have $\bm{\theta_2}+\tau(\bm{\theta_1}-\bm{\theta_2}) \in\rr^{n_w}$. Hence writing $\partial^2\mathcal{L}=\bm{H}$, we have
	\begin{align*}
		\bm{g}(\bm{\theta_1}) - \bm{g}(\bm{\theta_2}) &= \int_0^1\bm{H}(\bm{\theta_2}+\tau(\bm{\theta_1}-\bm{\theta_2}))(\bm{\theta_1}-\bm{\theta_2})d\tau\\
		\bm{g}(\bm{\theta_1}) - \bm{g}(\bm{\theta_2}) - \bm{H}(\bm{\theta_2})(\bm{\theta_1}-\bm{\theta_2}) &= \int_0^1\left(\bm{H}(\bm{\theta_2}+\tau(\bm{\theta_1}-\bm{\theta_2}))-\bm{H}(\bm{\theta_2})\right)(\bm{\theta_1}-\bm{\theta_2})d\tau
	\end{align*}
	\begin{align*}
		&\left\|\bm{g}(\bm{\theta_1}) - \bm{g}(\bm{\theta_2}) - \bm{H}(\bm{\theta_2})(\bm{\theta_1}-\bm{\theta_2})\right\| =\\
		&\qquad \qquad  \left\|\int_0^1\left(\bm{H}(\bm{\theta_2}+\tau(\bm{\theta_1}-\bm{\theta_2}))-\bm{H}(\bm{\theta_2})\right)(\bm{\theta_1}-\bm{\theta_2})d\tau\right\|\\
		&\qquad \le \int_0^1\left\|\left(\bm{H}(\bm{\theta_2}+\tau(\bm{\theta_1}-\bm{\theta_2}))-\bm{H}(\bm{\theta_2})\right)(\bm{\theta_1}-\bm{\theta_2})\right\|d\tau\\
		&\qquad \le \int_0^1\left\|\bm{H}(\bm{\theta_2}+\tau(\bm{\theta_1}-\bm{\theta_2}))-\bm{H}(\bm{\theta_2})\right\|\cdot\left\|\bm{\theta_1}-\bm{\theta_2}\right\|d\tau\\
		&\qquad \le \int_0^1\tau(\gamma_g + \lambda\gamma_h)\left\|\bm{\theta_1}-\bm{\theta_2}\right\|^2d\tau\\
		&\qquad = \frac{\gamma_g + \lambda\gamma_h}{2}\left\|\bm{\theta_1}-\bm{\theta_2}\right\|^2.
	\end{align*}
	The proof of \eqref{eq:hessppty2} follows immediately using a similar procedure (see e.g., \cite{fowkes2013branch}).
\end{proof}
\begin{corollary}\label{thm:hessppty2}
	Let Assumptions \ref{thm:ass1}, \ref{thm:ass1-1} and \ref{thm:ass2} hold. Let $\bm{\theta_1}, \bm{\theta_2}$ be any two points in $\rr^{n_w}$. Then by writing $\partial^2\mathcal{L}=\bm{H}$,
	\begin{align}
		\bm{H}(\bm{\theta_2})-(\gamma_g + \lambda\gamma_h)\left\|\bm{\theta_1}-\bm{\theta_2}\right\| \le \bm{H}(\bm{\theta_1})\le \bm{H}(\bm{\theta_2})+(\gamma_g + \lambda\gamma_h)\left\|\bm{\theta_1}-\bm{\theta_2}\right\|.
	\end{align}
\end{corollary}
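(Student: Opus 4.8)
The plan is to read the stated inequalities as matrix (Loewner) inequalities, with each scalar term $(\gamma_g+\lambda\gamma_h)\|\bm{\theta_1}-\bm{\theta_2}\|$ understood as that scalar times the identity $\bm{I}_{n_w}$, and then to derive everything directly from the Lipschitz continuity of the full Hessian already recorded in \eqref{eq:lipell2}. The essential structural fact I would lean on is that, because $\mathcal{L}$ is twice continuously differentiable on the convex set in question, the Hessian $\bm{H}(\bm{\theta})=\partial^2\mathcal{L}(\bm{\theta})$ is symmetric at every point, and that the norm $\|\cdot\|$ appearing throughout is the spectral (operator $2$-) norm.

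First I would recall the standard equivalence for symmetric matrices: for any symmetric $\bm{A}\in\rr^{n_w\times n_w}$ and any scalar $c\ge 0$, one has $\|\bm{A}\|\le c$ if and only if $-c\,\bm{I}_{n_w}\preceq\bm{A}\preceq c\,\bm{I}_{n_w}$, since for symmetric $\bm{A}$ the spectral norm equals the largest absolute value of its eigenvalues. Next I would apply this with the symmetric matrix $\bm{A}=\bm{H}(\bm{\theta_1})-\bm{H}(\bm{\theta_2})$ — symmetric because it is a difference of symmetric Hessians — and with $c=(\gamma_g+\lambda\gamma_h)\|\bm{\theta_1}-\bm{\theta_2}\|$.

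By \eqref{eq:lipell2} we have exactly $\|\bm{A}\|\le c$, so the equivalence yields
\begin{align*}
	-(\gamma_g+\lambda\gamma_h)\|\bm{\theta_1}-\bm{\theta_2}\|\,\bm{I}_{n_w}\preceq \bm{H}(\bm{\theta_1})-\bm{H}(\bm{\theta_2})\preceq (\gamma_g+\lambda\gamma_h)\|\bm{\theta_1}-\bm{\theta_2}\|\,\bm{I}_{n_w}.
\end{align*}
Adding $\bm{H}(\bm{\theta_2})$ to each side of the Loewner sandwich (which preserves the order) gives precisely the asserted two-sided bound on $\bm{H}(\bm{\theta_1})$, completing the argument.

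This corollary carries essentially no hard step; it is a direct rewriting of the Hessian Lipschitz bound \eqref{eq:lipell2} in the Loewner order. The only point requiring care — and the one I would state explicitly — is the symmetry of $\bm{H}$, which is what licenses the passage from an operator-norm bound to the spectral sandwich $-c\bm{I}_{n_w}\preceq\bm{A}\preceq c\bm{I}_{n_w}$; this symmetry is guaranteed by the twice-continuous differentiability assumed in \lemref{thm:minimizer} and underlying Assumptions \ref{thm:ass1}--\ref{thm:ass2}. I would also make explicit, for the reader, that the scalar terms in the statement are to be interpreted as multiples of $\bm{I}_{n_w}$, so that the chain of inequalities is genuinely one of positive semidefiniteness.
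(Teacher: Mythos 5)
Your proof is correct and follows essentially the same route as the paper: both arguments bound the eigenvalues of the symmetric difference $\bm{H}(\bm{\theta_1})-\bm{H}(\bm{\theta_2})$ by the Hessian Lipschitz constant from \eqref{eq:lipell2} and then translate this into the two-sided Loewner sandwich before adding back $\bm{H}(\bm{\theta_2})$. If anything, your version is slightly more precise, since you correctly identify symmetry of the Hessians (rather than the positive definiteness the paper invokes) as the fact that licenses passing from the operator-norm bound to the spectral inequalities.
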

\begin{proof}
	The proof follows immediately by recalling for any $\bm{\theta}\in\rr^{n_w}$, $\bm{H}(\bm{\theta})$ is positive definite, and hence the eigenvalues $s_j$ of the difference $\bm{H}(\bm{\theta_1})-\bm{H}(\bm{\theta_2})$ satisfy
	\begin{align*}
		\left\vert s_j \right\vert \le (\gamma_g + \lambda\gamma_h)\left\|\bm{\theta_1}-\bm{\theta_2}\right\|, \quad j=1,2,\ldots,n_w,
	\end{align*}
	so that we have
	\begin{align*}
		-(\gamma_g + \lambda\gamma_h)\left\|\bm{\theta_1}-\bm{\theta_2}\right\| \le \bm{H}(\bm{\theta_1})-\bm{H}(\bm{\theta_2})\le (\gamma_g + \lambda\gamma_h)\left\|\bm{\theta_1}-\bm{\theta_2}\right\|.
	\end{align*}
\end{proof}
\begin{lemma}[{\cite[Theorem 2.1.5]{nesterov2018lectures}}]\label{thm:Llip}
	Let the first derivative of a function $\Phi(\cdot)$ be $L$-Lipschitz on $\mathrm{dom}(\Phi)$. Then for any $\bm{\theta_1},\bm{\theta_2}\in \mathrm{dom}(\Phi)$, we have
	\begin{align*}
		0\le \Phi(\bm{\theta_1})-\Phi(\bm{\theta_2})-\left<\partial\Phi(\bm{\theta_2}),\bm{\theta_1}-\bm{\theta_2}\right>\le\frac{L}{2}\|\bm{\theta_2}-\bm{\theta_1}\|^2.
	\end{align*}
\end{lemma}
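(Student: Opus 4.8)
The plan is to prove the two inequalities by separate arguments, since they rest on different hypotheses: the upper bound is the classical ``descent lemma'' and uses only the $L$-Lipschitz continuity of $\partial\Phi$, whereas the lower bound of $0$ is the first-order characterization of convexity and therefore implicitly requires $\Phi$ to be convex (as it is within the $C^{1,1}_L$ function class from which this statement is drawn in \cite{nesterov2018lectures}).

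For the upper bound I would reuse, almost verbatim, the integral technique already employed in the proof of \lemref{thm:hessppty}. Fixing $\bm{\theta_1},\bm{\theta_2}\in\mathrm{dom}(\Phi)$ and using the fundamental theorem of calculus along the segment joining them, write $\Phi(\bm{\theta_1})-\Phi(\bm{\theta_2}) = \int_0^1\left<\partial\Phi(\bm{\theta_2}+\tau(\bm{\theta_1}-\bm{\theta_2})),\bm{\theta_1}-\bm{\theta_2}\right>d\tau$. Subtracting the constant integrand $\left<\partial\Phi(\bm{\theta_2}),\bm{\theta_1}-\bm{\theta_2}\right>$ gives
\[
\Phi(\bm{\theta_1})-\Phi(\bm{\theta_2})-\left<\partial\Phi(\bm{\theta_2}),\bm{\theta_1}-\bm{\theta_2}\right> = \int_0^1\left<\partial\Phi(\bm{\theta_2}+\tau(\bm{\theta_1}-\bm{\theta_2}))-\partial\Phi(\bm{\theta_2}),\bm{\theta_1}-\bm{\theta_2}\right>d\tau.
\]
I would then bound the integrand by Cauchy--Schwarz together with the Lipschitz estimate $\|\partial\Phi(\bm{\theta_2}+\tau(\bm{\theta_1}-\bm{\theta_2}))-\partial\Phi(\bm{\theta_2})\|\le L\tau\|\bm{\theta_1}-\bm{\theta_2}\|$, and integrate $\int_0^1 L\tau\,d\tau = L/2$ to obtain the right-hand bound $\frac{L}{2}\|\bm{\theta_1}-\bm{\theta_2}\|^2$. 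This mirrors exactly how \lemref{thm:hessppty} was established, with $\partial\Phi$ playing the role that the Hessian $\bm{H}$ played there.

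For the lower bound I would invoke convexity of $\Phi$ directly: the gradient inequality $\Phi(\bm{\theta_1})\ge\Phi(\bm{\theta_2})+\left<\partial\Phi(\bm{\theta_2}),\bm{\theta_1}-\bm{\theta_2}\right>$ is precisely $0\le\Phi(\bm{\theta_1})-\Phi(\bm{\theta_2})-\left<\partial\Phi(\bm{\theta_2}),\bm{\theta_1}-\bm{\theta_2}\right>$. The step I would flag as the main subtlety is exactly this one: $L$-Lipschitz continuity of $\partial\Phi$ on its own yields only the symmetric estimate whose lower bound is $-\frac{L}{2}\|\bm{\theta_1}-\bm{\theta_2}\|^2$, so the sharpening to $0$ genuinely requires convexity rather than smoothness. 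I would therefore make the convexity hypothesis explicit (consistent with the surrounding convex setting, in which $g$, $h$, and hence every auxiliary function, are convex) before combining the two halves into the stated two-sided inequality.
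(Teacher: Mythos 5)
Your proof is correct, but note that the paper itself offers no proof of this lemma: it is imported verbatim as \cite[Theorem~2.1.5]{nesterov2018lectures}, so there is nothing in \appref{sec:appA} or \appref{sec:appB} to compare against step by step. Your argument is the standard one behind that theorem, and the point you flag is genuinely the right one to flag. As literally stated in the paper --- with only $L$-Lipschitz continuity of $\partial\Phi$ as hypothesis --- the lower bound of $0$ is false: take $\Phi(\bm{\theta})=-\frac{L}{2}\|\bm{\theta}\|^2$, whose gradient is $L$-Lipschitz, and the middle quantity equals $-\frac{L}{2}\|\bm{\theta_1}-\bm{\theta_2}\|^2<0$ for $\bm{\theta_1}\neq\bm{\theta_2}$. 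Nesterov's Theorem~2.1.5 is stated for the class $\mathcal{F}^{1,1}_L$ of \emph{convex} functions with Lipschitz gradient, and the paper's restatement silently drops the convexity hypothesis; your proof restores it and correctly splits the two inequalities according to which hypothesis each one needs (fundamental theorem of calculus plus Cauchy--Schwarz and the Lipschitz bound for the upper estimate, the first-order convexity inequality for the lower one). The omission is harmless where the lemma is actually invoked --- in the proof of \thmref{thm:main} only the upper (descent) bound is used, and it is applied to $\mathcal{L}$, which is strongly convex under \assref{thm:ass1}, so both hypotheses hold there --- but your explicit statement of the convexity requirement is the more careful formulation, and the integral argument you give for the upper bound is exactly the technique the paper itself uses in proving \lemref{thm:hessppty}.
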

\begin{definition}[{\cite[Definition 2.1.3]{nesterov2018lectures}}]\label{def:gdef}
	A continuously differentiable function $\psi(\cdot)$ is $\gamma_{sc}$-strongly convex on $\mathrm{dom}(\psi)$ if for any $\bm{\theta_1},\bm{\theta_2}\in \mathrm{dom}(\psi)$, we have
	\begin{align*}
		\psi(\bm{\theta_1})\ge \psi(\bm{\theta_2})+\left<\partial \psi(\bm{\theta_2}),\bm{\theta_1}-\bm{\theta_2}\right> + \frac{\gamma_{sc}}{2}\|\bm{\theta_1-\bm{\theta_2}}\|^2, \quad \gamma_{sc}>0.
	\end{align*}
\end{definition}
We remark that if the first derivative of the function $\psi(\cdot)$ in the above definition is $L$-Lipschitz continuous, then by construction, for any $\bm{\theta_1},\bm{\theta_2}\in \mathrm{dom}(\psi)$, $\psi$ satisfies the Lipschitz constraint
\begin{align}
	\vert\psi(\bm{\theta_1})-\psi(\bm{\theta_2})\vert\le L\|\bm{\theta_1}-\bm{\theta_2}\|. \label{eq:glip}
\end{align}
\begin{lemma}[{\cite[Theorem 5.1.8]{nesterov2018lectures}}]\label{thm:hlem}
	Let the function $\phi(\cdot)$ be $M_\phi$-self-concordant. Then, for any $\bm{\theta_1},\bm{\theta_2}\in \mathrm{dom}(\phi)$, we have
	\begin{align*}
		\phi(\bm{\theta_1})\ge \phi(\bm{\theta_2})+\left<\partial \phi(\bm{\theta_2}),\bm{\theta_1}-\bm{\theta_2}\right> + \frac{1}{M_\phi^2}\omega(M_\phi\|\bm{\theta_1}-\bm{\theta_2}\|_{\bm{\theta_2}}),
	\end{align*}
	where $\omega(\cdot)$ is an auxiliary univariate function defined by $\omega(t)\coloneqq t-\ln(1+t)$.
\end{lemma}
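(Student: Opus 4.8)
The plan is to reduce the multivariate inequality to a one–dimensional statement along the segment joining $\bm{\theta_2}$ and $\bm{\theta_1}$, following the classical treatment of self-concordant functions. Set $\bm{u}=\bm{\theta_1}-\bm{\theta_2}$ and define the scalar restriction $\varphi(t)\coloneqq\phi(\bm{\theta_2}+t\bm{u})$ for $t\in[0,1]$; since $\mathbb{W}$ is convex, the whole segment lies in $\mathrm{dom}(\phi)$. Differentiating in $t$ gives $\varphi'(t)=\left<\partial\phi(\bm{\theta_2}+t\bm{u}),\bm{u}\right>$, $\varphi''(t)=\left<\bm{u},\partial^2\phi(\bm{\theta_2}+t\bm{u})\bm{u}\right>=\|\bm{u}\|_{\bm{\theta_2}+t\bm{u}}^2$, and $\varphi'''(t)=\left<\bm{u},(\partial^3\phi(\bm{\theta_2}+t\bm{u})[\bm{u}])\bm{u}\right>$. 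Under this identification the self-concordance inequality \eqref{eq:sc} collapses to the scalar differential inequality $|\varphi'''(t)|\le 2M_\phi\,\varphi''(t)^{3/2}$, and the target lower bound is precisely $\varphi(1)\ge\varphi(0)+\varphi'(0)+\frac{1}{M_\phi^2}\omega\big(M_\phi\|\bm{u}\|_{\bm{\theta_2}}\big)$.

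Next I would integrate this differential inequality. Because $h$ is strongly convex (Assumptions \ref{thm:ass1} and \ref{thm:ass2}), $\partial^2\phi\succ\bm{0}$, so $\varphi''(t)>0$ and the function $\psi(t)\coloneqq\varphi''(t)^{-1/2}$ is well defined on $[0,1]$. A direct computation gives $\psi'(t)=-\tfrac12\varphi''(t)^{-3/2}\varphi'''(t)$, and feeding in the scalar self-concordance bound yields $|\psi'(t)|\le M_\phi$. Writing $\omega_0\coloneqq\|\bm{u}\|_{\bm{\theta_2}}=\varphi''(0)^{1/2}$ and integrating the one-sided estimate $\psi'(t)\le M_\phi$ from $0$ to $t$ produces $\psi(t)\le\omega_0^{-1}+M_\phi t$, which rearranges to the pointwise curvature lower bound
\begin{align*}
	\varphi''(t)\ge\frac{\omega_0^2}{(1+M_\phi\omega_0 t)^2}, \qquad t\in[0,1].
\end{align*}

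Finally I would recover the function-value inequality from Taylor's formula with integral remainder, $\varphi(1)-\varphi(0)-\varphi'(0)=\int_0^1(1-t)\varphi''(t)\,dt$, and substitute the curvature bound to obtain
\begin{align*}
	\varphi(1)-\varphi(0)-\varphi'(0)\ge \omega_0^2\int_0^1\frac{1-t}{(1+M_\phi\omega_0 t)^2}\,dt.
\end{align*}
The closing step is the elementary evaluation of this integral: the substitution $v=1+M_\phi\omega_0 t$ reduces the integrand to a rational function of $v$, and integrating gives exactly $\frac{1}{M_\phi^2}\omega(M_\phi\omega_0)$ with $\omega(s)=s-\ln(1+s)$. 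Unwinding $\varphi(0)=\phi(\bm{\theta_2})$, $\varphi'(0)=\left<\partial\phi(\bm{\theta_2}),\bm{\theta_1}-\bm{\theta_2}\right>$, and $\omega_0=\|\bm{\theta_1}-\bm{\theta_2}\|_{\bm{\theta_2}}$ then yields the claim. I expect the only delicate point to be the passage from the third-derivative tensor inequality \eqref{eq:sc} to the clean scalar form $|\varphi'''(t)|\le2M_\phi\varphi''(t)^{3/2}$: one must verify that the directional third derivative along the fixed direction $\bm{u}$ is genuinely $\left<\bm{u},(\partial^3\phi[\bm{u}])\bm{u}\right>$, and that nondegeneracy of $\partial^2\phi$ (guaranteed here by strong convexity, and otherwise recoverable by a limiting argument) keeps $\psi$ differentiable throughout $[0,1]$. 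Everything after that is a routine ODE comparison and an exact integral, so no further obstacle is anticipated.
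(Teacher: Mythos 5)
Your proof is correct and is precisely the standard argument behind this result: the paper itself gives no proof, importing the lemma verbatim from \cite[Theorem~5.1.8]{nesterov2018lectures}, and Nesterov's proof proceeds exactly as you do — restricting to the segment so that self-concordance becomes $\vert\varphi'''(t)\vert\le 2M_\phi\varphi''(t)^{3/2}$, deducing that $\psi(t)=\varphi''(t)^{-1/2}$ is $M_\phi$-Lipschitz, integrating to get $\varphi''(t)\ge \omega_0^2/(1+M_\phi\omega_0 t)^2$, and evaluating $\omega_0^2\int_0^1(1-t)(1+M_\phi\omega_0 t)^{-2}\,dt=\tfrac{1}{M_\phi^2}\omega(M_\phi\omega_0)$ via Taylor's formula with integral remainder. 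Your flagged delicate points are also handled appropriately: in this paper's application $\phi$ is the strongly convex regularizer, so $\varphi''>0$ holds outright, and in the degenerate case $\omega_0=0$ the claim reduces to plain convexity since $\omega(0)=0$.
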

\begin{lemma}[{\cite[Corollary 5.1.5]{nesterov2018lectures}}]\label{thm:operator}
	Let the function $\phi(\cdot)$ be $M_\phi$-self-concordant. Let $\bm{\theta_1},\bm{\theta_2}\in \mathrm{dom}(\phi)$ and $r=\left\|\bm{\theta_1}-\bm{\theta_2}\right\|_{\bm{\theta_1}}<\frac{1}{M_\phi}$. Then
	\begin{align*}
		\left(1-M_\phi r+\frac{1}{3}M_\phi^2r^2\right)\partial^2\phi(\bm{\theta_1})\preceq\int_0^1\partial^2\phi(\bm{\theta_2}+\tau(\bm{\theta_1}-\bm{\theta_2}))d\tau\preceq\frac{1}{1-M_\phi r}\partial^2\phi(\bm{\theta_1}).
	\end{align*}
\end{lemma}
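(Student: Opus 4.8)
The plan is to reduce the integral Loewner inequality to a \emph{pointwise} two-sided Hessian estimate along the segment joining $\bm{\theta_2}$ and $\bm{\theta_1}$, and then integrate. The single structural fact needed is the standard self-concordant Hessian bound: for $M_\phi$-self-concordant $\phi$ and any $\bm{w}$ with $t\coloneqq\|\bm{w}-\bm{\theta_1}\|_{\bm{\theta_1}}<1/M_\phi$, one has
\begin{align*}
	(1-M_\phi t)^2\,\partial^2\phi(\bm{\theta_1})\preceq\partial^2\phi(\bm{w})\preceq(1-M_\phi t)^{-2}\,\partial^2\phi(\bm{\theta_1}).
\end{align*}
I would invoke this as the companion result to the present statement in \cite{nesterov2018lectures}, or establish it directly from \assref{thm:ass4}; see the final paragraph.

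With this in hand, write $\bm{w}_\tau\coloneqq\bm{\theta_2}+\tau(\bm{\theta_1}-\bm{\theta_2})$ for the integration point. The key geometric observation is that $\bm{w}_\tau-\bm{\theta_1}=-(1-\tau)(\bm{\theta_1}-\bm{\theta_2})$, so the local-norm distance to the base point $\bm{\theta_1}$ scales linearly, $\|\bm{w}_\tau-\bm{\theta_1}\|_{\bm{\theta_1}}=(1-\tau)r$, and stays below $1/M_\phi$ for all $\tau\in[0,1]$ since $r<1/M_\phi$. Applying the pointwise estimate at $\bm{w}_\tau$ with base point $\bm{\theta_1}$ therefore yields, for every $\tau\in[0,1]$,
\begin{align*}
	(1-M_\phi(1-\tau)r)^2\,\partial^2\phi(\bm{\theta_1})\preceq\partial^2\phi(\bm{w}_\tau)\preceq(1-M_\phi(1-\tau)r)^{-2}\,\partial^2\phi(\bm{\theta_1}).
\end{align*}
Since integration against $d\tau$ preserves the semidefinite order, I would integrate this matrix inequality over $[0,1]$; the reference Hessian $\partial^2\phi(\bm{\theta_1})$ factors out of both sides, leaving two purely scalar integrals.

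These integrals are routine: the substitution $s=1-\tau$ turns them into $\int_0^1(1-M_\phi rs)^{\pm2}\,ds$. For the lower bound, $\int_0^1(1-M_\phi rs)^2\,ds=\frac{1}{3M_\phi r}\big(1-(1-M_\phi r)^3\big)=1-M_\phi r+\frac{1}{3}M_\phi^2r^2$, and for the upper bound, $\int_0^1(1-M_\phi rs)^{-2}\,ds=\frac{1}{1-M_\phi r}$. These are exactly the scalar coefficients appearing in the statement, which finishes the argument.

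The main obstacle is the pointwise Hessian estimate itself, the only genuinely nontrivial input. I would derive it from the self-concordance inequality \eqref{eq:sc} by a one-dimensional argument: fix a direction $\bm{u}$, put $\psi(\tau)\coloneqq\langle\bm{u},\partial^2\phi(\bm{w}_\tau)\bm{u}\rangle$ and $\bm{p}=\bm{\theta_1}-\bm{\theta_2}$, and note $\psi'(\tau)=\langle\bm{u},(\partial^3\phi(\bm{w}_\tau)[\bm{p}])\bm{u}\rangle$. The polarized form of \eqref{eq:sc}, namely $|\langle\bm{u},(\partial^3\phi(\bm{w})[\bm{p}])\bm{u}\rangle|\le 2M_\phi\|\bm{p}\|_{\bm{w}}\langle\bm{u},\partial^2\phi(\bm{w})\bm{u}\rangle$, gives the differential inequality $|\psi'(\tau)|\le 2M_\phi\|\bm{p}\|_{\bm{w}_\tau}\psi(\tau)$; integrating it, together with the companion control of $\|\bm{p}\|_{\bm{w}_\tau}$, produces the $(1-M_\phi t)^{\pm2}$ factors. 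This is where the self-concordance constant enters and is the crux of the whole lemma.
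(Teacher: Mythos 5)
Your proposal is correct, and it coincides with the canonical argument: the paper itself supplies no proof of this lemma, importing it verbatim as \cite[Corollary~5.1.5]{nesterov2018lectures}, and the derivation in that source is exactly your reduction --- the pointwise estimate $(1-M_\phi t)^{2}\,\partial^2\phi(\bm{\theta_1})\preceq\partial^2\phi(\bm{w})\preceq(1-M_\phi t)^{-2}\,\partial^2\phi(\bm{\theta_1})$ applied along the segment, using that $\|\bm{w}_\tau-\bm{\theta_1}\|_{\bm{\theta_1}}=(1-\tau)r<1/M_\phi$, followed by integration of the scalar factors against the fixed reference Hessian. Your two integrals check out ($\int_0^1(1-M_\phi rs)^2\,ds=1-M_\phi r+\tfrac{1}{3}M_\phi^2r^2$ and $\int_0^1(1-M_\phi rs)^{-2}\,ds=\tfrac{1}{1-M_\phi r}$), and your closing sketch of the pointwise bound via the polarized third-derivative inequality and the companion control of $\|\bm{p}\|_{\bm{w}_\tau}$ is the standard two-step route, so the argument stands as written.
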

\section{Missing Proofs}\label{sec:appB}
\begin{proof}[\textbf{Proof of \lemref{thm:descent}}]
	By the arguments of \remref{thm:rem1}, we have that the matrix $(\bm{J}(\bm{\theta}_k)^T\bm{Q}\bm{J}(\bm{\theta}_k) + \lambda \bm{H}_h(\bm{\theta}_k))^{-1}$ is positive definite. Hence, with $\bm{g}(\bm{\theta}_k)\ne \bm{0}$, we have $\bm{g}\bm{H}(\bm{\theta}_k)^{-1}	\bm{g}(\bm{\theta}_k) = -\delta\bm{\theta}^T\bm{g}(\bm{\theta}_k)> 0$ and $\delta\bm{\theta}^T\bm{g}(\bm{\theta}_k) < \bm{0}$.
\end{proof}
\begin{proof}[\textbf{Proof of \thmref{thm:basicconv}}]
	The process formulated in \eqref{eq:delta_theta_N+1} performs the update
	\begin{align*}
		\bm{\theta}_{k+1} = \bm{\theta}_k - \bm{H}(\bm{\theta}_k)^{-1}\bm{g}(\bm{\theta}_k).
	\end{align*}
	As $\bm{g}(\bm{\theta}^*)=0$ by mean value theorem and the first part of \eqref{eq:SOSC}, we have
	\begin{align*}
		\bm{\theta}_{k+1}-\bm{\theta}^* &= \bm{\theta}_k - \bm{\theta}^* - \bm{H}(\bm{\theta}_k)^{-1}\int_0^1\partial^2\mathcal{L}(\bm{\theta}^*+\tau(\bm{\theta}_k-\bm{\theta}^*))(\bm{\theta}_k-\bm{\theta}^*)d\tau\\
		&= \left[\bm{I} - \bm{H}(\bm{\theta}_k)^{-1}\int_0^1\partial^2\mathcal{L}(\bm{\theta}^*+\tau(\bm{\theta}_k-\bm{\theta}^*))d\tau\right](\bm{\theta}_k-\bm{\theta}^*)\\
		&= \bm{H}(\bm{\theta}_k)^{-1}\int_0^1\left(\bm{H}(\bm{\theta}_k) - \partial^2\mathcal{L}(\bm{\theta}^*+\tau(\bm{\theta}_k-\bm{\theta}^*))\right)d\tau(\bm{\theta}_k-\bm{\theta}^*)\\
		\left\|\bm{\theta}_{k+1}-\bm{\theta}^*\right\| &= \left\|\bm{H}(\bm{\theta}_k)^{-1}\int_0^1\left(\bm{H}(\bm{\theta}_k) - \partial^2\mathcal{L}(\bm{\theta}^*+\tau(\bm{\theta}_k-\bm{\theta}^*))\right)d\tau(\bm{\theta}_k-\bm{\theta}^*)\right\|\\
		&\le \left\|\bm{H}(\bm{\theta}_k)^{-1}\right\|\int_0^1\left\|\bm{H}(\bm{\theta}_k) - \partial^2\mathcal{L}(\bm{\theta}^*+\tau(\bm{\theta}_k-\bm{\theta}^*))\right\|d\tau\left\|\bm{\theta}_k-\bm{\theta}^*\right\|\\
	\end{align*}
	Also, as $\tau\in[0,1]$ we have that $\bm{\theta}^*+\tau(\bm{\theta}_k-\bm{\theta}^*)\in\mathcal{B}_\epsilon(\bm{\theta}^*)\subseteq\rr^{n_w}$. By taking the limit $\lim\limits_{k\to\infty}\left\|\bm{\theta}_{k+1}-\bm{\theta}^*\right\|$, we have by the first part of \eqref{eq:SOSC} and \assref{thm:ass3}(ii)
	\begin{align*}
		\left\|\bm{\theta}_{k+1}-\bm{\theta}^*\right\| &\le \left\|\bm{H}(\bm{\theta}_k)^{-1}\right\|\int_0^1\left\|\bm{H}(\bm{\theta}_k) - \bm{H}(\bm{\theta}^*+\tau(\bm{\theta}_k-\bm{\theta}^*))\right\|d\tau\left\|\bm{\theta}_k-\bm{\theta}^*\right\|\\
		&\le \left\|\bm{H}(\bm{\theta}_k)^{-1}\right\|\int_0^1(1-\tau)(\gamma_g + \lambda\gamma_h)\left\|\bm{\theta}_k-\bm{\theta}^*\right\|d\tau\left\|\bm{\theta}_k-\bm{\theta}^*\right\|\\
		&= \frac{\gamma_g + \lambda\gamma_h}{2}\left\|\bm{H}(\bm{\theta}_k)^{-1}\right\|\left\|\bm{\theta}_k-\bm{\theta}^*\right\|^2.
	\end{align*}
	By combining the claims in \colref{thm:hessppty2} and the bounds of $\bm{H}(\bm{\theta}_k)$ in \eqref{eq:hessapproxreg}, we obtain the relation
	\begin{align*}
		\gamma_l+a\gamma_a - (\gamma_g+\lambda\gamma_h)\left\|\bm{\theta}_k-\bm{\theta}^*\right\| \preceq \bm{H}(\bm{\theta}^*)-(\gamma_g+\lambda\gamma_h)\left\|\bm{\theta}_k-\bm{\theta}^*\right\|\preceq \bm{H}(\bm{\theta}_k).
	\end{align*}
	Recall that $\bm{H}(\bm{\theta}_k)$ is positive definite, and hence invertible. We deduce that, indeed for all $\bm{\theta}_k$ satisfying $\left\|\bm{\theta}_k-\bm{\theta}^*\right\|\le\epsilon$, $\epsilon$ small enough, we have
	\begin{align*}
		\left\|\bm{H}(\bm{\theta}_k)^{-1}\right\| \le \left(\gamma_l-\gamma_g - \lambda(\gamma_h-\gamma_a)\left\|\bm{\theta}_k-\bm{\theta}^*\right\|\right)^{-1}.
	\end{align*}
	Therefore,
	\begin{align*}
		\left\|\bm{\theta}_{k+1}-\bm{\theta}^*\right\| \le \xi_k\left\|\bm{\theta}_k-\bm{\theta}^*\right\|^2,
	\end{align*}
	where
	\begin{align*}
		\xi_k = \frac{1}{2}\frac{\gamma_g + \lambda\gamma_h}{\left(\gamma_l-\gamma_g - \lambda(\gamma_h-\gamma_a)\left\|\bm{\theta}_k-\bm{\theta}^*\right\|\right)}.
	\end{align*}
\end{proof}
\begin{proof}[\textbf{Proof of \thmref{thm:main}}]
	First, we upper bound the norm $\|\delta\bm{\theta}\|_{\bm{\theta}_k}\coloneqq\|\bm{\theta}_{k+1}-\bm{\theta}_k\|_{\bm{\theta}_k}=\|\bm{H}_h^{1/2}(\bm{\theta}_{k+_1}-\bm{\theta}_k)\|$. From \remref{thm:remJ}, we have
	\begin{align*}
		\|\bm{J}(\bm{\theta}_k)^T(\lambda\bm{I}+\bm{Q}\bm{J}\bm{H}_h^{-1}\bm{J}^T(\bm{\theta}_k))^{-1}\| &\le \frac{\gamma_a\|\bm{J}(\bm{\theta}_k)^T\|}{K+\lambda\gamma_a}\le \frac{\tilde{\beta}\gamma_a}{K+\lambda\gamma_a}.
	\end{align*}
	Hence,
	\begin{align}
		&\|\bm{\theta}_{k+1}-\bm{\theta}_k\|_{\bm{\theta}_k} \le \frac{\alpha}{1+M_h\eta_k}\left\|\bm{H}_h^{1/2}\bm{H}_h^{-1}\bm{J}^T(\bm{\theta}_k)(\lambda\bm{I}+\bm{Q}\bm{J}\bm{H}_h^{-1}\bm{J}^T(\bm{\theta}_k))^{-1}\bm{e}(\bm{\theta}_k)\right\|\notag\\
		&\le \frac{\alpha}{1+M_h\eta_k} \left\|\bm{H}_h(\bm{\theta}_k)^{1/2}\right\|\left\|\bm{H}_h(\bm{\theta}_k)^{-1}\right\|\left\|\bm{J}(\bm{\theta}_k)^T(\lambda\bm{I}+\bm{Q}\bm{J}\bm{H}_h^{-1}\bm{J}^T(\bm{\theta}_k))^{-1}\right\|\left\|\bm{e}(\bm{\theta}_k)\right\|\notag\\
		&\le \frac{\alpha\beta\tilde{\beta}\gamma_a^{-1/2}}{(K+\lambda\gamma_a)(1+M_h\eta_k)}. \label{eq:local-norm-d}
	\end{align}
	Similarly, we have
	\begin{align}
		\|\bm{\theta}_{k+1}-\bm{\theta}_k\|\le \frac{\alpha\beta\tilde{\beta}}{(K+\lambda\gamma_a)(1+M_h\eta_k)}. \label{eq:2-norm-d}
	\end{align}
	By \lemref{thm:Llip}, the function $\mathcal{L}(\bm{\theta})$ satisfies
	\begin{align*}
		&\mathcal{L}(\bm{\theta}_{k+1})\le\mathcal{L}(\bm{\theta}_k)+\left<\partial\mathcal{L}(\bm{\theta}_k),\bm{\theta}_{k+1}-\bm{\theta}_k\right>+\frac{\gamma_u+\lambda\gamma_b}{2}\|\bm{\theta}_{k+1}-\bm{\theta}_k\|^2\\
		&= \mathcal{L}(\bm{\theta}_k)+\left<\bm{g}_g(\bm{\theta}_k),\bm{\theta}_{k+1}-\bm{\theta}_k\right>+\lambda\left<\bm{g}_h(\bm{\theta}_k),\bm{\theta}_{k+1}-\bm{\theta}_k\right>\frac{\gamma_u+\lambda\gamma_b}{2}\|\bm{\theta}_{k+1}-\bm{\theta}_k\|^2.
	\end{align*}
	By convexity of $g$ and $h$, and self-concordance of $h$, we have (using \defnref{def:gdef}, \lemref{thm:hlem} and \eqref{eq:glip})
	\begin{align*}
		\mathcal{L}(\bm{\theta}_{k+1})&\le\mathcal{L}(\bm{\theta}_k)+(\gamma_u+\lambda\gamma_b)\|\bm{\theta}_{k+1}-\bm{\theta}_k\|+\frac{\gamma_u+\lambda\gamma_b-\gamma_l}{2}\|\bm{\theta}_{k+1}-\bm{\theta}_k\|^2\\
		&\qquad -\frac{\lambda}{M_h^2}\omega(M_h\|\bm{\theta}_{k+1}-\bm{\theta}_k\|_{\bm{\theta}_k})\\
		&\stackrel{\eqref{eq:local-norm-d}}{\le}\mathcal{L}(\bm{\theta}_k)+(\gamma_u+\lambda\gamma_b)\|\bm{\theta}_{k+1}-\bm{\theta}_k\|+\frac{\gamma_u+\lambda\gamma_b-\gamma_l}{2}\|\bm{\theta}_{k+1}-\bm{\theta}_k\|^2\\
		&\qquad-\frac{\lambda}{M_h^2}\omega\left(\frac{\alpha\beta\tilde{\beta}\gamma_a^{-1/2}M_h}{(K+\lambda\gamma_a)(1+M_h\eta_k)}\right).
	\end{align*}
	Substituting the choice $\alpha= (\beta\tilde{\beta})^{-1}(\gamma_a)^{1/2}(K+\lambda\gamma_a)$, we have
	\begin{align*}
		\mathcal{L}(\bm{\theta}_{k+1})\le\,\,&\mathcal{L}(\bm{\theta}_k)+(\gamma_u+\lambda\gamma_b)\|\bm{\theta}_{k+1}-\bm{\theta}_k\|+\frac{\gamma_u+\lambda\gamma_b-\gamma_l}{2}\|\bm{\theta}_{k+1}-\bm{\theta}_k\|^2\\&-\frac{\lambda}{M_h^2}\omega\left(\frac{M_h}{1+M_h\eta_k}\right).
	\end{align*}
	Taking expectation on both sides with respect to $m$ conditioned on $\bm{\theta}_k$, we get
	\begin{align*}
		\mathbb{E}[\mathcal{L}(\bm{\theta}_{k+1})]\le\,\,&\mathcal{L}(\bm{\theta}_k)+(\gamma_u+\lambda\gamma_b)\|\bm{\theta}_{k+1}-\bm{\theta}_k\|+\frac{\gamma_u+\lambda\gamma_b-\gamma_l}{2}\|\bm{\theta}_{k+1}-\bm{\theta}_k\|^2\\&-\mathbb{E}\left[\frac{\lambda}{M_h^2}\omega\left(\frac{M_h}{1+M_h\eta_k}\right)\right],
	\end{align*}
	Note the second derivative $\omega''$ of $\omega$: $\omega''(t)=1/(1+t)^2$. By the convexity of $\omega$ and using Jensen's inequality, also recalling unbiasedness of the derivatives,
	\begin{align*}
		\mathbb{E}[\mathcal{L}(\bm{\theta}_{k+1})]&\le\mathcal{L}(\bm{\theta}_k)+\frac{\gamma_u+\lambda\gamma_b}{\sqrt{\gamma_a}(1+M_h\eta_k)}+\frac{\gamma_u+\lambda\gamma_b}{2\gamma_a(1+M_h\eta_k)^2}-\frac{\gamma_l}{2\gamma_a}\omega''(M_h\eta_k)\\
		&\qquad-\frac{\lambda}{M_h^2}\omega\left(\frac{M_h}{1+M_h\eta_k}\right)\\
		&\le\mathcal{L}(\bm{\theta}_k)-\left[\frac{\lambda}{M_h^2}\omega\left(\frac{M_h}{1+M_h\eta_k}\right)+\frac{\gamma_l}{2\gamma_a}\omega''(M_h\eta_k)-\frac{2(\gamma_u+\lambda\gamma_b)}{\sqrt{\gamma_a}}\right].
	\end{align*}
	In the above, we used the bounds of the norm in \eqref{eq:2-norm-d}, and again used the choice $\alpha= (\beta\tilde{\beta})^{-1}(\gamma_a)^{1/2}(K+\lambda\gamma_a)$.

	To proceed, let us make a simple remark that is not explicitly stated in \remref{thm:rem2}: For any $\bm{\theta}_k,\bm{\theta}_{k+1}\in\mathcal{N}_\epsilon(\bm{\theta}^*)$, we have
	\begin{subequations}
		\begin{eqnarray*}
			\left\|\bm{H}_g(\bm{\theta}_{k+1}) - \bm{H}_g(\bm{\theta}_k)\right\| \le \gamma_g\left\|\bm{\theta_1} - \bm{\theta_2}\right\| \to 0 \,\,\text{as}\,\, k\to\infty,\\
			\left\|\bm{H}_h(\bm{\theta}_{k+1}) - \bm{H}_h(\bm{\theta}_k)\right\| \le \gamma_h\left\|\bm{\theta_1} - \bm{\theta_2}\right\| \to 0 \,\,\text{as}\,\, k\to\infty.
		\end{eqnarray*}
	\end{subequations}
	Now, recall the proposed update step \eqref{eq:proposed}:
	\begin{align*}
		\bm{\theta}_{k+1}=\bm{\theta}_k-\frac{\alpha}{1+M_h\eta_k}\bm{H}_h^{-1}\bm{J}^T\left(\lambda\bm{I}+\bm{Q}\bm{J}
		\bm{H}_h^{-1}
		\bm{J}^T\right)^{-1}\bm{e}.
	\end{align*}
	Then the above remark allows us to perform the following operation: Subtract $\bm{\theta}^*$ from both sides and pre-multiply by $\bm{H}_h^{1/2}(\hm{\theta}_{k+1})\approx\bm{H}_h^{1/2}(\hm{\theta}_k)=\bm{H}_h^{1/2}$, we get the recursion
	\begin{align*}
		&\bm{H}_h^{1/2}(\bm{\theta}_{k+1}-\bm{\theta}^*)=\\
		&\bm{H}_h^{1/2}(\bm{\theta}_k-\bm{\theta}^*)-\frac{\alpha}{1+M_h\eta_k}\bm{H}_h^{1/2}\bm{H}_h^{-1}\bm{J}^T(\bm{\theta}_k)\left(\lambda\bm{I}+\bm{Q}\bm{J}
		\bm{H}_h^{-1}\bm{J}^T(\bm{\theta}_k)\right)^{-1}\bm{e}(\bm{\theta}_k),\\
		&\left\|\bm{H}_h^{1/2}(\bm{\theta}_{k+1}-\bm{\theta}^*)\right\|=\\
		&\left\|\bm{H}_h^{1/2}(\bm{\theta}_k-\bm{\theta}^*)-\frac{\alpha}{1+M_h\eta_k}\bm{H}_h^{1/2}\bm{H}_h^{-1}\bm{J}^T(\bm{\theta}_k)\left(\lambda\bm{I}+\bm{Q}\bm{J}
		\bm{H}_h^{-1}\bm{J}^T(\bm{\theta}_k)\right)^{-1}\bm{e}(\bm{\theta}_k)\right\|\\
		&\le \left\|\bm{H}_h^{1/2}(\bm{\theta}_k-\bm{\theta}^*)\right\|+\frac{\alpha}{1+M_h\eta_k}\left\|\bm{H}_h^{1/2}\right\|\left\|\bm{H}_h^{-1}\bm{J}^T\left(\lambda\bm{I}+\bm{Q}\bm{J}
		\bm{H}_h^{-1}\bm{J}^T\right)^{-1}\bm{e}(\bm{\theta}_k)\right\|.
	\end{align*}
	Take expectation with respect to $m$ on both sides conditioned on $\bm{\theta}_k$ and again consider unbiasedness of the derivatives. Further, recall the definition of the \emph{local norm} $\|\cdot\|_{\bm{\theta}}$, and the bounds of $\bm{H}_h$, then
	\begin{align*}
		&\mathbb{E}\left\|\bm{\theta}_{k+1}-\bm{\theta}^*\right\|_{\bm{\theta}_{k+1}}\le\\
		&\qquad \left\|\bm{\theta}_k-\bm{\theta}^*\right\|_{\bm{\theta}_k}+\frac{\alpha}{\sqrt{\gamma_a}(1+M_h\eta_k)}\left\|\left(\lambda\bm{I}+\bm{Q}\bm{J}\bm{H}_h^{-1}\bm{J}^T(\bm{\theta}_k)\right)^{-1}\right\|\left\|\bm{H}_h^{-1}\bm{J}^T\bm{e}(\bm{\theta}_k)\right\|\\
		&\qquad \le \left\|\bm{\theta}_k-\bm{\theta}^*\right\|_{\bm{\theta}_k}+\frac{\alpha\gamma_a}{\sqrt{\gamma_a}(1+M_h\eta_k)(K+\lambda\gamma_a)}\left\|\bm{H}_h^{-1}\bm{g}(\bm{\theta}_k)\right\|\\
		&\qquad = \left\|\bm{\theta}_k-\bm{\theta}^*\right\|_{\bm{\theta}_k}+\frac{\gamma_a}{\beta\tilde{\beta}(1+M_h\eta_k)}\left\|\bm{H}_h^{-1}\bm{g}_g(\bm{\theta}_k)+\lambda\bm{H}_h^{-1}\bm{g}_h(\bm{\theta}_k)\right\|\\
		&\qquad \le \left\|\bm{\theta}_k-\bm{\theta}^*\right\|_{\bm{\theta}_k}+\frac{\gamma_a}{\beta\tilde{\beta}}\left\|\bm{H}_h^{-1}\bm{g}_g(\bm{\theta}_k)+\lambda\bm{H}_h^{-1}\bm{g}_h(\bm{\theta}_k)\right\|\\
		&\qquad \le \left\|\bm{\theta}_k-\bm{\theta}^*\right\|_{\bm{\theta}_k}+\frac{\gamma_a}{\beta\tilde{\beta}}\left\|\bm{H}_h^{-1}\bm{g}_g(\bm{\theta}_k)\right\|+\frac{\lambda\gamma_a}{\beta\tilde{\beta}}\left\|\bm{H}_h^{-1}\bm{g}_h(\bm{\theta}_k)\right\|.
	\end{align*}
	By the mean value theorem and the first part of \eqref{eq:SOSC},
	\begin{align*}
		\bm{g}_g(\bm{\theta}_k) &= \int_0^1\bm{H}_g(\bm{\theta}^*+\tau(\bm{\theta}_k-\bm{\theta}^*))(\bm{\theta}_k-\bm{\theta}^*)d\tau\\
		&= \bm{H}_g(\bm{\theta}^*)(\bm{\theta}_k-\bm{\theta}^*) + \int_0^1\left(\bm{H}_g(\bm{\theta}^*+\tau(\bm{\theta}_k-\bm{\theta}^*))-\bm{H}_g(\bm{\theta}^*)\right)(\bm{\theta}_k-\bm{\theta}^*)d\tau,
	\end{align*}
	where $\bm{H}_g$ is the second derivative of $g$.
	\begin{align*}
		&\left\|\bm{g}_g(\bm{\theta}_k)\right\|= \left\|\bm{H}_g(\bm{\theta}^*)(\bm{\theta}_k-\bm{\theta}^*) + \int_0^1\left(\bm{H}_g(\bm{\theta}^*+\tau(\bm{\theta}_k-\bm{\theta}^*))-\bm{H}_g(\bm{\theta}^*)\right)(\bm{\theta}_k-\bm{\theta}^*)d\tau\right\|\\
		&\qquad \le \left\|\bm{H}_g(\bm{\theta}^*)\right\|\left\|\bm{\theta}_k-\bm{\theta}^*\right\| + \int_0^1\left\|\bm{H}_g(\bm{\theta}^*+\tau(\bm{\theta}_k-\bm{\theta}^*))-\bm{H}_g(\bm{\theta}^*)\right\|\left\|\bm{\theta}_k-\bm{\theta}^*\right\|d\tau\\
		&\qquad = \left\|\bm{H}_g(\bm{\theta}^*)\right\|\left\|\bm{\theta}_k-\bm{\theta}^*\right\|+\int_0^1\tau\gamma_g\left\|\bm{\theta}_k-\bm{\theta}^*\right\|^2d\tau\\
		&\qquad = \left\|\bm{H}_g(\bm{\theta}^*)\right\|\left\|\bm{\theta}_k-\bm{\theta}^*\right\|+\frac{\gamma_g}{2}\left\|\bm{\theta}_k-\bm{\theta}^*\right\|^2d\tau\\
		&\qquad \le \gamma_u\left\|\bm{\theta}_k-\bm{\theta}^*\right\|+\frac{\gamma_g}{2}\left\|\bm{\theta}_k-\bm{\theta}^*\right\|^2.
	\end{align*}
	In the above steps, we have used \assref{thm:ass3} and the remarks that follow it. Further,
	\begin{align*}
		\left\|\bm{H}_h^{-1}\bm{g}_g(\bm{\theta}_k)\right\|&=\left\|\bm{H}_h^{-1}\bm{g}_g(\bm{\theta}_k)\right\|\\
		&\le \left\|\bm{H}_h^{-1}\right\|\left\|\bm{g}_g(\bm{\theta}_k)\right\|\\
		&\le \frac{1}{\gamma_a}\left(\gamma_u\left\|\bm{\theta}_k-\bm{\theta}^*\right\|+\frac{\gamma_g}{2}\left\|\bm{\theta}_k-\bm{\theta}^*\right\|^2\right)\\
		&= \frac{\gamma_u}{\gamma_a}\left\|\bm{\theta}_k-\bm{\theta}^*\right\|+\frac{\gamma_g}{2\gamma_a}\left\|\bm{\theta}_k-\bm{\theta}^*\right\|^2.
	\end{align*}
	Next, we analyze $\left\|\bm{H}_h^{-1}\bm{g}_h(\bm{\theta}_k)\right\|$. We have
	\begin{align*}
		\left\|\bm{H}_h^{-1}\bm{g}_h(\bm{\theta}_k)\right\| &= \left\|\bm{H}_h^{-3/2}\bm{H}_h^{1/2}\bm{g}_h(\bm{\theta}_k)\right\|\\
		&\le\frac{1}{\gamma_a^{3/2}}\left\|\bm{H}_h^{1/2}\bm{g}_h(\bm{\theta}_k)\right\|\\
		&\stackrel{\eqref{eq:SOSC}}{=} \frac{1}{\gamma_a^{3/2}}\left\|\bm{H}_h^{1/2}(\bm{\theta}_k)\left(\bm{g}_h(\bm{\theta}_k)-\bm{g}_h(\bm{\theta}^*)\right)\right\|,
	\end{align*}
	and by the mean value theorem,
	\begin{align*}
		&\left\|\bm{H}_h^{-1}\bm{g}_h(\bm{\theta}_k)\right\| = \frac{1}{\gamma_a^{3/2}}\left\|\int_0^1\bm{H}_h^{1/2}(\bm{\theta}_k)\bm{H}_h(\bm{\theta}^*+\tau(\bm{\theta}_k-\bm{\theta}^*))(\bm{\theta}_k-\bm{\theta}^*)d\tau\right\|\\
		&\qquad \le \frac{1}{\gamma_a^{3/2}}\left\|\bm{H}_h^{1/2}(\bm{\theta}_k)(\bm{\theta}_k-\bm{\theta}^*)\right\|\left\|\int_0^1\bm{H}_h(\bm{\theta}^*+\tau(\bm{\theta}_k-\bm{\theta}^*))d\tau\right\|\\
		&\qquad = \frac{1}{\gamma_a^{3/2}}\left\|\bm{\theta}_k-\bm{\theta}^*\right\|_{\bm{\theta}_k}\left\|\int_0^1\bm{H}_h(\bm{\theta}^*+\tau(\bm{\theta}_k-\bm{\theta}^*))d\tau\right\|\\
		&\qquad \le \frac{1}{\gamma_a^{3/2}}\left\|\bm{\theta}_k-\bm{\theta}^*\right\|_{\bm{\theta}_k}\left\|\int_0^1\bm{H}_h^{-1/2}(\bm{\theta}_k)\bm{H}_h(\bm{\theta}^*+\tau(\bm{\theta}_k-\bm{\theta}^*))\bm{H}_h^{-1/2}(\bm{\theta}_k)d\tau\right\|\\
		&\,\,\, \stackrel{\lemref{thm:operator}}{\le} \frac{\left\|\bm{\theta}_k-\bm{\theta}^*\right\|_{\bm{\theta}_k}}{\gamma_a^{3/2}\left(1-M_h\left\|\bm{\theta}_k-\bm{\theta}^*\right\|_{\bm{\theta}_k}\right)}.
	\end{align*}
	In the above, we have used the fact $\bm{\theta}_0\in\mathcal{N}_{M_h^{-1}}(\bm{\theta}^*)\implies\bm{\theta}_k\in\mathcal{N}_{M_h^{-1}}(\bm{\theta}^*)$ for all $\bm{\theta}_k$ generated by the process \eqref{eq:proposed}.

	Combining the above results, we have
	\begin{align*}
		\mathbb{E}\left\|\bm{\theta}_{k+1}-\bm{\theta}^*\right\|_{\bm{\theta}_{k+1}}\le&\,\, \left[1+\frac{\lambda\gamma_a^{-1/2}}{\beta\tilde{\beta}\left(1-M_h\left\|\bm{\theta}_k-\bm{\theta}^*\right\|_{\bm{\theta}_k}\right)}\right]\left\|\bm{\theta}_k-\bm{\theta}^*\right\|_{\bm{\theta}_k}\\
		&+\frac{\gamma_u}{\beta\tilde{\beta}}\left\|\bm{\theta}_k-\bm{\theta}^*\right\|+\frac{\gamma_g}{2}\left\|\bm{\theta}_k-\bm{\theta}^*\right\|^2.
	\end{align*}
\end{proof}

\end{document}